%% file: arxiv.tex
\documentclass[11pt,letterpaper]{article}
\usepackage{booktabs}
\usepackage[letterpaper,margin=0.98in]{geometry}

\usepackage{microtype}
\usepackage{graphicx}
\usepackage{subcaption} 
\usepackage{booktabs}
\usepackage{amsmath,amsthm}
\usepackage{amssymb}
\usepackage{bm}
\usepackage{bbm}
\usepackage{xspace}
\usepackage{float}
\usepackage{dsfont}
\usepackage{multirow}
\usepackage{comment}
\usepackage{algorithm}
\usepackage{algorithmic}
\usepackage{hyperref}
\usepackage{colortbl}
\usepackage[table,xcdraw]{xcolor}
\usepackage[capitalize,noabbrev]{cleveref}
\usepackage{arydshln}
\usepackage{adjustbox}
\usepackage{bigints}
\usepackage{mathtools}
\usepackage{layouts}
\usepackage{enumitem}
\usepackage{url}
\usepackage{authblk}

\usepackage[style=alphabetic, maxbibnames=15, giveninits=true, maxcitenames=15, natbib=true,
  maxalphanames=10, backend=biber, sorting=nty, backref=true, uniquename=false]{biblatex}
\DefineBibliographyStrings{english}{backrefpage = {page},backrefpages = {pages},}
\DeclareNameAlias{sortname}{given-family}
\renewbibmacro{in:}{\ifentrytype{article}{}{\printtext{\bibstring{in}\intitlepunct}}}

\input{macros.tex}

\definecolor{ourcolor}{RGB}{235,242,255} 
\definecolor{untempcolor}{RGB}{255,240,230} 

\makeatletter
\newtheorem{theorem}{Theorem}[section]
\newtheorem*{theorem*}{Theorem}

\newtheorem{lemma}[theorem]{Lemma}
\newtheorem{proposition}[theorem]{Proposition}
\newtheorem*{proposition*}{Proposition}
\theoremstyle{remark}
\newtheorem{remark}[theorem]{Remark}
\usepackage{hyperref}
\hypersetup{colorlinks=true, urlcolor=blue, linkcolor=blue, citecolor=blue}
\makeatother

\title{Temper-Then-Tilt: Principled Unlearning for Generative Models through Tempering and Classifier Guidance \vspace{3mm}

\footnotetext{The authors are listed in alphabetical order. Corresponding author: \texttt{jblock@utexas.edu}

$\! \! \! \quad$ The work of JLB was partially done while he was a Student Researcher at Google Research.}}

\renewcommand*{\Affilfont}{\normalsize}

\renewcommand*{\Affilfont}{\normalsize} 

\makeatletter
\renewcommand\AB@affilsepx{\quad \protect\Affilfont}
\renewcommand\AB@affilsep{\protect\Affilfont}
\makeatother

\author[$\ast$]{Jacob L. Block}
\author[$\dagger$]{Mehryar Mohri}
\author[$\ast \, \dagger$]{Aryan Mokhtari}
\author[$\ast$]{Sanjay Shakkottai}

\affil[$\ast$]{UT Austin}
\affil[$\dagger$]{Google Research}

\date{}

\begin{document}

\maketitle

\begin{abstract}
We study machine unlearning in large generative models by framing the task as density ratio estimation to a target distribution rather than supervised fine-tuning. While classifier guidance is a standard approach for approximating this ratio and can succeed in general, we show it can fail to faithfully unlearn with finite samples when the forget set represents a sharp, concentrated data distribution. To address this, we introduce \textbf{Temper-Then-Tilt Unlearning (\alg)}, which freezes the base model and applies a two-step inference procedure: (i) \textit{tempering} the base distribution to flatten high-confidence spikes, and (ii) \textit{tilting} the tempered distribution using a lightweight classifier trained to distinguish retain from forget samples. Our theoretical analysis provides finite-sample guarantees linking the surrogate classifier's risk to unlearning error, proving that tempering is necessary to successfully unlearn for concentrated distributions. Empirical evaluations on the TOFU benchmark show that \alg improves forget quality and generative utility over existing baselines, while training only a fraction of the parameters with a minimal runtime.
\end{abstract}

\newpage

\section{Introduction}

Modern generative models are often trained on large datasets comprising dynamic information, such as private or copyrighted data with revocable permissions~\cite{cooper:2025:unlearning-doesnt-do}. As a result, efficient mechanisms are needed to remove the influence of specific samples from trained models, both to comply with privacy regulations~\cite{gdpr,ccpa:2018} and to prevent the extraction of unwanted information~\cite{carlini:2021:extracting-training-data}.

\textit{Machine unlearning} addresses this by modifying a model trained on a dataset $\Data$ to approximate the outcome of retraining from scratch on a ``retain set'' $\Data_r \subset \Data$, removing the influence of the ``forget set'' $\Data_f = \Data \setminus \Data_r$. While there are many ways to formalize and address this problem, we focus on generative models and frame unlearning as a distribution correction task rather than a parameter update task.

Specifically, we assume the original model has learned the ground truth distribution $p$ over the full dataset $\Data$. Since $\Data$ is a disjoint union of the retain and forget sets, $\Data = \Data_r \sqcup \Data_f$, we model $p$ as a mixture of two component distributions: the target retain distribution $p_r$ and the forget distribution $p_f$, corresponding to $\Data_r$ and $\Data_f$, respectively. This formulation resembles the Huber contamination model \citep{huber:1964:robust-est}, where an underlying distribution is corrupted by an adversarial contamination component. In the context of unlearning, our aim is to learn to sample from $p_r$ given access to the mixture $p$ and samples from $p_r$ and $p_f$ respectively.

A principled approach to recover the target distribution $p_r$ is to \emph{tilt} the original distribution $p$, shifting probability via multiplicative reweighting. This arises in KL-constrained reinforcement learning \cite{schulman:2015:trpo,schulman:2017:ppo,rafailov:2023:dpo}, importance sampling \cite{sugiyama:2008:dir-imp-est, gretton2009:kernel-mean-match}, and posterior sampling for generative models \cite{dhariwal:2021:classifer-guidance, chung:2023:dps,yang:2021:fudge, mudgal:2024:controlled-decoding-llm, rashid:2025:tokenwise-reward-guiding}. Rather than directly estimating the optimal tilt, given by the density ratio $p_r / p$, as in importance weight estimation methods \cite{sugiyama:2008:dir-imp-est, gretton2009:kernel-mean-match, kanamori:2009:lsif}, we adopt an implicit classifier-guided approach, leveraging the fact that this ratio is proportional to the optimal probabilistic classifier distinguishing retain from forget samples \cite{bickel:2007:disc-learning,rizvi:2024:learning-like-ratio}.

\textbf{Contributions.} 
This paper develops a principled framework for unlearning in generative models based on distributional tilting. We characterize the statistical limits of classifier-guided unlearning, propose a method that overcomes these limitations, and validate its effectiveness both theoretically and empirically. We detail these contributions below.

We first study the connection between unlearning and classification in the finite-sample case. We consider unconditional generative models over a continuous domain and analyze how the error of a classifier-guided estimator of the retain distribution $p_r$ depends on the excess risk $\delta$ of the learned classifier on the surrogate classification task. For an estimator $\hat{p}_r$, we distinguish two notions of unlearning error: the \emph{Retain Error} \RetainErr, measured by the $p_r$-weighted discrepancy $\RetainErr(\hat{p}_r) \coloneq \KL{p_r}{\hat{p}_r}$, and the \emph{Forget Error} \ForgetErr, measured by the $p_f$-weighted $\ell_1$-norm $\ForgetErr(\hat{p}_r) \coloneq \norm{p_r - \hat{p}_r}{1,p_f}$.

Under standard classifier-guided tilting, the estimator is $\hat{p}_r \propto p \cdot \hat{f}$, where the learned classifier $\hat{f}$ approximates $p_r/p$. We show that while this approach reliably recovers the modes of $p_r$, with Retain Error scaling as $\mathcal{O}(\delta)$ (Theorem~\ref{th:risk-to-retain-err-untempered}), it can leak forget set information. In particular, the Forget Error scales as $\mathcal{O}(\norm{p_f}{\infty}\sqrt{\delta})$ (Theorem~\ref{th:risk-to-forget-err}), which becomes vacuous when $p_f$ is highly concentrated.
We further show the dependence on the sharpness of $p_f$ is unavoidable. There exists a problem instance and a classifier with excess risk at most $\delta$ such that the estimator incurs Forget Error on the order of $\Omega \paren{\norm{p_f}{\infty} \! \! \cdot \delta}$ (Theorem~\ref{th:forget-err-lb}). Thus, strong classification alone does not prevent information leakage when the forget distribution is sharply peaked.



This limitation parallels known hardness results in importance sampling, reinforcement learning, and inference time alignment, which show that it is statistically and computationally difficult to add a sharp mode in a region assigned negligible probability by a base distribution \cite{cortes:2010:learning-bounds-imp-weight, vehtari:2024:pareto-imp-sampling, xie:2025:exp-pref-opt, foster:2025:good-found-necessary-rl, huang:2025:best-of-n-best, rohatgi:2025:taming-proc-verif}. Crucially, our dual task of \emph{removing} sharp modes is central to unlearning, as it captures settings in which the forget set consists of highly specific or atypical information relative to the retain set that is memorized by the model with high confidence.

To address this issue, we propose \textbf{Temper-Then-Tilt Unlearning (\alg)}. The method first tempers the base distribution, flattening sharp probability spikes, and then applies classifier-guided tilting to the tempered distribution. Concretely, for a learned classifier $\hat{f}$ and temperature $T \geq 1$, the estimator takes the form $\hat{p}_r^{(T)} \propto p^{1/T} \cdot \hat{f}$. Tempering amplifies the influence of the classifier and enables stronger suppression of concentrated forget-set modes.

We prove that the Forget Error of $\smash{\prhatT}$ is bounded by $\mathcal{O}(\norm{p_f}{\infty}^{1/T} \cdot \delta^{1/2k})$, up to a tempering-induced bias term that vanishes when $T=1$, where $k \ge T$ reflects a mild integrability condition (Theorem~\ref{th:risk-to-forget-err-tempered}). This improved scaling with $\norm{p_f}{\infty}$ comes at the cost of slower dependence on the classifier excess risk. Finally, we bound the additional Retain Error induced by tempering, showing that this bias worsens as the entropy $H(p_r)$ decreases (Theorem~\ref{th:risk-to-retain-err-tempered}).
Together, these results establish a principled tradeoff between tempering-induced bias and robustness to sharply concentrated forget distributions.



\textbf{LLM Experiments.} We apply \alg to empirical unlearning tasks for large language models (LLMs). For a given context $\x$, the classifier scores each candidate completion $(\x,y)$ and is used to tilt the tempered base distribution $p^{(T)}(y \!\mid\! \x) \propto p(y \!\mid\! \x)^{1/T}$ prior to sampling (Figure~\ref{fig:arch}). We implement the \alg classifier as a small linear head on top of the fixed hidden states of the base model, efficiently unlearning while avoiding costly updates to the underlying pretrained network. Experiments on the TOFU benchmark \cite{maini:2024:tofu} show that \alg consistently outperforms existing baselines while incurring only a fraction of their computational cost.


\subsection{Related Work}
Theoretical studies of machine unlearning have primarily focused on recovering model parameters that match retraining from scratch on $\Data_r$ or achieve approximate statistical indistinguishability, analyzing convex losses \citep{guo:20:cert-data-remov, sekhari:2021:rem-what-forget, neel:2021:descent-to-delete}, linear classification \citep{lu:2025:sys-aware-unlearn}, and overparameterized models \citep{block:2025:OP-unlearning}. In contrast to these works, which formulate unlearning as a parameter recovery problem, we treat unlearning as a distribution correction problem and operate directly on the learned data distribution. This avoids reliance on the original parameter space, which is prohibitively large for modern generative models such as LLMs.

Other works proposed data partitioning approaches which achieve exact unlearning through model checkpointing \cite{bourtoule:2021:sisa, ghazi:2023:ticketed}. While effective in their intended settings, they rely on specialized training pipelines and are not applicable to arbitrarily trained models.

A separate line of work addresses unlearning in LLMs through supervised fine-tuning objectives that explicitly degrade performance on the forget set while preserving performance on the retain set \citep{maini:2024:tofu, wang:2025:wga, yang:2025:satimp, dong:2025:undial, li:2024:wmdp, zhang:2024:npo, fan:2025:simnpo}. These schemes require fine-tuning the base model, which is computationally costly and can induce unintended degradation on behavior unrelated to the unlearning task. In contrast, our method freezes the base model and instead learns a distributional reweighting via a lightweight parameterization, yielding substantially lower computational cost. Further, unlike fine-tuning-based methods that suppress forget-set generations and whose ascent-style objectives can provably fail to unlearn \citep{mavrothalassitis:2026:ascent-fails}, our method admits theoretical guarantees for minimizing specific Retain and Forget Errors that capture key notions of discrepancy with respect to the ground-truth retain distribution.

Recent work, closer in spirit to our framework, has proposed an alternative unlearning paradigm for LLMs where the base model is kept frozen and unlearning is performed by learning a perturbation of the model’s output distribution \cite{eldan:2023:whos-harry-potter, ji:2024:logit-diff, suriyakumar:2025:ucd}. For a pretrained model $p_{\bthetas}(y \mid \x)$ parameterized by $\bthetas \in \bTheta$, these methods introduce a tilt function $f_{\bphi}(\x,y)$, parameterized separately by $\bphi$, and define the updated next-token distribution as $\hat{p}(y \mid \x) \propto p_{\bthetas}(y \mid \x) \cdot f_{\bphi}(\x,y)$.
This formulation preserves the original model $p_{\bthetas}$ and decouples unlearning from $\bTheta$, enabling updates over a lower-dimensional auxiliary parameter space. However, these approaches parameterize $f_{\bphi}$ using an auxiliary LLM or a LoRA-style adaptation \citep{hu:2022:lora}, requiring forward and backward passes through deep networks and large amounts of data to train a highly expressive tilting function. While these methods aim to up-weight retain set sequences and suppress forget set sequences, they incur substantial computational cost and lack theoretical guarantees for recovering the target retain distribution.
In contrast, building on the same tilting scheme, we propose a computationally efficient unlearning method with provable finite-sample guarantees for recovering the target distribution $p_r$.


\textbf{Notation.} Bold symbols denote vectors and multivariate quantities. We abuse notation by using the symbols $p$, $p_r$, and $p_f$ interchangeably to denote a measure over samples (e.g., $\Z \sim p$), its corresponding density, or probability mass (e.g., $p(\z)$ or $p(y \mid \x)$). $H(p) = \EV{}{\shortminus \ln p}$ denotes entropy, and $\Std_{p}[\,\cdot\,]$ denotes standard deviation under $p$. For a function $g$ and $k \geq 1$, $\norm{g}{k,p} := \EV{\Z \sim p}{|g(\Z)|^k}^{1/k}$.

\section{Proposed Method: \alg}
\label{sec:proposed-method}

We first formalize the problem of generative model unlearning. For generality, we present the formulation in terms of a generic random variable $\Z$ and use $p(\z)$ to denote either a density or probability mass, referring to both simply as densities for brevity. We specialize to the setting of conditional generation for LLMs when needed.

We consider a generative model parameterized by $\btheta$, with query access to its density $p_{\btheta}(\z)$. For LLMs, we write $\Z = (\X,Y)$ where $\X$ denotes the context and $Y$ the next token, with the conditional distribution $p_{\btheta}(y \mid \x)$. As mentioned above, we are given the original pretrained model $p_{\bthetas}$, which we assume approximates the data distribution $p$ over the full dataset $\Data = \Data_r \sqcup \Data_f$, together with access to the samples in both $\Data_r$ and $\Data_f$. The goal is then to produce a model that generates samples from the ground-truth retain distribution $p_r$, corresponding only to the samples in $\Data_r$.

\subsection{Probabilistic Formulation}
\label{sec:probabilistic-model}
Under the distribution tilting framework, where the updated distribution takes the form $\hat{p}_r \propto p \cdot \hat{f}$, the optimal tilt $f^\ast$ that exactly recovers $p_r$ is proportional to the \emph{density ratio} of the target $p_r$ and original distribution $p$. For autoregressive models with $\z = (\x,y)$, this reduces to $f^\ast(\x,y) \propto p_r(y \mid \x) / p(y \mid \x)$, and more generally to $f^\ast(\z) \propto p_r(\z) / p(\z)$.

We aim to recover the unknown target distribution $p_r$ by estimating this ratio. Formally, we label each sample in the full dataset $\Data = \Data_r \sqcup \Data_f$ by its membership in the retain set, yielding the labeled dataset $\S = \{(\z, s) \mid \z \in \Data\}$,
where $s = \indic{\z \in \Data_r}$. We model $(\z,s)$ as realizations of the random variables $(\Z,S) \in \mathcal{Z} \times \{0,1\}$ drawn from measure $\mathbb{P}$, where the event $\{S=1\}$ indicates that $\Z$ is a valid sample from the true unlearned model. Let $p(\z)$ denote the density over $\Z$. We then define the retain and forget set component densities $p_r$ and $p_f$ as the conditional densities of $\Z$ given $S=1$ and $S=0$ respectively:
\begin{equation*}
    p_r(\z) \coloneq p(\z \mid S=1), \quad p_f(\z) \coloneq p(\z \mid S=0).
\end{equation*}
This generic formulation extends the retain--forget partition to the population level. Note that $p_r$ and $p_f$ may have overlapping support, meaning unlearning may require modifying the probability density or mass assigned to a point $\z$ rather than only enforcing hard exclusion constraints.

We define the population forget set proportion $\gamma$ such that $S \sim \mathrm{Bernoulli}(1-\gamma)$. The unconditional density $p(\z)$ is then the $\gamma$-weighted mixture
\begin{equation}
    \label{eq:population-data-mixture}
    p(\z) = (1-\gamma) p_r(\z) + \gamma p_f(\z).
\end{equation}

By Bayes’ rule, we can express the target retain distribution $p_r$ as a tilt of the original model $p$,
\begin{equation*}
    p_r(\z) \propto p(\z) \cdot \Prob{}{S=1 \mid \Z=\z}.
\end{equation*}
For the conditional LLM setting, this reduces to
\begin{equation*}
    p_r(y \mid \x) \propto p(y \mid \x) \cdot \Prob{}{S=1 \mid (\X,Y)=(\x,y)}.
\end{equation*}
This recovers the standard classifier-guidance formulation: the target distribution $p_r$ can be obtained by reweighting $p$ with the optimal tilt function $f^\ast$, which can be interpreted as the \emph{posterior probability}
\begin{equation}
    \label{eq:fstar-is-posterior}
    f^\ast(\z) \coloneq \Prob{}{S=1 \mid \Z=\z}.
\end{equation}
Thus, we can unlearn in practice by estimating $f^\ast$ from finite samples via the surrogate task of training a probabilistic classifier to predict $S$ from $\Z$. This reframes the unlearning problem, which is often characterized by ill-posed objectives, into a well-defined supervised learning problem.

\subsection{\alg Procedure}
Standard classifier guidance implicitly assumes access to an accurate estimate of the optimal classifier $f^*$ which yields an accurate estimate of the desired posterior. However, as detailed in Section~\ref{sec:fin-samp-theory}, this assumption breaks down in the finite-sample regime when the forget distribution $p_f$ is highly concentrated. In such settings, even small classification errors can be amplified by sharp modes in $p_f$, leading to information leakage. To address this, we propose the \alg algorithm which applies a two-stage process:

\textbf{Classifier Training.} For a dataset $\Sn = \{(\z_i,s_i)\}_{i=1}^n$ drawn from the data model in Section \ref{sec:probabilistic-model}, we learn a classifier $\hat{f}(\z) \approx \mathbb{P}(S=1 \mid \Z=\z)$ within some hypothesis class $\mathcal{F}$. Given a specific parameterization $f_{\bphi}(\z)$, we minimize the regularized cross-entropy loss
\begin{equation}
    \label{eq:fin-samp-reg-loss}
    \LossFinReg{\bphi} = \frac{1}{n}\sum_{i=1}^n \ell(f_{\bphi}(\z_i),s_i)  + \lambda \norm{\bphi}{2}^2,
\end{equation}
where $\ell(f(\z), s) = -s \ln f(\z) - (1-s) \ln (1 - f(\z))$ and $\lambda \geq 0$ is the regularization coefficient.

\textbf{Tempered Inference.} We apply \emph{base model tempering}, first smoothing the base distribution $p$ via a temperature $T \geq 1$ and then tilting using the learned classifier $\hat{f}$, yielding:
\begin{equation}
    \label{eq:t3-dist-perturbation-gen}
    \hat{p}^{(T)}_r(\z) \propto p(\z)^{1/T} \hat{f}(\z).
\end{equation}
For LLMs, where $\hat{\bphi}$ are the learned parameters which minimize \eqref{eq:fin-samp-reg-loss} and $p_{\bthetas}$ is the frozen original model, this translates to the autoregressive update rule:
\begin{equation}
    \label{eq:t3-dist-perturbation-llm}
    \hat{p}^{(T)}_r(y \mid \x) \propto p_{\bthetas}(y \mid \x)^{1/T} f_{\hat{\bphi}}(\x,y).
\end{equation}
While standard classifier guidance sharpens the classifier to control guidance strength \citep{dhariwal:2021:classifer-guidance}, we instead temper the base model, reducing the magnitude of the correction needed by the classifier. See Remark~\ref{rem:classifier-sharpening-limitation} for the limitations of classifier sharpening for unlearning.

\begin{figure*}[t]
\centering
\def\figscale{1.0} 

\includegraphics[
    width=\figscale\linewidth, 
    trim={0 4cm 3.5cm 3cm}, 
    clip
]{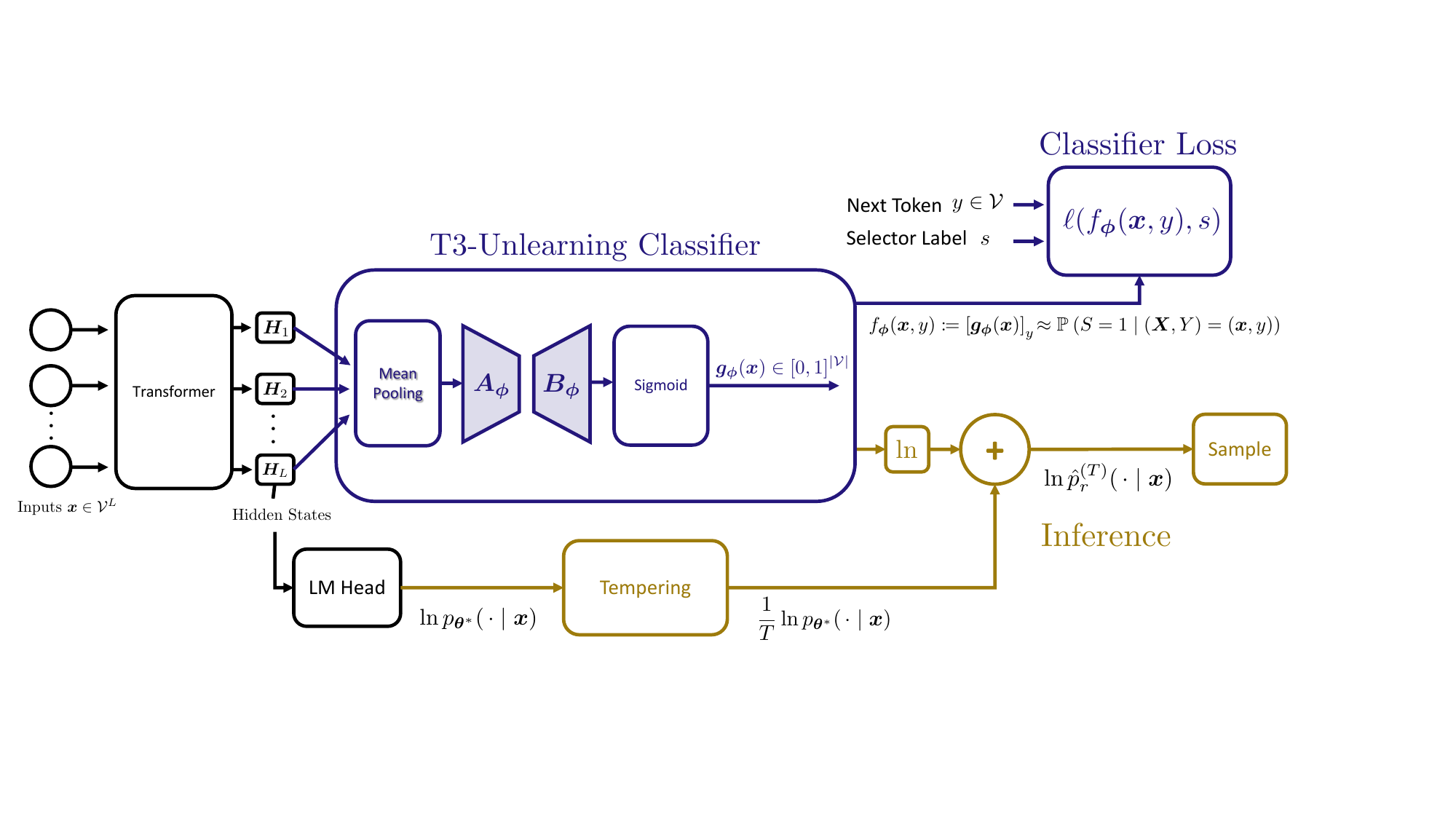}

\caption{\alg for LLMs. We freeze the base model and train a linear head (shaded) on pooled hidden states to predict the vector $\g_{\bphi}(\x)$ of class posteriors for all possible next tokens. In training, we apply the loss to the entry $[\g_{\bphi}(\x)]_y$ corresponding to the estimator of the class posterior $\Prob{}{s=1 \mid \x,y}$, while the entire vector $\g_{\bphi}(\x)$ tilts the base model's tempered logits for inference.}
\label{fig:arch}
\end{figure*}

\subsection{LLM Implementation}

We propose the following procedure for applying \alg to LLMs (Figure~\ref{fig:arch}). We define the classifier $f_{\bphi}$ as a low-rank linear head over pre-computed features from the frozen base model $\bPsi$. For vocabulary $\mathcal{V}$ and an $L$-length context $\x \in \mathcal{V}^L$, we obtain a fixed-size representation by mean-pooling over the sequence dimension of the final hidden states $(\H_1, \dots, \H_L) = \bPsi(\x) \in \Re^{d \times L}$. This is passed through the low-rank classifier factorized by $\A_{\bphi} \in \Re^{h \times d}$, $\B_{\bphi} \in \Re^{|\mathcal{V}| \times h}$ ($h \ll d$) to produce per-token probabilities:
\begin{equation*}
    \g_{\bphi}(\x) = \sigma\big(\B_{\bphi} \A_{\bphi} \operatorname{MeanPool}(\bPsi(\x))\big) \in [0,1]^{|\mathcal{V}|}.
\end{equation*}
We then define $f_{\bphi}(\x,y) \approx \Prob{}{S=1 \mid (\X,Y)=(\x,y)}$ as the $y^{\text{th}}$ entry of $\g_{\bphi}(\x)$:
\begin{equation*}
    f_{\bphi}(\x,y) \coloneq \left[ \g_{\bphi}(\x) \right]_y.
\end{equation*}
During training (Figure~\ref{fig:arch}, Top), we compute the loss $\ell(f_{\bphi}(\x,y), s)$ for each pair $(\x, y)$ and label $s$ using only the classifier output. For inference (Figure~\ref{fig:arch}, Bottom), we apply the classifier guidance to the tempered base model distribution, sampling from $\prhatT(y \mid \x)$:
\begin{equation*}
    \prhatT(y \mid \x) = \operatornamewithlimits{softmax}_{y} \Biggl( \frac{\ln p_{\bthetas}(y \mid \x)}{T} + \ln [\g_\phi(\x)]_y \Biggr).
\end{equation*}
Our design offers two key advantages: hidden states in training can be cached, eliminating repeated forward passes through the base model, and the number of trainable parameters is minimal, enabling fast, memory-efficient unlearning.

\section{Theoretical Guarantees}
\label{sec:fin-samp-theory}

\subsection{Unlearning Metrics}
\label{sec:theory-unlearning-metrics}

Before establishing our theoretical guarantees, we first formalize the criteria used to evaluate an unlearning procedure. Recall from \eqref{eq:population-data-mixture} that we observe samples from the mixture $
p(\z) = (1-\gamma) \, p_r(\z) + \gamma \, p_f(\z)$,
and our goal is to generate samples according to the retain distribution $p_r(\z)$. For any candidate estimate $\hat{p}_r$ of $p_r$, we evaluate unlearning along two axes: \emph{Retain Error}, which measures how accurately the estimate recovers the target retain distribution, and \emph{Forget Error}, which measures how effectively it avoids leaking information in regions associated with the forget distribution.

The first notion of unlearning error, termed the \emph{Retain Error}, evaluates the accuracy of $\hat p_r$ in regions where the target distribution itself places probability mass. We formally quantify this via the forward KL divergence
\begin{equation}
\label{eq:retain-error}
    \RetainErr(\hat p_r) := \ \KL{p_r}{\hat p_r} = \EV{\Z \sim p_r}{\ln \frac{p_r(\Z)}{\hat{p}_r(\Z)}}
\end{equation}
As this is an expectation under $p_r$, it is insensitive to discrepancies on sets where $p_r$ assigns negligible probability. While such insensitivity is often benign in standard estimation settings, it is problematic in unlearning: an estimate can achieve small $\KL{p_r}{\hat p_r}$ while assigning nontrivial probability to regions where the forget distribution $p_f$ concentrates, thereby leaking information about the forget set.

To capture this failure mode, we define a complementary notion of error, the \emph{Forget Error}, that focuses explicitly on the behavior of $\hat p_r$ over regions emphasized by $p_f$. Concretely, we measure  this error as the $p_f$-weighted $\ell_1$-norm
\begin{equation}\label{eq:forget-error}
    \ForgetErr(\hat p_r) :=\|p_r-\hat p_r\|_{1,p_f}
=
\EV{\Z \sim p_f}{\abs{p_r(\Z) \shortminus \hat{p}_r(\Z)}}.
\end{equation}
Taken together, the Retain and Forget Errors provide complementary control: the former ensures faithful recovery of $p_r$ on its typical set, while the latter detects residual mismatch in regions where $p_f$ places weight, which is precisely where unlearning failures manifest.

\begin{remark}
While direct control of distributional discrepancies (\RetainErr and \ForgetErr) provides a definitive notion of unlearning, these quantities are intractable to compute for LLMs, as they require expectations over a combinatorial input space. Consequently, empirical evaluations rely on hypothesis tests \citep{maini:2024:tofu} and membership inference attacks \citep{shi:2025:muse} to assess distinguishability from a retrained reference. Our theory targets these fundamental discrepancies directly, since minimizing them implies indistinguishability under any test function, and in Section~\ref{sec:experiments} we show strong performance under the empirical proxies used in practice.
\end{remark}

\subsection{Surrogate Analysis Framework}
We work in a general nonparametric setting where the retain and forget distributions admit densities $p_r$ and $p_f$ over a continuous domain $\mathcal{Z} \subseteq \Re^d$. We aim to approximate $p_r$ via the \alg estimator $\prhatT$ defined as
\begin{equation}
    \label{eq:t3-estimator-def}
    \prhatT(\z) = \frac{p(\z)^{1/T} \hat{f}(\z)}{\int_{\uvec} p(\uvec)^{1/T} \hat{f}(\uvec)},
\end{equation}
where $\hat{f} \in \mathcal{F}$ is a learned surrogate classifier intended to approximate the Bayes posterior $f^\ast$.
\begin{remark}
We assume $T$ is chosen so that $\int p^{1/\tau} < \infty$ for all $1 \leq \tau \leq T$, ensuring $\prhatT$ is well-defined. This always holds for sub-exponential distributions; for power-law tails $p \propto \norm{\z}{2}^{-\alpha}$, it requires $\alpha > d T$, for ambient dimension $d$.
\end{remark}

Our analysis directly relates the unlearning error of $\prhatT$ to the statistical quality of $\hat{f}$. Define the population risk under the cross-entropy loss $\ell$ as
\begin{equation*}
    \Loss{f} \coloneq \EV{(\Z,S) \sim \mathbb{P}}{\ell(f(\Z), S)}.
\end{equation*}
We assume $\hat{f}$ achieves excess risk at most $\delta$, i.e., $\Loss{\hat{f}} - \Loss{f^\ast} \leq \delta$, and derive bounds on the Retain and Forget Errors of $\prhatT$ as functions of $\delta$ and the mixture parameters.

While particular function classes admit standard excess risk rates when the true posterior lies in the hypothesis space (e.g., logistic regression achieving $\mathcal{O}(n^{-1/2})$ rates; see Appendix~\ref{app:log-reg-risk}), our theory is stated in terms of the surrogate excess risk $\delta$. This abstraction allows it to apply across learning settings with favorable risk guarantees, agnostic to any function class. Consequently, regardless of how the surrogate classifier $\hat{f}$ is obtained, we characterize unlearning performance solely through its surrogate risk.

\subsection{Unlearning Guarantees: Untempered Estimator}
\label{sec:theory-guarantees-untempered}
To motivate the role of tempering, we first analyze the baseline that \emph{directly} tilts the observed mixture using standard classifier-guidance. Concretely, for a classifier $\hat{f}$, we define the \emph{untempered} estimator as $\prhatOne$, the special case of the \alg estimator \eqref{eq:t3-estimator-def} with fixed temperature $T=1$. 

We first establish in Theorem~\ref{th:risk-to-retain-err-untempered} that this standard approach strictly controls Retain Error, and then subsequently show that it can fail to bound Forget Error when the forget distribution $p_f$ is sharply concentrated.

\begin{theorem}
\label{th:risk-to-retain-err-untempered}
Let $\hat{f} \in \mathcal{F}$ satisfy the excess risk bound $L(\hat{f}) - L(f^\ast) \leq \delta$. Then the Retain Error \eqref{eq:retain-error} of the untempered estimate $\prhatOne$ in \eqref{eq:t3-estimator-def} satisfies
\begin{equation*}
    \RetainErr(\prhatOne)  \leq \frac{\delta}{1 - \gamma}.
\end{equation*}
\end{theorem}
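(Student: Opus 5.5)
The plan is to use the fact that the excess cross-entropy risk is an expected KL divergence between Bernoulli posteriors, and then to apply a Donsker--Varadhan / Gibbs-type bound to the normalized tilt. Write $\hat{p}_r := \prhatOne = p\hat f / Z$ with $Z = \int p \hat f$. Since $p_r = p f^\ast/(1-\gamma)$, we have
\begin{equation*}
\frac{p_r}{\hat p_r} = \frac{f^\ast}{\hat f}\cdot\frac{Z}{1-\gamma}.
\end{equation*}
Taking expectations under $p_r$, this gives
\begin{equation*}
\RetainErr(\hat p_r) = \EV{\Z\sim p_r}{\ln \frac{f^\ast(\Z)}{\hat f(\Z)}} + \ln\frac{Z}{1-\gamma}.
\end{equation*}
I would next rewrite both terms as expectations under $p$, using $\EV{p_r}{g} = \EV{p}{f^\ast g}/(1-\gamma)$ and $Z/(1-\gamma) = \EV{p}{\hat f}/\EV{p}{f^\ast}$.

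The second step is to decompose the excess risk. Conditioning on $\Z$, we get
\begin{equation*}
\Loss{\hat f}-\Loss{f^\ast} = \EV{\Z\sim p}{\mathrm{KL}\big(\mathrm{Ber}(f^\ast(\Z))\,\|\,\mathrm{Ber}(\hat f(\Z))\big)},
\end{equation*}
and the integrand equals
\begin{equation*}
f^\ast\ln\frac{f^\ast}{\hat f} + (1-f^\ast)\ln\frac{1-f^\ast}{1-\hat f}.
\end{equation*}
The first summand, integrated against $p$ and divided by $1-\gamma$, is exactly the first term of $\RetainErr$. So it suffices to show
\begin{equation*}
(1-\gamma)\ln\frac{\EV{p}{\hat f}}{1-\gamma} \le \EV{p}{(1-f^\ast)\ln\frac{1-f^\ast}{1-\hat f}}.
\end{equation*}
Equivalently, with $a=\EV{p}{\hat f}$ and $1-\gamma=\EV{p}{f^\ast}$, we need $(1-\gamma)\ln\frac{a}{1-\gamma} \le$ the ``forget part'' of the Bernoulli KL.

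The third step establishes this inequality. By joint convexity of the Bernoulli KL (Jensen over $\Z\sim p$),
\begin{equation*}
\EV{p}{\mathrm{KL}(\mathrm{Ber}(f^\ast)\|\mathrm{Ber}(\hat f))} \ge \mathrm{kl}(1-\gamma \,\|\, a) \ge 0,
\end{equation*}
but I need a split version, and this is the main obstacle. My first attempt would use $\ln x \le x-1$ on the forget part in the reverse direction:
\begin{equation*}
(1-f^\ast)\ln\frac{1-f^\ast}{1-\hat f} \ge (1-f^\ast) - (1-\hat f) = \hat f - f^\ast.
\end{equation*}
Integrating gives at least $a-(1-\gamma)$. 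Meanwhile $(1-\gamma)\ln\frac{a}{1-\gamma} \le a-(1-\gamma)$, again by $\ln x\le x-1$. Chaining these yields
\begin{equation*}
(1-\gamma)\,\RetainErr \le \EV{p}{f^\ast\ln\frac{f^\ast}{\hat f}} + \EV{p}{(1-f^\ast)\ln\frac{1-f^\ast}{1-\hat f}} = \Loss{\hat f}-\Loss{f^\ast} \le \delta,
\end{equation*}
which is the claimed bound $\RetainErr(\prhatOne)\le \delta/(1-\gamma)$. The remaining work is checking measurability and integrability, and that $\hat f\in(0,1)$ wherever needed so that the logarithms are finite; if $\hat f$ hits $0$ or $1$, the excess risk is infinite or the terms vanish, and the bound is trivial.
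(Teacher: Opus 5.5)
Your proof is correct and follows the paper's skeleton. Like the paper, you split the excess risk into its $p_r$- and $p_f$-weighted halves, identify the first as $(1-\gamma)\RetainErr(\prhatOne) - (1-\gamma)\ln\frac{Z}{1-\gamma}$, and show that the second half dominates this log-normalizer term.

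The only difference is in that last step. The paper rewrites the forget half as $\gamma\,\KL{p_f}{\pfhatOne}$ minus its own log-normalizer and cancels the two normalizers by concavity of $\ln$, which also yields the slightly stronger $(1-\gamma)\RetainErr + \gamma\,\KL{p_f}{\pfhatOne} \le \delta$. You avoid the auxiliary $\pfhatOne$ by applying $\ln x \le x-1$ twice, which is just the tangent-line form of the same concavity step.
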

Theorem~\ref{th:risk-to-retain-err-untempered} shows that the Retain Error of the untempered estimator is controlled by the classifier's excess risk, up to the factor $(1-\gamma)^{-1}$. The bound scales linearly in $\delta$, as improved classification performance leads to a more accurate reconstruction of the retain distribution. The factor $(1-\gamma)^{-1}$ reflects the inverse proportion of the retain component in the original mixture. As $\gamma \to 1$, the signal from $p_r$ vanishes, rendering recovery ill-conditioned. However, in standard unlearning scenarios where the forget set is small ($\gamma \ll 1$), this factor is negligible. In this regime, the bound in Theorem~\ref{th:risk-to-retain-err-untempered} remains sharp, effectively equating the Retain Error with the classifier's excess risk.
While Theorem~\ref{th:risk-to-retain-err-untempered} shows the untempered estimator attains small Retain Error when the classifier has low excess risk (and $\gamma$ is not too large), this alone does not ensure effective unlearning. As discussed in Section~\ref{sec:theory-unlearning-metrics}, controlling the Forget Error \eqref{eq:forget-error} is essential to prevent information leakage from the forget set. The next theorem makes this explicit by upper-bounding the Forget Error of the untempered estimator, showing that it may degrade even when the Retain Error is small.
\begin{theorem}
\label{th:risk-to-forget-err}
Let $\hat{f} \in \mathcal{F}$ satisfy the excess risk bound $L(\hat{f}) - L(f^\ast) \leq \delta$. Then the Forget Error \eqref{eq:forget-error} of the untempered estimate $\prhatOne$ in \eqref{eq:t3-estimator-def} satisfies 
\begin{equation*}
    \ForgetErr(\prhatOne) \leq \norm{p_f}{\infty} \sqrt{\frac{2\delta}{1-\gamma}}.
\end{equation*}
\end{theorem}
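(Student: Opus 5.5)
The plan is to pull the $p_f$-weighting out through the sup norm and then compare $p_r$ and $\prhatOne$ in total variation, which the Retain Error already controls. Concretely,
\begin{equation*}
    \ForgetErr(\prhatOne) = \int p_f(\z)\,\abs{p_r(\z) - \prhatOne(\z)}\,d\z \le \norm{p_f}{\infty} \int \abs{p_r(\z) - \prhatOne(\z)}\,d\z = \norm{p_f}{\infty}\, \norm{p_r - \prhatOne}{1}.
\end{equation*}
So the problem reduces to bounding the (unweighted) $L^1$ distance between $p_r$ and $\prhatOne$.

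For this we apply Pinsker's inequality in the form $\norm{p_r - \prhatOne}{1} \le \sqrt{2\,\KL{p_r}{\prhatOne}}$. Into it we plug the bound of Theorem~\ref{th:risk-to-retain-err-untempered}, namely $\KL{p_r}{\prhatOne} = \RetainErr(\prhatOne) \le \delta/(1-\gamma)$. This yields $\norm{p_r-\prhatOne}{1} \le \sqrt{2\delta/(1-\gamma)}$, and combined with the first display gives exactly the claimed bound $\norm{p_f}{\infty}\sqrt{2\delta/(1-\gamma)}$.

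I expect no real obstacle, since all the substantive work relating excess cross-entropy risk to the KL divergence is already contained in Theorem~\ref{th:risk-to-retain-err-untempered}. The only points to check are technical:
\begin{enumerate}[label=(\roman*)]
    \item $\prhatOne$ is a genuine probability density. This holds because $\int p\hat f \le 1$, and it is positive since $\hat f$ has finite cross-entropy risk, so the normalization is well defined.
    \item The $L^1$ form of Pinsker is the right one. Total variation is half the $L^1$ norm, so $\norm{\cdot}{1} \le \sqrt{2\,\mathrm{KL}}$ is correct, and this is what produces the constant $\sqrt{2}$.
    \item The bound is meaningful only when $\norm{p_f}{\infty} < \infty$. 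Otherwise it is vacuous, which is precisely the looseness the statement highlights.
\end{enumerate}
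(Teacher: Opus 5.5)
Your proposal is correct and matches the paper's proof step for step: bound the $p_f$-weighted error by $\norm{p_f}{\infty}\norm{p_r - \prhatOne}{1}$, apply Pinsker's inequality, and substitute $\KL{p_r}{\prhatOne} \le \delta/(1-\gamma)$ from Theorem~\ref{th:risk-to-retain-err-untempered}. Your side checks are also sound. The normalization is well defined, and the $\sqrt{2}$ constant correctly comes from the $L^1$ form of Pinsker's inequality.
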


The bound in Theorem~\ref{th:risk-to-forget-err} depends on the surrogate excess risk $\delta$ and the mixture weight $\gamma$ in the expected way: it improves as $\delta$ decreases and deteriorates as $\gamma$ increases, with the same $(1-\gamma)^{-1}$ amplification as in Theorem~\ref{th:risk-to-retain-err-untempered}. Unlike the Retain Error guarantee, this bound scales with $\sqrt{\delta}$ and $\sqrt{(1-\gamma)^{-1}}$. This reflects that Retain Error and excess risk are both KL-based metrics, while Forget Error is $\ell_1$-based; applying Pinsker's inequality relates the two quantities, degrading the bound from linear to square-root.

Crucially, the bound also carries a multiplicative dependence on $\norm{p_f}{\infty}$. This captures the sharpness of the forget distribution and can be very large when $p_f$ is highly concentrated on a small region, in which case the bound may become uninformative even for small $\delta$. In the next theorem, we show that this sensitivity to the concentration of $p_f$ is not an artifact of the analysis by constructing an instance where the Forget Error is lower bounded in terms of $\norm{p_f}{\infty}$.

\begin{theorem}
\label{th:forget-err-lb}
Let $p_r$ and $p_f$ have disjoint supports, with $p_f$ uniform on its support. Then for any mixture weight $\gamma \in (0,1)$ and any $\delta > 0$, there exists a classifier $\hat{f}$ achieving excess risk $L(\hat{f}) - L(f^*) \leq \delta$ such that the Forget Error \eqref{eq:forget-error} of the untempered estimate $\prhatOne$ in \eqref{eq:t3-estimator-def} satisfies
\begin{equation*}
    \ForgetErr(\prhatOne) 
    \geq \norm{p_f}{\infty} \cdot \frac{\gamma \paren{ 1 - \exp \paren{-\frac{\delta}{\gamma}} }}{1- \gamma \exp \paren{-\frac{\delta}{\gamma}} }.
\end{equation*}
In particular, as $\delta \to 0$, we have $
   \ForgetErr(\prhatOne)  = \Omega \paren{\norm{p_f}{\infty} \! \! \cdot \delta}$.
\end{theorem}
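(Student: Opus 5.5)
We will construct $\hat f$ explicitly. Because $p_r$ and $p_f$ have disjoint supports, the Bayes posterior is $f^\ast = \indic{\z \in \mathrm{supp}(p_r)}$, so $L(f^\ast) = 0$. The plan is to take a classifier that equals $f^\ast$ on the retain support and takes a small constant value $c \in (0,1)$ on the forget support:
\begin{equation*}
    \hat f(\z) = \indic{\z \in \mathrm{supp}(p_r)} + c\,\indic{\z \in \mathrm{supp}(p_f)}.
\end{equation*}
Its excess risk equals its risk. On retain points the loss is $-\ln 1 = 0$. On forget points the loss is $-\ln(1-c)$, and these points occur with probability $\gamma$. Hence $L(\hat f) - L(f^\ast) = -\gamma\ln(1-c)$. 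Requiring this to be at most $\delta$ gives $c \le 1 - e^{-\delta/\gamma}$, and we set $c = 1 - e^{-\delta/\gamma}$ so that the excess risk is exactly $\delta$.

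Next we compute $\prhatOne$ directly from \eqref{eq:t3-estimator-def}. On the retain support $p\hat f = (1-\gamma)p_r$, and on the forget support $p\hat f = c\gamma p_f$. The normalizer is therefore $Z = (1-\gamma) + c\gamma$.

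For $\z \in \mathrm{supp}(p_f)$ we have $p_r(\z) = 0$, so the error there is $|p_r - \prhatOne| = c\gamma p_f(\z)/Z$. Since $p_f$ is uniform, $p_f(\z) = \norm{p_f}{\infty}$ on its support. Integrating against $p_f$ gives
\begin{equation*}
    \ForgetErr(\prhatOne) = \norm{p_f}{\infty}\,\frac{c\gamma}{1-\gamma+c\gamma}.
\end{equation*}
Substituting $c = 1-e^{-\delta/\gamma}$, the denominator becomes $1-\gamma e^{-\delta/\gamma}$. This yields the stated expression, with equality, which in particular implies the inequality.

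For the asymptotic claim we expand as $\delta \to 0$. The numerator $\gamma(1-e^{-\delta/\gamma})$ behaves like $\delta$, and the denominator tends to $1-\gamma > 0$. So the Forget Error is $\sim \norm{p_f}{\infty}\,\delta/(1-\gamma) = \Omega(\norm{p_f}{\infty}\delta)$.

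The only delicate point is making sure the loss convention is consistent. The cross-entropy must be finite, and $f^\ast$ takes the values $0$ and $1$. On the retain support $\hat f = 1$ and $S = 1$, so the loss is $-\ln 1 = 0$. On the forget support $S = 0$, and the loss $-\ln(1-c)$ is finite because $c < 1$. Outside both supports the loss has $p$-measure zero and does not matter. No further obstacle is expected; the argument is an explicit computation.
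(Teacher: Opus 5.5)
Your proposal is correct and follows the paper's proof essentially step for step. The paper uses the same witness classifier (equal to $1$ on the retain support and to $\epsilon = 1-\exp(-\delta/\gamma)$ on the forget support), the same excess-risk computation $-\gamma\ln(1-\epsilon)=\delta$, the same normalizer $(1-\gamma)+\gamma\epsilon$, the same identity $\|p_f\|_2^2=\|p_f\|_\infty$ for uniform $p_f$, and the same first-order expansion as $\delta\to 0$.
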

Theorem~\ref{th:forget-err-lb} shows that for a broad class of component distributions $p_r$ and $p_f$, the Forget Error of the untempered estimator necessarily scales with the peak density. This confirms that the dependence on $\norm{p_f}{\infty}$ in Theorem~\ref{th:risk-to-forget-err} is intrinsic rather than an artifact of the analysis. The proof constructs a worst-case classifier $\hat{f}$ which incurs error only on the forget set. In this case, the Forget Error is lower bounded by the sharpness of $p_f$, quantified by its squared $\ell_2$-norm $\norm{p_f}{2}^2$. When $p_f$ is uniform, this coincides with $\norm{p_f}{\infty}$ yielding the stated bound. While the lower bound scales linearly in $\delta$, compared to the $\sqrt{\delta}$ rate in the upper bound in Theorem \ref{th:risk-to-forget-err}, it establishes the unavoidable dependence on the peak density of the forget distribution.

\begin{remark}[Limitations of Classifier Sharpening]
\label{rem:classifier-sharpening-limitation}
Prior work \citep{dhariwal:2021:classifer-guidance,ji:2024:logit-diff} controls guidance strength by sharpening the classifier $\hat{f}^{\, w}$ for $w > 0$. While this can be interpreted as a strategy for improving the surrogate task excess risk to some $\delta_w$, Theorems~\ref{th:risk-to-forget-err} and \ref{th:forget-err-lb} show that such sharpening cannot eliminate the fundamental dependence of the Forget Error on the sharpness of $p_f$.
\end{remark}

\subsection{Unlearning Guarantees: The Tempered Estimator}
\label{sec:theory-guarantees-tempered}

Theorems~\ref{th:risk-to-forget-err} and \ref{th:forget-err-lb} show that the untempered estimator is highly sensitive to the sharpness of $p_f$: its Forget Error scales linearly with $\norm{p_f}{\infty}$, and the corresponding lower bound confirms this dependence is intrinsic in general. We show that tempering overcomes this limitation. Under a mild integrability condition on the mixture density $p$, the $T$-tempered estimator $\prhatT$ admits a Forget Error bound which depends on $\norm{p_f}{\infty}$ sublinearly as $\norm{p_f}{\infty}^{1/T}$, substantially improving robustness when $p_f$ is sharply concentrated.

\begin{theorem}
\label{th:risk-to-forget-err-tempered}
Let $\hat{f} \in \mathcal{F}$ satisfy the excess risk bound $L(\hat{f}) - L(f^\ast) \leq \delta$. Define the $\tau$-tempered oracle estimate $p_r^{(\tau)} \propto p^{1/\tau} \cdot f^\ast$ for $\tau \in [1,T]$. Consider $k \geq T$ such that $\int p^{\frac{k-T}{T(k-1)}} < \infty$. Then for some $\tau \in [1,T]$, the Forget Error \eqref{eq:forget-error} of the $T$-tempered estimate $\prhatT$ in \eqref{eq:t3-estimator-def} satisfies
\begin{align*}
    \ForgetErr(\prhatT) \leq {\paren{1 - \tfrac{1}{T}} \norm{p_f}{2,p_r^{(\tau)}} \! \cdot \Std_{p_r^{(\tau)}} [\ln p]}  + \mathcal{O}(\norm{p_f}{\infty}^{1/T} \cdot \, \delta^{1/2k}).
\end{align*}
\end{theorem}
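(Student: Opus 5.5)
The plan is to split the Forget Error with the triangle inequality through the $T$-tempered oracle $p_r^{(T)} \propto p^{1/T} f^\ast$:
\begin{equation*}
\ForgetErr(\prhatT) \le \norm{p_r - p_r^{(T)}}{1,p_f} + \norm{p_r^{(T)} - \prhatT}{1,p_f}.
\end{equation*}
The first term is the tempering bias and should yield the $\paren{1-\tfrac1T}\norm{p_f}{2,p_r^{(\tau)}}\Std_{p_r^{(\tau)}}[\ln p]$ term. The second term is the estimation error and should yield $\mathcal{O}(\norm{p_f}{\infty}^{1/T}\delta^{1/2k})$.

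\textbf{Bias term.} I would interpolate along the path $\tau \mapsto p_r^{(\tau)}$, which gives $p_r^{(1)} = p_r$, since $p f^\ast \propto p_r$. Write $p_r^{(\tau)}(\z) = p(\z)^{1/\tau} f^\ast(\z)/Z_\tau$ and set $\beta = 1/\tau$. Differentiating in $\beta$ gives
\begin{equation*}
\partial_\beta p_r^{(\tau)} = p_r^{(\tau)}\paren{\ln p - \EV{p_r^{(\tau)}}{\ln p}}.
\end{equation*}
By the mean value theorem applied pointwise, or more cleanly to the functional $\beta \mapsto \int p_f\,\abs{p_r^{(\tau)} - p_r}$ after writing it as an integral, we get, for some $\tau \in [1,T]$,
\begin{equation*}
\norm{p_r - p_r^{(T)}}{1,p_f} \le \paren{1-\tfrac1T}\int p_f\, p_r^{(\tau)} \abs{\ln p - \EV{}{\ln p}}.
\end{equation*}
Cauchy--Schwarz under $p_r^{(\tau)}$ then yields $\norm{p_f}{2,p_r^{(\tau)}}\Std_{p_r^{(\tau)}}[\ln p]$. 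A little care is needed to get a single $\tau$ via the mean value theorem applied to the scalar function $\beta \mapsto$ (an upper bound), using differentiation under the integral. The integrability assumptions on $p^{1/\tau}$ from the preceding remark justify this.

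\textbf{Estimation term.} The second term satisfies
\begin{equation*}
\int p_f \abs{p_r^{(T)} - \prhatT} \le \int p_f \, p^{1/T}\,\Big|\tfrac{f^\ast}{Z} - \tfrac{\hat f}{\hat Z}\Big|,
\end{equation*}
where $Z$ and $\hat Z$ are the normalizers. Since $\gamma p_f \le p$, we have $p_f = p_f^{1/T} p_f^{1-1/T}$, and one factor $p_f^{1/T} \le \norm{p_f}{\infty}^{1/T}$ pulls out. What remains is $\gamma^{-(1-1/T)} \int p^{\,1-1/T}\cdot p^{1/T}\,\abs{\cdots}$, which is essentially a $p$-weighted $L^1$ discrepancy between the tilts. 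This is the point of tempering: only $\norm{p_f}{\infty}^{1/T}$ appears.

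\textbf{Main obstacle.} The hard part is that the remaining weight is not exactly $p$, because of the normalizers $Z$ and $\hat Z$, which involve $p^{1/T}$ rather than $p$. To relate it to the excess risk, I would use the identity $L(\hat f) - L(f^\ast) = \EV{\Z\sim p}{\mathrm{KL}(\mathrm{Ber}(f^\ast)\,\|\,\mathrm{Ber}(\hat f))} \le \delta$. Pinsker then gives $\EV{p}{(f^\ast - \hat f)^2} \le \delta/2$, and hence $\EV{p}{\abs{f^\ast - \hat f}} \lesssim \sqrt\delta$. The weight $p^{1/T}$ and the normalizer difference $\abs{Z - \hat Z} \le \int p^{1/T}\abs{f^\ast - \hat f}$ must then be converted into $p$-weighted quantities. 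Write
\begin{equation*}
\int p^{1/T}\abs{g} = \int p^{1/k}\abs{g}^{1/k}\cdot p^{1/T - 1/k}\abs{g}^{1-1/k},
\end{equation*}
with $g = f^\ast - \hat f$ and $\abs{g} \le 1$. Hölder's inequality with exponents $(k, k/(k-1))$ then gives
\begin{equation*}
\int p^{1/T}\abs{g} \le \paren{\EV{p}{\abs{g}}}^{1/k}\paren{\int p^{\frac{k-T}{T(k-1)}}}^{(k-1)/k}.
\end{equation*}
This is exactly where the stated integrability condition $\int p^{\frac{k-T}{T(k-1)}} < \infty$ enters, and it yields $\paren{\sqrt\delta}^{1/k} = \delta^{1/2k}$.

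Combining this with a lower bound on $Z$ (a constant depending on $p$ and $T$) to handle $\abs{1/Z - 1/\hat Z}$ gives the $\mathcal{O}(\norm{p_f}{\infty}^{1/T}\delta^{1/2k})$ term, with the constants absorbed. I expect the bookkeeping of the normalizer ratio to be the most delicate step, since $\hat Z$ must be kept bounded away from zero, which holds for $\delta$ small.
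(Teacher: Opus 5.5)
Your argument is correct, and most of it is the paper's proof. The shared parts are:
\begin{itemize}
\item the same split through $p_r^{(T)}$;
\item the same bias argument: the fundamental theorem of calculus in the inverse temperature, swapping integrals, the mean value theorem for integrals applied to $\beta\mapsto\int p_f\,|\partial_\beta p_r^{(1/\beta)}|$, then Cauchy--Schwarz;
\item the same Pinsker--Jensen bound $\EV{p}{|f^\ast-\hat f|}\le\sqrt{\delta/2}$;
\item the same H\"older step with exponents $(k,\tfrac{k}{k-1})$ on $|Z-\hat Z|$, which is where both proofs use $\int p^{\frac{k-T}{T(k-1)}}<\infty$ and produce $\delta^{1/2k}$.
\end{itemize}

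You depart from the paper in two local places.

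\textbf{Extracting the sup-norm factor.} You obtain $\norm{p_f}{\infty}^{1/T}$ through the pointwise bound $p_f^{1-1/T}\le(p/\gamma)^{1-1/T}$. This gives a $\sqrt{\delta}$ rate on the $|f^\ast-\hat f|$ piece, where the paper gets $\delta^{1/2T}$. That gain is irrelevant, since $\delta^{1/2k}$ dominates in both versions. The cost is a factor $\gamma^{-(1-1/T)}$, which is large in the regime $\gamma\ll 1$ typical of unlearning. The paper instead uses H\"older with exponents $(\tfrac{T}{T-1},T)$, which needs only $\int p_f=1$ and incurs no penalty in $\gamma$.

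\textbf{Keeping $\hat Z$ away from zero.} You use perturbation: $\hat Z\ge Z-|Z-\hat Z|\ge Z/2$ once $\delta$ is below a threshold. The paper instead uses Lemma~\ref{lm:partition-fn-lb-app}, which lower-bounds $N_T$ (your $\hat Z$) for every $\delta$ and so makes its bound explicit and non-asymptotic. Your route is simpler and is enough for the $\mathcal{O}(\cdot)$ claim.

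\textbf{What you should make explicit.} Do not leave the lower bound on $Z$ as ``a constant depending on $p$.'' If that constant degraded as $p_f$ sharpens, it would hide exactly the $\norm{p_f}{\infty}$ dependence the theorem is about, and it also sets your $\delta$ threshold. One line suffices: by Jensen and $\EV{p_r}{\ln p}\le -H(p_r)$,
\[
Z=(1-\gamma)\,\EV{p_r}{p^{-(1-1/T)}}\ge(1-\gamma)\exp\paren{(1-\tfrac1T)H(p_r)},
\]
which involves only $\gamma$, $T$, and $H(p_r)$.
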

Theorem~\ref{th:risk-to-forget-err-tempered} bounds the Forget Error of the $T$-tempered estimator by a sum of two terms: a \emph{tempering bias} term and a $\delta$-dependent \emph{estimation error} term. The second term exhibits the key benefit of tempering: the dependence on the sharpness of the forget distribution is sublinear, scaling as $\norm{p_f}{\infty}^{1/T}$ rather than $\norm{p_f}{\infty}$ as in the untempered bound of Theorem~\ref{th:risk-to-forget-err}. While the dependence on the classifier excess risk worsens from $\delta^{1/2}$ to $\delta^{1/2k}$, for fixed $\delta$ and large $\norm{p_f}{\infty}$ tempering can greatly improve the Forget Error guarantee.

The first term quantifies the price of tempering. It depends on the intermediate ``oracle'' tempered distribution $p_r^{(\tau)}$, for some $\tau\in[1,T]$, and measures the distortion caused by raising $p$ to the power $1/T$. When $p_r$ and $p_f$ are well separated, this bias is small, and it vanishes entirely in the idealized case of disjoint supports, since then $\norm{p_f}{2,p_r^{(\tau)}} = 0$. In this regime, the overall error is dominated by the estimation term, directly addressing the failure mode of the untempered estimator, whose Forget Error provably exhibits linear dependence on $\norm{p_f}{\infty}$ in the disjoint-support setting (cf.\ Theorem~\ref{th:forget-err-lb}). However, taking $T$ too large causes the bias term to dominate, as overaggressive tempering flattens $p$ and induces an unavoidable mismatch even when using the Bayes-optimal classifier. This indicates that minimizing Forget Error requires balancing these competing effects.

While tempering is crucial for controlling Forget Error when $p_f$ is sharply concentrated, it globally distorts the base distribution. We next quantify the induced Retain Error bias.

\begin{theorem}
\label{th:risk-to-retain-err-tempered}
Let $\hat{f} \in \mathcal{F}$ satisfy the excess risk bound $L(\hat{f}) - L(f^\ast) \leq \delta$. Define the $\tau$-tempered density $p^{(\tau)} \propto p^{1/\tau}$ for $\tau \in [1,T]$. Then for some $\tau \in [1,T]$, the Retain Error \eqref{eq:retain-error} of the $T$-tempered estimate $\prhatT$ in \eqref{eq:t3-estimator-def} satisfies
\begin{equation*}
    \RetainErr(\prhatT) \leq \frac{\delta}{1-\gamma} + \paren{1- \frac{1}{T}} \paren{\frac{\paren{\int p^{1/\tau}}
    \mathbb{E}_{\z \sim p^{(\tau)}}[\abs{\ln p(\z)}]}{ (1 - \gamma)^{ \frac{\tau+1}{\tau}}  \exp \paren{\frac{\tau \shortminus 1}{\tau} H(p_r) - \tfrac{\delta \shortminus \gamma \ln \gamma}{1 \shortminus \gamma}}} - H(p_r)}
\end{equation*}
\end{theorem}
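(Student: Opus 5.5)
The plan is to split the log-ratio into the untempered part and a tempering part:
\[
\ln\frac{p_r}{\prhatT} = \ln\frac{p_r}{\prhatOne} + \ln\frac{\prhatOne}{\prhatT}.
\]
Taking $\mathbb{E}_{p_r}$, the first term is exactly $\RetainErr(\prhatOne)\le \delta/(1-\gamma)$ by Theorem~\ref{th:risk-to-retain-err-untempered}. For the second term, write $Z_\tau := \int p^{1/\tau}\hat f$. Since $\prhatOne = p\hat f/Z_1$ and $\prhatT = p^{1/T}\hat f/Z_T$, the classifier cancels pointwise, and
\[
\mathbb{E}_{p_r}\Bigl[\ln\tfrac{\prhatOne}{\prhatT}\Bigr] = \Bigl(1-\tfrac1T\Bigr)\mathbb{E}_{p_r}[\ln p] + \ln Z_T - \ln Z_1 .
\]

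\textbf{Mean value step.} Set $s = 1/\tau$ and $\phi(s) = \ln\int p^{s}\hat f$. Differentiating under the integral gives $\phi'(s) = \mathbb{E}_{q_\tau}[\ln p]$, where $q_\tau \propto p^{1/\tau}\hat f$. The mean value theorem on $s\in[1/T,1]$ then yields some $\tau\in[1,T]$ with
\[
\ln Z_T - \ln Z_1 = -\Bigl(1-\tfrac1T\Bigr)\mathbb{E}_{q_\tau}[\ln p].
\]
The tempering term therefore equals $(1-\frac1T)\bigl(\mathbb{E}_{p_r}[\ln p] - \mathbb{E}_{q_\tau}[\ln p]\bigr)$. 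For the first piece, Gibbs' inequality gives $\mathbb{E}_{p_r}[\ln p] = -H(p_r) - \KL{p_r}{p} \le -H(p_r)$, which produces the $-H(p_r)$ in the statement. For the second piece, I use $-\mathbb{E}_{q_\tau}[\ln p]\le \mathbb{E}_{q_\tau}|\ln p|$ together with $\hat f\le 1$:
\[
\mathbb{E}_{q_\tau}|\ln p| = \frac{\int p^{1/\tau}\hat f\,|\ln p|}{Z_\tau} \le \frac{\bigl(\int p^{1/\tau}\bigr)\,\mathbb{E}_{p^{(\tau)}}|\ln p|}{Z_\tau}.
\]

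\textbf{Lower bound on $Z_\tau$.} Write $Z_\tau = \mathbb{E}_{p_r}[p^{1/\tau}\hat f/p_r]$ and apply Jensen:
\[
\ln Z_\tau \ge \tfrac1\tau\mathbb{E}_{p_r}[\ln p] - \mathbb{E}_{p_r}[\ln p_r] + \mathbb{E}_{p_r}[\ln\hat f].
\]
Using $p\ge(1-\gamma)p_r$ from \eqref{eq:population-data-mixture}, the first two terms are at least $\frac1\tau\ln(1-\gamma) + (1-\frac1\tau)H(p_r)$. For the classifier term, I drop the nonnegative forget-class part of the loss to get $L(\hat f)\ge -(1-\gamma)\mathbb{E}_{p_r}[\ln\hat f]$. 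Then $L(\hat f)\le L(f^\ast)+\delta$ and $L(f^\ast) = H(S\mid Z)\le H(S) = -(1-\gamma)\ln(1-\gamma)-\gamma\ln\gamma$ give
\[
\mathbb{E}_{p_r}[\ln\hat f]\ \ge\ \ln(1-\gamma) - \frac{\delta-\gamma\ln\gamma}{1-\gamma}.
\]
Combining these, $Z_\tau \ge (1-\gamma)^{(\tau+1)/\tau}\exp\bigl(\frac{\tau-1}{\tau}H(p_r) - \frac{\delta-\gamma\ln\gamma}{1-\gamma}\bigr)$, which is exactly the denominator in the statement. Substituting this into the bound for $\mathbb{E}_{q_\tau}|\ln p|$ and adding the $\delta/(1-\gamma)$ term completes the bound.

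\textbf{Main obstacle.} The algebra is routine; the delicate step is justifying the mean value step. This requires that $s\mapsto\int p^s\hat f$ be finite, positive and differentiable on $[1/T,1]$, with differentiation under the integral allowed. I would obtain this by dominated convergence, using the standing assumption $\int p^{1/\tau}<\infty$ for $\tau\in[1,T]$, and assuming $\mathbb{E}_{p^{(\tau)}}|\ln p|<\infty$ (otherwise the bound is vacuous). I would also need $H(p_r)$ finite so that the Jensen step is meaningful.
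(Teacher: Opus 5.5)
Your proposal is correct and follows the paper's proof step for step. It uses the same split through $\prhatOne$, the same mean value argument on $\alpha \mapsto \ln \int p^{\alpha}\hat{f}$, the same Gibbs and $\hat{f} \le 1$ bounds, and the same Jensen-based lower bound on $\int p^{1/\tau}\hat{f}$ (the paper's Lemma~\ref{lm:partition-fn-lb-app}). The one cosmetic difference is that you bound $\mathbb{E}_{p_r}[\ln \hat{f}]$ by discarding the forget-class part of the loss and using $L(f^\ast) = H(S \mid \Z) \le H(S)$, whereas the paper goes through the KL decomposition of the excess risk and $\KL{p_r}{p} \ge 0$. Both give the identical constant. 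One small correction: your $Z_\tau = \mathbb{E}_{p_r}[p^{1/\tau}\hat{f}/p_r]$ should read $Z_\tau \ge \mathbb{E}_{p_r}[p^{1/\tau}\hat{f}/p_r]$ when $p_r$ lacks full support, which is still the direction you need.
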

Theorem~\ref{th:risk-to-retain-err-tempered} decomposes the Retain Error of the tempered estimator into two terms. The first, $\delta/(1-\gamma)$, matches the untempered guarantee (Theorem~\ref{th:risk-to-retain-err-untempered}) and reflects error from imperfect classification. The second is a \emph{tempering-induced bias} that arises even with the Bayes-optimal classifier, as tempering $p$ to $p^{(T)}\propto p^{1/T}$ introduces a global distortion. As expected, the bias vanishes when $T=1$.

This clarifies when tempering inflates Retain Error: the bias is amplified when the retain distribution $p_r$ is concentrated (small $H(p_r)$), as tempering flattens high-density regions and increases mismatch. Combined with Theorem~\ref{th:risk-to-forget-err-tempered}, this forms a fundamental tradeoff: $T$ should be large enough to tame $\norm{p_f}{\infty}$, yet not so large that tempering distortion overwhelms the gains. We empirically observe this tradeoff both in continuous-domain synthetic experiments (Appendix~\ref{app:synthetic-data}) and in LLM benchmarks (Appendix~\ref{app:tofu-temp-sensitivity}). We next present the LLM experiments in detail.

\section{Experiments}
\label{sec:experiments}

We evaluate our method in the practical LLM setting using the Task of Fictitious Unlearning (TOFU) benchmark \citep{maini:2024:tofu}. TOFU consists of a fine-tuned LLM and a dataset of question-answer pairs regarding 200 fictitious authors generated by GPT-4 \cite{openai:2023:gpt4}. The objective requires forgetting all question-answer pairs from a subset of authors (the forget set) while preserving knowledge of the remaining authors (retain set) as well as heldout real-world authors (RA) and general world facts (WF) datasets.

\subsection{Unlearning Metrics}
\label{sec:unlearning_metrics}

TOFU evaluates unlearning using several metrics built around the \emph{Truth Ratio} ($R_{\text{truth}}$), an intermediate statistic comparing the model’s relative probability of incorrect ``perturbed" answers $\mathcal{A}_{\mathrm{pert}}$ to that of the paraphrased true answer $\a^\dagger$ for a given question $\q$:
\begin{equation*}
    \RTruth(p_{\btheta},\q)
    =
    \frac{
        \frac{1}{\abs{\mathcal{A}_{\mathrm{pert}}}}
        \sum_{\atild \in \mathcal{A}_{\mathrm{pert}}}
        p_{\btheta}(\atild \mid \q)^{1/\abs{\atild}}
    }{
        p_{\btheta}(\a^\dagger \mid \q)^{1/\abs{\a^\dagger}}
    }.
\end{equation*}
From this statistic, TOFU defines two summary metrics.

\textbf{Forget Quality (FQ)} quantifies statistical indistinguishability from a retrained reference model. It is computed as the p-value of a two-sample Kolmogorov–Smirnov test comparing the Truth Ratio distributions generated by the unlearned model and the reference model over the forget set.

\textbf{Model Utility (MU)} assesses retained performance, defined as the harmonic mean, over the retain, RA, and WF datasets, of three components: (i) \emph{Probability}, the length-normalized probability of the true answer; (ii) \emph{ROUGE}, the ROUGE-L recall \citep{lin:2004:rouge}; and (iii) \emph{TR\texttt{+}}, an inverted confidence score given by $\text{TR\texttt{+}}(p_{\btheta},\q) = \max\{0, 1 - \RTruth(p_{\btheta},\q)\}$.
While prior work often conflates $\RTruth$ and TR\texttt{+} under the term ``Truth Ratio," we distinguish them explicitly for clarity. See Appendix~\ref{app:tofu-metrics} for detailed definitions for all metrics.

\textbf{MU-ROUGE (Generative Utility).}
While MU provides a broad summary of retained performance, it aggregates heterogeneous quantities with distinct semantic meanings (Probability, ROUGE, and TR\texttt{+}) into a single harmonic mean. In particular, increases in token-level probability or confidence can raise the overall score even when the quality of generated text does not improve. To isolate generative performance, we propose \emph{MU-ROUGE}, defined as the harmonic mean of ROUGE-L scores across the retain, RA, and WF datasets. By focusing on generation quality, MU-ROUGE provides a more direct and interpretable measure of retained utility. We report both MU and MU-ROUGE but emphasize MU-ROUGE as the primary utility metric.

\input{tofu_results.tex}

\subsection{Empirical Results}
\label{sec:tofu-performance}
We use the OpenUnlearning \cite{dorna:openunlearning:2025} implementation of TOFU along with the baseline methods GradAscent, GradDiff, IdkDPO \cite{maini:2024:tofu}; WGA \cite{wang:2025:wga}; SatImp \cite{yang:2025:satimp}; UnDIAL \cite{dong:2025:undial}; RMU \citep{li:2024:wmdp}; ULD \citep{ji:2024:logit-diff}; NPO \cite{zhang:2024:npo}; and SimNPO \cite{fan:2025:simnpo}. We use the Llama 3.1 8B model \cite{grattafiori:2024:llama3herd} and evaluate the tasks of forgetting 5\% and 10\% of the authors.

To evaluate each method, we tune hyperparameters using random seeds 1 and 2 and select the best-performing configuration. In this multi-objective setting, we prioritize Forget Quality, since a utility-focused method could simply return the original model. See Appendix~\ref{app:tofu-results-detailed} for details.

Average results over 5 trials are reported in Table~\ref{tab:tofu-combined} (see Appendix~\ref{app:tofu-results-detailed-table} for per-trial results). The summary metrics FQ and MU-ROUGE are shaded, with \alg highlighted in the bottom row. \alg consistently outperforms all baselines in FQ, and achieves highest MU-ROUGE among methods with competitive Forget Quality, demonstrating a favorable utility–forgetting tradeoff. Baselines like SimNPO can achieve higher MU, as the tempering of \alg can lower absolute probabilities without degrading the generation quality, indicated by the strong MU-ROUGE scores. We thus find MU to be an unreliable utility metric, as it can even exceed the utility of the original model before unlearning without corresponding improvements in its generations. In contrast, MU-ROUGE provides a more faithful measure of utility, as it does not meaningfully exceed the original model’s performance after unlearning.

\subsection{Unlearning Efficiency}
We evaluate computational efficiency, since unlearning is fundamentally motivated by avoiding the expensive process of retraining from scratch. Beyond its strong performance, we show below that \alg is highly efficient in both memory and computation.

\begin{table}[h!]
\centering
\small
\caption{Trainable parameters for each unlearning method on Llama 3.1 8B. ULD uses LoRA rank 32 on the first 16 layers, while \alg uses classifier hidden dimension $h=20$.}
\setlength{\tabcolsep}{8pt}
\renewcommand{\arraystretch}{1.2}
\begin{tabular}{l c}
\toprule
\textbf{Method} & \textbf{\# Parameters} \\
\midrule
Full Fine-Tuning & $8.03 \times 10^9$ \\
ULD (LoRA rank 32, first 16 layers) & $4.19 \times 10^7$ \\
\rowcolor{ourcolor}
\alg ($h=20$) & $2.65 \times 10^6$ \\
\bottomrule
\end{tabular}
\label{tab:tofu-num-params}
\end{table}
\textbf{Parameter Efficiency.} Table~\ref{tab:tofu-num-params} reports the number of trainable parameters for each method. For parameter-efficient methods (\alg and ULD), we report the configurations that achieve the best performance in Table~\ref{tab:tofu-combined}; specifically, \alg uses a classifier hidden dimension $h=20$, while ULD applies LoRA with rank 32 to the first 16 transformer layers. All other baselines are implemented as full fine-tuning. Under this setting, \alg trains only $0.03\%$ of the original model parameters.

\textbf{Runtime.} Since \alg trains the classifier head on pooled hidden states from the frozen base model, each sample’s feature representation needs to be computed only once. These fixed vectors then act as inputs for the surrogate classification task, eliminating repeated forward and backward passes through the full network required by fine-tuning.

Table~\ref{tab:tofu-runtimes} shows runtimes for the numbers of epochs for the results in Table~\ref{tab:tofu-combined} for each method. For fair comparison, we implemented \alg within the OpenUnlearning codebase using the baseline data-loading logic, performing forward passes through the base model for each batch; we refer to this as \emph{\alg (na{\"i}ve)}. To illustrate the full efficiency potential, we also preprocess a full epoch of data, storing pooled hidden states as input vectors, and record the classifier training time on these saved representations. We report these runtimes as \emph{\alg (preprocessed)}, showing that \alg can unlearn in a small fraction of the time required by full fine-tuning baselines.

\begin{table}[h!]
\centering
\small
\caption{Average runtime (seconds) and epochs for each unlearning method on the 5\% and 10\% TOFU splits using Llama 3.1 8B.}
\setlength{\tabcolsep}{3pt}
\renewcommand{\arraystretch}{1.2}
\begin{tabular}{l|cc|cc}
\toprule
 & \multicolumn{2}{c|}{\textbf{5\% Split}} & \multicolumn{2}{c}{\textbf{10\% Split}} \\
\textbf{Method} & \textbf{Epochs} & \textbf{Runtime (s)} & \textbf{Epochs} & \textbf{Runtime (s)} \\
\midrule
GradAscent          & 10  & 236  & 10  & 467  \\
GradDiff            & 20  & 801  & 10  & 823  \\
WGA                 & 10  & 529  & 5   & 524  \\
SatImp              & 10  & 443  & 10  & 1040 \\
UnDIAL              & 5   & 241  & 10  & 966  \\
RMU                 & 10  & 566  & 10  & 1100 \\
ULD                 & 20  & 63.7 & 20  & 121  \\
IdkDPO              & 10  & 837  & 10  & 1680 \\
NPO                 & 20  & 1270 & 10  & 1140 \\
SimNPO              & 10  & 456  & 10  & 904  \\
\rowcolor{ourcolor}
\alg (na{\"i}ve)     & 100 & 216  & 100 & 431  \\
\rowcolor{ourcolor}
\alg (preprocessed)  & 100 & \textbf{5.12} & 100 & \textbf{7.39} \\
\bottomrule
\end{tabular}
\label{tab:tofu-runtimes}
\end{table}

\section{Conclusion}

We introduced the \emph{Temper-Then-Tilt Unlearning} framework (\alg), which framed unlearning as density-ratio estimation via probabilistic classification. Our theoretical analysis established finite-sample guarantees for recovering the ground truth unlearned distribution in terms of the Retain and Forget Error metrics. Importantly, we showed that base model tempering was essential for unlearning highly concentrated forget-set distributions. Empirically, T3-Unlearning outperformed existing baselines on the TOFU benchmark; by training only a small linear classifier over frozen representations, it achieved superior forget quality and generative utility with minimal computational cost.

\section*{Acknowledgments}
This work was supported in part by NSF Grants 2019844, 2107037, 2112471, and 2505865, ONR Grant N00014-19-1-2566, the Machine Learning Lab (MLL) at UT Austin, the NSF AI Institute for Foundations of Machine Learning (IFML), and the Wireless Networking and Communications Group (WNCG) Industrial Affiliates Program. We are grateful for computing support on the Vista GPU Cluster through the Center for Generative AI (CGAI) and the Texas Advanced Computing Center (TACC) at the University of Texas at Austin.

\clearpage
\appendix

\makeatletter
\addtocontents{atoc}{} 

\let\orig@addcontentsline\addcontentsline
\renewcommand{\addcontentsline}[3]{\edef\@tempa{#1}\edef\@tempb{toc}\ifx\@tempa\@tempb
\orig@addcontentsline{atoc}{#2}{#3}\else
\orig@addcontentsline{#1}{#2}{#3}\fi
}
\makeatother

\clearpage
\section*{Contents of Appendix}
\setcounter{tocdepth}{3}
\makeatletter
\@starttoc{atoc}
\makeatother
\clearpage

\section{Proofs}

\subsection{Proof of Theorem \ref{th:risk-to-retain-err-untempered}}
\label{app:risk-to-retain-untemp-pf}

\begin{theorem*}
Let $\hat{f} \in \mathcal{F}$ satisfy the excess risk bound $L(\hat{f}) - L(f^\ast) \leq \delta$. Then the Retain Error \eqref{eq:retain-error} of the untempered estimate $\prhatOne$ in \eqref{eq:t3-estimator-def} satisfies
\begin{equation*}
   \RetainErr(\prhatOne)  \leq \frac{\delta}{1 - \gamma}.
\end{equation*}
\end{theorem*}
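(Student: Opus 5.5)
We want to show $\KL{p_r}{\prhatOne}\le \delta/(1-\gamma)$ where $\prhatOne = p\hat f/Z$ with $Z=\int p\hat f$. The plan is to write the excess cross-entropy risk as an expected KL between Bernoulli posteriors, keep only the $S=1$ part of that KL, and then use a normalization trick.

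\textbf{Step 1 (excess risk as a KL).} Conditioning on $\Z$ gives
\[
L(\hat f)-L(f^\ast)=\EV{\Z\sim p}{\mathrm{KL}\big(\mathrm{Ber}(f^\ast(\Z))\,\|\,\mathrm{Ber}(\hat f(\Z))\big)}\le\delta .
\]
Written out, this is
\[
\int p\Big[f^\ast\ln\tfrac{f^\ast}{\hat f}+(1-f^\ast)\ln\tfrac{1-f^\ast}{1-\hat f}\Big].
\]

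\textbf{Step 2 (express the retain error through $\hat f$).} Since $p f^\ast=(1-\gamma)p_r$, we have
\[
\frac{p_r}{\prhatOne}=\frac{f^\ast Z}{(1-\gamma)\hat f}.
\]
Therefore
\[
\RetainErr(\prhatOne)=\frac{1}{1-\gamma}\int p f^\ast\ln\frac{f^\ast}{\hat f}+\ln\frac{Z}{1-\gamma}.
\]

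\textbf{Step 3 (split the Bernoulli KL).} The key is to break the Bernoulli KL in Step 1 into its two "generalized KL" pieces, each of which is nonnegative pointwise:
\[
A=\int p\Big[f^\ast\ln\tfrac{f^\ast}{\hat f}-f^\ast+\hat f\Big],\qquad
B=\int p\Big[(1-f^\ast)\ln\tfrac{1-f^\ast}{1-\hat f}-(1-f^\ast)+(1-\hat f)\Big].
\]
The linear terms cancel between the two, so $A+B$ equals the expected Bernoulli KL, which is at most $\delta$. Each integrand has the form $a\ln(a/b)-a+b\ge0$, so $B\ge0$ and hence $A\le\delta$.

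\textbf{Step 4 (conclude).} Using $\int pf^\ast=1-\gamma$ and $\int p\hat f=Z$, we get
\[
A=\int pf^\ast\ln\frac{f^\ast}{\hat f}-(1-\gamma)+Z.
\]
Substituting into Step 2 gives
\[
\RetainErr(\prhatOne)=\frac{A}{1-\gamma}+\Big[\ln\frac{Z}{1-\gamma}-\frac{Z}{1-\gamma}+1\Big].
\]
The bracket is $\ln x-x+1\le0$ with $x=Z/(1-\gamma)$. Hence $\RetainErr(\prhatOne)\le A/(1-\gamma)\le\delta/(1-\gamma)$.

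The main obstacle is that the unknown normalizer $Z$ cannot be controlled directly. Splitting the Bernoulli KL into generalized-KL pieces, rather than dropping the $(1-f^\ast)$ term naively (which could be negative), is exactly what makes the $\ln x - x + 1$ term appear so it can be discarded. Integrability is routine: the bound is trivial if $A=\infty$, and $Z\le1$ is finite.
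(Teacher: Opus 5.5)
Your proof is correct, but it handles the unknown normalizer $Z=\int p\hat f$ differently from the paper. The paper splits the excess risk by class, as $(1-\gamma)\EV{\Z\sim p_r}{\ln (f^\ast/\hat f)}+\gamma\,\EV{\Z\sim p_f}{\ln((1-f^\ast)/(1-\hat f))}$. It then introduces the auxiliary normalized estimator $\pfhatOne\propto p(1-\hat f)$ and obtains the identity
\[
L(\hat f)-L(f^\ast)=(1-\gamma)\KL{p_r}{\prhatOne}+\gamma\KL{p_f}{\pfhatOne}-(1-\gamma)\ln x-\gamma\ln y,\qquad x=\tfrac{Z}{1-\gamma},\; y=\tfrac{1-Z}{\gamma}.
\]
Since $(1-\gamma)x+\gamma y=1$, it removes both log-normalizers at once by Jensen and then drops the nonnegative forget KL.

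You instead split the pointwise Bernoulli KL into two unnormalized (generalized) KL pieces. You never need $\pfhatOne$, and you absorb only the retain normalizer via $\ln x\le x-1$.

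The two arguments have exactly the same total slack. Because $(1-\gamma)(x-1)+\gamma(y-1)=0$, the paper's Jensen gap $-(1-\gamma)\ln x-\gamma\ln y$ equals $(1-\gamma)(x-1-\ln x)+\gamma(y-1-\ln y)$. Your proof simply splits that gap between the two classes.

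Your route buys locality and a finer exact identity, $(1-\gamma)\RetainErr(\prhatOne)=A-(1-\gamma)(x-1-\ln x)$, which bounds the retain error by the retain-side generalized KL alone. Symmetrically, $\gamma\KL{p_f}{\pfhatOne}\le B$, so summing the two recovers the paper's joint bound.

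The paper's route buys the class-conditional form of the excess risk itself. The paper reuses that form directly in the lower-bound construction of Theorem~\ref{th:forget-err-lb} and in Lemma~\ref{lm:partition-fn-lb-app}.
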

\begin{proof}[Proof of Theorem \ref{th:risk-to-retain-err-untempered}]

Although we focus on estimating $p_r$ as $\prhatOne \propto \hat{f} \cdot p$, we can similarly consider estimating $p_f$ as $\pfhatOne \propto (1-\hat{f}) \cdot p$ as a tool for our analysis. Thus, we define the following estimators:
\begin{equation*}
    \prhatOne(\z) = \frac{p(\z) \hat{f}(\z)}{\int_{\uvec} p(\uvec) \hat{f}(\uvec)} \quad \text{and} \quad \pfhatOne(\z) = \frac{p(\z) (1-\hat{f}(\z))}{\int_{\uvec} p(\uvec) (1-\hat{f}(\uvec))}
\end{equation*}

By construction, $\hat{f}(\z)$ estimates the class posterior $f^\ast(\z) = \Prob{}{S = 1 \mid \Z=\z} = (1-\gamma) \frac{p_r(\z)}{p(\z)}$. Define the Bernoulli PMF over the class label $s$ induced by $\hat{f}$ as
\begin{equation*}
    \hat{\pi}(s \mid \z) = 
    \begin{cases} 
         \hat{f}(\z)  & \text{if } s=1 \\
        1 - \hat{f}(\z) & \text{else } ,
    \end{cases}
\end{equation*}
and let $\pi^\ast(s \mid \z)$ denote the distribution induced by the true class posterior $f^\ast$. Recall that $\mathbb{P}$ is the base measure over pairs $(\z,s)$ defined in Section \ref{sec:probabilistic-model}. We can then translate the excess risk of $\hat{f}$ directly into a bound on the KL-divergence between the target distribution $p_r$ and our estimator $\hat{p}_r$. 
\begin{align*}
    L(\hat{f}) - \Loss{f^\ast} &=\EVBig{(\Z, S) \sim \mathbb{P}}{ \ln \frac{\pi^\ast (S \mid \Z)}{\hat{\pi}(S \mid \Z)}} \\
    &=\EVBig{\Z \mid S=1}{ \ln \frac{\pi^\ast (S \mid \Z)}{\hat{\pi}(S \mid \Z)} \bigm| S=1}\Prob{}{S=1} + \EVBig{\Z \mid S=0}{ \ln \frac{\pi^\ast (S \mid \Z)}{\hat{\pi}(S \mid \Z)} \bigm| S=0}\Prob{}{S=0} \\
    &=(1-\gamma) \EVBig{\Z \sim p_r}{ \ln \frac{\pi^\ast (1 \mid \Z)}{\hat{\pi}(1 \mid \Z)}} + \gamma \, \EVBig{\Z \sim p_f}{ \ln \frac{\pi^\ast (0 \mid \Z)}{\hat{\pi}(0 \mid \Z)}} \\
    &=(1-\gamma) \EVBig{\Z \sim p_r}{ \ln \paren{\frac{f^\ast (\Z)}{\hat{f}(\Z)}}} + \gamma \, \EVBig{\Z \sim p_f}{ \ln \paren{\frac{1 - f^\ast (\Z)}{1 - \hat{f}(\Z)}}}\\
    &= (1-\gamma)\EVBig{\Z \sim p_r}{\ln \paren{\frac{(1-\gamma)\frac{p_r(\Z)}{p(\Z)}}{ \frac{\prhatOne(\Z)}{p(\Z)} \int_{\uvec} p(\uvec)\hat{f}(\uvec) }}} + \gamma \, \EVBig{\Z \sim p_f}{\ln \paren{\frac{\gamma \frac{p_f(\Z)}{p(\Z)}}{\frac{\pfhatOne(\Z)}{p(\Z)} \int_{\uvec} p(\uvec)(1-\hat{f}(\uvec)) }}}\\
    &= (1-\gamma) \KL{p_r}{\prhatOne} + \gamma \KL{p_f}{\pfhatOne} - (1-\gamma) \ln \int_{\uvec} \frac{p(\uvec) \hat{f}(\uvec)}{1-\gamma} - \gamma \ln \int_{\uvec} \frac{p(\uvec) (1-\hat{f}(\uvec))}{\gamma}\\
    & \geq (1-\gamma) \KL{p_r}{\prhatOne} + \gamma \KL{p_f}{\pfhatOne} -\ln \paren{\int_{\uvec} p(\uvec) \hat{f}(\uvec) + p(\uvec)(1-\hat{f}(\uvec))} \\
    &= (1-\gamma) \KL{p_r}{\prhatOne} + \gamma \KL{p_f}{\pfhatOne} - \ln \int_{\uvec} p(\uvec) \\
    &= (1-\gamma) \KL{p_r}{\prhatOne} + \gamma \KL{p_f}{\pfhatOne}
\end{align*}

Note that the sole inequality above follows from the convexity of $\shortminus \!\ln(\,\cdot\,)$. Thus,
\begin{equation*}
    (1-\gamma) \KL{p_r}{\prhatOne} + \gamma \KL{p_f}{\pfhatOne} \leq L(\hat{f}) - \Loss{f^\ast} \leq \delta 
\end{equation*}

Since KL divergence is non-negative, we have that
\begin{equation*}
     \RetainErr(\prhatOne) \coloneq \KL{p_r}{\prhatOne} \leq \frac{\delta}{1-\gamma}.
\end{equation*}
\end{proof}

\subsection{Proof of Theorem \ref{th:risk-to-forget-err}}
\label{app:risk-to-forget-err-pf}
\begin{theorem*}
Let $\hat{f} \in \mathcal{F}$ satisfy the excess risk bound $L(\hat{f}) - L(f^\ast) \leq \delta$. Then the Forget Error \eqref{eq:forget-error} of the untempered estimate $\prhatOne$ in \eqref{eq:t3-estimator-def} satisfies
\begin{equation*}
    \ForgetErr(\prhatOne) \leq \norm{p_f}{\infty} \sqrt{\frac{2\delta}{1-\gamma}}.
\end{equation*}
\end{theorem*}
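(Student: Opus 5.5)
We bound the $p_f$-weighted $\ell_1$ error by pulling out the sup-norm of $p_f$ and then controlling the unweighted $\ell_1$ distance through Pinsker's inequality.

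\textbf{Step 1 (sup-norm extraction).} By definition,
\begin{equation*}
\ForgetErr(\prhatOne) = \int_{\z} p_f(\z)\,\abs{p_r(\z) - \prhatOne(\z)}\,d\z \le \norm{p_f}{\infty} \int_{\z} \abs{p_r(\z) - \prhatOne(\z)}\,d\z .
\end{equation*}
The right-hand integral is the unweighted $\ell_1$ distance $\norm{p_r - \prhatOne}{1}$, which equals $2\,\mathrm{TV}(p_r,\prhatOne)$. Extracting $\norm{p_f}{\infty}$ here is exactly the source of the sharpness dependence that Theorem~\ref{th:forget-err-lb} later shows is unavoidable.

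\textbf{Step 2 (Pinsker).} Pinsker's inequality, $\mathrm{TV}(P,Q) \le \sqrt{\tfrac12 \mathrm{KL}(P\,\|\,Q)}$, gives
\begin{equation*}
\norm{p_r - \prhatOne}{1} \le 2\sqrt{\tfrac12 \KL{p_r}{\prhatOne}} = \sqrt{2\,\KL{p_r}{\prhatOne}} .
\end{equation*}

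\textbf{Step 3 (invoke the retain bound).} Theorem~\ref{th:risk-to-retain-err-untempered} yields $\KL{p_r}{\prhatOne} \le \delta/(1-\gamma)$. Substituting into Step 2 and combining with Step 1 gives
\begin{equation*}
\ForgetErr(\prhatOne) \le \norm{p_f}{\infty}\sqrt{\frac{2\delta}{1-\gamma}} .
\end{equation*}

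There is no real obstacle: the heavy lifting is the excess-risk-to-KL decomposition already proved for Theorem~\ref{th:risk-to-retain-err-untempered}. The only points to check are minor.

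\begin{itemize}
\item Pinsker's inequality applies because $p_r$ and $\prhatOne$ are both normalized densities on $\mathcal{Z}$. This holds since $\prhatOne$ is normalized by $\int p \hat f$, which is finite and positive.
\item The constant must be tracked correctly. Pinsker is stated for TV, and the $\ell_1$ norm is twice TV, which produces the factor $\sqrt{2}$ in the bound.
\end{itemize}

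If $\norm{p_f}{\infty}=\infty$, the bound is vacuous and holds trivially.
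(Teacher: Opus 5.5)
Your proposal is correct and follows the paper's proof essentially verbatim. Both pull out $\norm{p_f}{\infty}$ to reduce to $\norm{p_r - \prhatOne}{1}$, apply Pinsker to get $\sqrt{2\,\KL{p_r}{\prhatOne}}$, and then plug in the bound from Theorem~\ref{th:risk-to-retain-err-untempered}.
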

\begin{proof}[Proof of Theorem \ref{th:risk-to-forget-err}]
We can immediately relate the $\ell_1$ error to the KL divergence bound in Theorem \ref{th:risk-to-retain-err-untempered} using Pinsker's inequality.
\begin{align*}
    \ForgetErr(\prhatOne) \coloneq \EVBig{\Z \sim p_f}{ \abs{ p_r(\Z) - \prhatOne(\Z) }} &= \int_{\z} p_f(\z) \abs{ p_r(\z) - \prhatOne(\z) }\\
    &\leq \norm{p_f}{\infty} \norm{p_r - \prhatOne}{1}\\
    & \leq \norm{p_f}{\infty} \sqrt{2 \cdot \KL{p_r}{\prhatOne}} \qquad \text{(Pinsker's)}\\
    &\leq \norm{p_f}{\infty} \sqrt{\frac{2\delta}{(1-\gamma)}} \qquad \text{(Theorem~\ref{th:risk-to-retain-err-untempered})}.
\end{align*}
\end{proof}

\subsection{Proof of Theorem \ref{th:forget-err-lb}}
\begin{theorem*}

Let $p_r$ and $p_f$ have disjoint supports, with $p_f$ uniform on its support. Then for any mixture weight $\gamma \in (0,1)$ and any $\delta > 0$, there exists a classifier $\hat{f}$ achieving excess risk $L(\hat{f}) - L(f^*) \leq \delta$ such that the Forget Error \eqref{eq:forget-error} of the untempered estimate $\prhatOne$ in \eqref{eq:t3-estimator-def} satisfies
\begin{equation*}
    \ForgetErr(\prhatOne) 
    \geq \norm{p_f}{\infty} \cdot \frac{\gamma \paren{ 1 - \exp \paren{-\frac{\delta}{\gamma}} }}{1- \gamma \exp \paren{-\frac{\delta}{\gamma}} }.
\end{equation*}
In particular, as $\delta \to 0$, we have $\ForgetErr(\prhatOne)  = \Omega \paren{\norm{p_f}{\infty} \! \! \cdot \delta}$.
\end{theorem*}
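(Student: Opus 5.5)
The plan is to build an explicit worst-case classifier that agrees with the Bayes posterior $f^\ast$ everywhere except on the forget support, where it assigns a small constant probability of being a retain sample. Let $A_r = \mathrm{supp}(p_r)$ and $A_f = \mathrm{supp}(p_f)$, which are disjoint. Then $f^\ast = (1-\gamma)p_r/p$ equals $1$ on $A_r$ and $0$ on $A_f$. I would define
\begin{equation*}
    \hat{f}(\z) = \begin{cases} 1 & \z \in A_r, \\ c & \z \in A_f, \end{cases}
\end{equation*}
with $c \in [0,1)$ to be chosen. Because $p_r$ vanishes on $A_f$, the Forget Error reduces to $\EV{\Z\sim p_f}{\prhatOne(\Z)}$, which involves only the leaked mass on $A_f$.

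\textbf{Step 1 (excess risk).} I would reuse the decomposition at the start of the proof of Theorem~\ref{th:risk-to-retain-err-untempered}:
\begin{equation*}
    L(\hat f) - L(f^\ast) = (1-\gamma)\,\EV{p_r}{\ln \tfrac{f^\ast}{\hat f}} + \gamma\,\EV{p_f}{\ln \tfrac{1-f^\ast}{1-\hat f}}.
\end{equation*}
The first term is $0$, since $\hat f = f^\ast = 1$ on $A_r$ (and $p_f$ puts no mass there, so $\ln(1-\hat f)$ is never evaluated on $A_r$). The second term equals $-\gamma\ln(1-c)$. Choosing $c = 1 - \exp(-\delta/\gamma)$ makes the excess risk exactly $\delta$.

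\textbf{Step 2 (estimator).} The normalizer is
\begin{equation*}
    \int p\hat f = (1-\gamma) + \gamma c = 1 - \gamma e^{-\delta/\gamma}.
\end{equation*}
On $A_f$ we have $p = \gamma p_f$, so $\prhatOne = \gamma c\, p_f / (1-\gamma e^{-\delta/\gamma})$ there.

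\textbf{Step 3 (Forget Error).} Since $p_r = 0$ on $A_f$,
\begin{equation*}
    \ForgetErr(\prhatOne) = \int_{A_f} p_f\,\prhatOne = \frac{\gamma c}{1-\gamma e^{-\delta/\gamma}}\,\norm{p_f}{2}^2 .
\end{equation*}
For uniform $p_f$ on $A_f$ we have $p_f \equiv 1/|A_f| = \norm{p_f}{\infty}$ on $A_f$, so $\int p_f^2 = \norm{p_f}{\infty}$. Substituting $c = 1 - e^{-\delta/\gamma}$ gives exactly the stated bound.

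\textbf{Step 4 (small-$\delta$ regime).} As $\delta\to 0$, $1-e^{-\delta/\gamma} \sim \delta/\gamma$ and the denominator tends to $1-\gamma$. The bound is therefore $\sim \norm{p_f}{\infty}\,\delta/(1-\gamma) = \Omega(\norm{p_f}{\infty}\,\delta)$.

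The calculations are routine; the only delicate point is Step 1. There I must make sure that setting $\hat f = 1$ on $A_r$ does not create an infinite $\ln(1-\hat f)$ term. This is handled by noting that $S=0$ occurs only on $A_f$, so the $\ln(1-\hat f)$ term is integrated against $p_f$ alone.
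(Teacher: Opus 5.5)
Your proposal is correct and follows the paper's proof essentially step for step. You use the same witness classifier ($1$ on $\supp(p_r)$, $1-e^{-\delta/\gamma}$ on $\supp(p_f)$), the same excess-risk decomposition saturated at exactly $\delta$, the same normalizer $1-\gamma e^{-\delta/\gamma}$, the same identity $\norm{p_f}{2}^2 = \norm{p_f}{\infty}$ for uniform $p_f$, and the same first-order expansion as $\delta \to 0$.
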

\begin{proof}[Proof of Theorem \ref{th:forget-err-lb}]
Let $\mathcal{Z}_r = \supp(p_r)$ and $\mathcal{Z}_f = \supp(p_f)$ where $\mathcal{Z}_r \cap \mathcal{Z}_f = \emptyset$. Define $p_f$ as the uniform distribution over $\mathcal{Z}_f$, while $p_r$ can be any arbitrary density supported on $\mathcal{Z}_r$.

We first construct a witness classifier $\hat{f}$ that satisfies the risk bound $L(\hat{f}) - L(f^*) \leq \delta$ but maximizes the unlearning error. Define $\hat{f}$ as:

\begin{equation}
    \label{eq:hatf-witness-lb-def-app}
    \hat{f}(\z) = \begin{cases} 
    1 & \text{if } \z \in \mathcal{Z}_r \\
    \epsilon & \text{if } \z \in \mathcal{Z}_f
    \end{cases}
\end{equation}
where $\epsilon \in (0, 1)$ is a constant scalar. Thus, $\hat{f}$ achieves perfect performance over the retain component support $\mathcal{Z}_r$ but has $\epsilon$ error over $\mathcal{Z}_f$. Note that the optimal classifier $f^\ast(\z)$ is the indicator function:
\begin{equation*}
    f^\ast(\z) = \indic{\z \in \mathcal{Z}_r}.
\end{equation*}
Thus, we compute the excess risk of $\hat{f}$ using the decomposition in the proof of Theorem \ref{th:risk-to-retain-err-untempered} in Appendix \ref{app:risk-to-retain-untemp-pf}:
\begin{align*}
    L(\hat{f}) - L(f^\ast) & =(1-\gamma) \EVBig{\Z \sim p_r}{ \ln \paren{\frac{f^\ast (\Z)}{\hat{f}(\Z)}}} + \gamma \, \EVBig{\Z \sim p_f}{ \ln \paren{\frac{1 - f^\ast (\Z)}{1 - \hat{f}(\Z)}}} \\
    &= \gamma \mathbb{E}_{\Z \sim p_f} [-\ln(1-\hat{f}(\Z))] \\
    &= -\gamma \ln(1-\epsilon).
\end{align*}

We saturate the risk bound by setting $-\gamma \ln(1-\epsilon) = \delta$, implying that $\epsilon = 1 - \exp\paren{-\frac{\delta}{\gamma}}$.

For our estimator $\prhat$, let $N$ denote the partition function:
\begin{equation*}
    N = \int_{\uvec} p(\uvec)\,\hat{f}(\uvec).
\end{equation*}

Note that for any $\z_f \in \mathcal{Z}_f$ we have that $p(\z_f) = \gamma p_f(\z_f)$ and $p_r(\z_f) = 0$. The Forget Error is then
\begin{equation*}
    \ForgetErr(\prhatOne) \coloneq \EV{\Z \sim p_f}{\abs{p_r(\Z) - \prhatOne(\Z)}} = \int_{\z \in \mathcal{Z}_f} p_f(\z) \abs{ 0 - \frac{\gamma p_f(\z) \epsilon}{N} } = \norm{p_f}{2}^2 \cdot \frac{\gamma \epsilon}{N}.
\end{equation*}

Since $p_f$ is uniform over $\mathcal{Z}_f$, its squared $\ell_2$-norm coincides with its peak value, which is the inverse of the support volume:
\begin{equation*}
    \norm{p_f}{2}^2 = \norm{p_f}{\infty} = \Vol(\mathcal{Z}_f)^{-1}.
\end{equation*}
Thus,
\begin{equation}
\label{eq:forget-err-lb-unsimp-app}
\ForgetErr(\prhat) = \norm{p_f}{\infty} \cdot \frac{\gamma \epsilon}{N}.
\end{equation}

We lastly compute $N$ exactly using the definition of $\hat{f}$ in \eqref{eq:hatf-witness-lb-def-app}:
\begin{align*}
    N &= \int_{\z} p(\z)\hat{f}(\z) \\
    &= \int_{\z \in \mathcal{Z}_r} (1-\gamma) p_r(\z) \hat{f}(\z) + \int_{\z \in \mathcal{Z}_f} \gamma p_f(\z) \hat{f}(\z) \\
    &= \int_{\z \in \mathcal{Z}_r} (1-\gamma) p_r(\z) + \int_{\z \in \mathcal{Z}_f} \gamma p_f(\z) \epsilon \\
    &= (1-\gamma) + \gamma \epsilon
\end{align*}

Substituting $N = (1-\gamma) + \gamma \epsilon$ and $\epsilon = 1 - \exp \paren{-\frac{\delta}{\gamma}}$ into the expression in \eqref{eq:forget-err-lb-unsimp-app}:
\begin{align*}
   \ForgetErr(\prhatOne) &\geq \norm{p_f}{\infty} \cdot \frac{\gamma \paren{ 1 - \exp \paren{-\frac{\delta}{\gamma}} }}{(1-\gamma) + \gamma \paren{1 - \exp \paren{-\frac{\delta}{\gamma}}}} \\
   &= \norm{p_f}{\infty} \cdot \frac{\gamma \paren{ 1 - \exp \paren{-\frac{\delta}{\gamma}} }}{1- \gamma \exp \paren{-\frac{\delta}{\gamma}} }
\end{align*}

This establishes the main theorem statement. We now analyze how this lower bound scales as $\delta \to 0$. Using the Taylor expansion $e^x = 1 + x + \mathcal{O}(x^2)$ and substituting $x = -\delta/\gamma$, we get:
\begin{equation*}
    \exp\left(-\frac{\delta}{\gamma}\right) = 1 - \frac{\delta}{\gamma} + \mathcal{O}(\delta^2).
\end{equation*}
Substituting this expansion:
\begin{align*}
    \norm{p_f}{\infty} \cdot \frac{\gamma \paren{ 1 - \exp \paren{-\frac{\delta}{\gamma}} }}{1- \gamma \exp \paren{-\frac{\delta}{\gamma}} } &=  \norm{p_f}{\infty} \cdot \frac{\gamma \left( 1 - \left(1 - \frac{\delta}{\gamma} + \mathcal{O}(\delta^2)\right) \right)}{1 - \gamma \left(1 - \frac{\delta}{\gamma} + \mathcal{O}(\delta^2)\right)} \\
    &=  \norm{p_f}{\infty} \cdot \frac{\gamma \left( \frac{\delta}{\gamma} - \mathcal{O}(\delta^2) \right)}{1 - \gamma + \delta - \mathcal{O}(\delta^2)} \\
    &=  \norm{p_f}{\infty} \cdot  \frac{\delta + \mathcal{O}(\delta^2)}{(1 - \gamma) + \delta + \mathcal{O}(\delta^2)}.
\end{align*}
The term $(1-\gamma)$ is a non-zero constant, so as $\delta \to 0$, the higher-order terms and the $\delta$ in the denominator become negligible compared to $(1-\gamma)$. Further since $(1-\gamma) > 0$ is fixed and independent of $\delta$, we obtain:
\begin{equation*}
   \ForgetErr(\prhatOne) = \Omega \paren{ \norm{p_f}{\infty} \! \! \cdot \delta}.
\end{equation*}
\end{proof}

\subsection{Supporting Lemmas}
Before proving the remaining theorem statements, we prove two key lemmas.

\begin{lemma}[Classifier $\ell_1$ Error Bound]
    \label{lm:EV-f-minus-fstar-app}
    Let $\hat{f} \in \mathcal{F}$ satisfy the population risk bound $L(\hat{f}) - L(f^\ast) \leq \delta$. Then,
    \begin{equation*}
        \EV{\z \sim p}{\abs{f^\ast(\z) - \hat{f}(\z)}} \leq \sqrt{\frac{\delta}{2}}.
    \end{equation*}
\end{lemma}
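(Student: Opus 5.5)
The plan is to write the excess risk as an expected conditional KL divergence and then apply Pinsker's inequality pointwise in $\z$. As in the proof of Theorem~\ref{th:risk-to-retain-err-untempered}, let $\pi^\ast(\cdot \mid \z)$ and $\hat{\pi}(\cdot \mid \z)$ be the Bernoulli distributions with success probabilities $f^\ast(\z)$ and $\hat{f}(\z)$. The first line of that proof gives
\begin{equation*}
    L(\hat{f}) - L(f^\ast) = \mathbb{E}_{(\Z,S)\sim\mathbb{P}}\Bigl[\ln \tfrac{\pi^\ast(S\mid\Z)}{\hat{\pi}(S\mid\Z)}\Bigr] = \mathbb{E}_{\Z \sim p}\bigl[\mathrm{KL}\bigl(\pi^\ast(\cdot\mid\Z)\,\|\,\hat{\pi}(\cdot\mid\Z)\bigr)\bigr].
\end{equation*}
This holds because, conditional on $\Z=\z$, the label $S$ is distributed as $\pi^\ast(\cdot\mid\z)$ and the marginal law of $\Z$ is $p$.

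Next, apply Pinsker's inequality to each pair of Bernoulli distributions. The total variation distance between $\mathrm{Bern}(a)$ and $\mathrm{Bern}(b)$ is $|a-b|$, so Pinsker gives $\mathrm{KL}(\mathrm{Bern}(a)\,\|\,\mathrm{Bern}(b)) \geq 2|a-b|^2$. Pointwise this yields $2\,|f^\ast(\z)-\hat{f}(\z)|^2 \leq \mathrm{KL}(\pi^\ast(\cdot\mid\z)\,\|\,\hat{\pi}(\cdot\mid\z))$. Taking expectations over $\Z\sim p$ gives $\mathbb{E}_{p}[|f^\ast-\hat{f}|^2] \leq \delta/2$.

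Finally, Jensen's inequality (equivalently Cauchy--Schwarz) gives $\mathbb{E}_p|f^\ast-\hat f| \leq \sqrt{\mathbb{E}_p|f^\ast-\hat f|^2} \leq \sqrt{\delta/2}$, which is the claimed bound.

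The main point needing care is the constant in Pinsker's inequality. With the convention $\mathrm{TV} = \sup_A |P(A)-Q(A)| = |a-b|$, Pinsker reads $\mathrm{TV} \leq \sqrt{\mathrm{KL}/2}$, and this is what produces the factor $\delta/2$. One should not mistakenly use the $\ell_1$ distance $2|a-b|$, which would give a different constant.

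A secondary point is that the KL identity for the excess risk needs $\hat{f}\in(0,1)$ wherever $f^\ast$ puts mass on the corresponding label. If instead the excess risk is infinite, the bound holds trivially, so no generality is lost.
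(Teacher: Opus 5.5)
Your proposal is correct and matches the paper's proof: both write the excess risk as $\mathbb{E}_{\Z\sim p}\bigl[\mathrm{KL}\bigl(\pi^\ast(\cdot\mid\Z)\,\|\,\hat{\pi}(\cdot\mid\Z)\bigr)\bigr]$, apply Pinsker's inequality pointwise to the Bernoulli pair (TV equals $|f^\ast-\hat f|$), and then use Jensen. The only difference is ordering: the paper applies Jensen to the concave square root after Pinsker, while you first bound the second moment and then use $\mathbb{E}|X|\le\sqrt{\mathbb{E}X^2}$, which is equivalent.
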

\begin{proof}[Proof of Lemma \ref{lm:EV-f-minus-fstar-app}]

We identify $\abs{\hat{f}(\z) - f^\ast(\z)}$ as the total variation distance between the Bernoulli random variables over the conditional class label $s \mid \z$ induced by $\hat{f}$ and $f^\ast$. Define the Bernoulli PMF over the class label $s$ induced by $\hat{f}$ as
\begin{equation*}
    \hat{\pi}(s \mid \z) = 
    \begin{cases} 
         \hat{f}(\z)  & \text{if } s=1 \\
        1 - \hat{f}(\z) & \text{else,}
    \end{cases}
\end{equation*}
and let $\pi^\ast(s \mid \z)$ denote the distribution induced by the true class posterior $f^\ast$. Then, 
\begin{align*}
    \EV{\Z \sim p}{\abs{\hat{f}(\Z) - f^\ast(\Z)}} &= \EV{\Z \sim p}{D_{\mathrm{TV}} \paren{\pi^\ast(\, \cdot \mid \Z), \hat{\pi}(\, \cdot \mid \Z)}}\\
    &\leq \EV{\Z \sim p}{\sqrt{\frac{1}{2}\KL{\pi^\ast(\, \cdot \mid \Z)}{\hat{\pi}(\, \cdot \mid \Z)}}} \qquad \text{(Pinsker's)} \\
    & \leq \sqrt{\EV{\Z \sim p}{\frac{1}{2}\KL{\pi^\ast(\, \cdot \mid \Z)}{\hat{\pi}(\, \cdot \mid \Z)}}} \qquad \text{(Jensen's)} \\
    &= \sqrt{\frac{1}{2} \paren{ L(\hat{f}) - L(f^\ast)}} \\
    & \leq \sqrt{\frac{\delta}{2}}
\end{align*}
    
\end{proof}

\begin{lemma}[Partition Function Lower Bound]
\label{lm:partition-fn-lb-app}
Let $\hat{f} \in \mathcal{F}$ satisfy the population risk bound $L(\hat{f}) - L(f^\ast) \leq \delta$. Then,
\begin{equation*}
    \int_{\z} p(\z)^{1/T} \hat{f}(\z) \geq (1-\gamma)^{\frac{T+1}{T}} \exp \paren{\frac{T-1}{T} H(p_r) - \frac{\delta - \gamma \ln \gamma}{1-\gamma}}
\end{equation*}
\end{lemma}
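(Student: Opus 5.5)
The plan is to lower bound the partition function by a single application of Jensen's inequality under $p_r$, and then control the resulting log-expectation using the excess-risk decomposition from the proof of Theorem~\ref{th:risk-to-retain-err-untempered}. Writing $\int p^{1/T}\hat f = \EV{\Z\sim p_r}{p(\Z)^{1/T}\hat f(\Z)/p_r(\Z)}$ and using concavity of $\ln$, we get
\begin{equation*}
\int_{\z} p(\z)^{1/T}\hat f(\z) \;\geq\; \exp\Bigl(\EV{\Z\sim p_r}{\tfrac{1}{T}\ln p(\Z) + \ln \hat f(\Z) - \ln p_r(\Z)}\Bigr).
\end{equation*}
So the task reduces to lower bounding $\EV{p_r}{\ln \hat f}$ and handling the $\ln p$ and $\ln p_r$ terms.

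\textbf{Step 1 (classifier term).} From the decomposition in the proof of Theorem~\ref{th:risk-to-retain-err-untempered},
\begin{equation*}
(1-\gamma)\,\EV{p_r}{\ln\tfrac{f^\ast}{\hat f}} = L(\hat f)-L(f^\ast) - \gamma\,\EV{p_f}{\ln\tfrac{1-f^\ast}{1-\hat f}}.
\end{equation*}
The last term can have either sign, so I would bound it by Jensen again. Using $1-f^\ast = \gamma p_f/p$,
\begin{equation*}
\EV{p_f}{\ln\tfrac{1-\hat f}{1-f^\ast}} \leq \ln \EV{p_f}{\tfrac{1-\hat f}{1-f^\ast}} = \ln\Bigl(\tfrac{1}{\gamma}\int p(1-\hat f)\Bigr) \leq \ln\tfrac1\gamma .
\end{equation*}
Hence $-\gamma\,\EV{p_f}{\ln\frac{1-f^\ast}{1-\hat f}} \leq -\gamma\ln\gamma$. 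This gives
\begin{equation*}
\EV{p_r}{\ln \hat f} \geq \EV{p_r}{\ln f^\ast} - \tfrac{\delta-\gamma\ln\gamma}{1-\gamma}.
\end{equation*}

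\textbf{Step 2 (substitute the Bayes posterior).} Plug in $\ln f^\ast = \ln(1-\gamma)+\ln p_r-\ln p$. The $\ln p_r$ terms cancel, and the exponent becomes
\begin{equation*}
\ln(1-\gamma) - \bigl(1-\tfrac1T\bigr)\EV{p_r}{\ln p} - \tfrac{\delta-\gamma\ln\gamma}{1-\gamma}.
\end{equation*}

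\textbf{Step 3 (entropy).} Bound the cross-entropy: $-\EV{p_r}{\ln p} = H(p_r)+\KL{p_r}{p} \geq H(p_r)$. Since $1-\frac1T\geq0$, this yields
\begin{equation*}
\int p^{1/T}\hat f \geq (1-\gamma)\exp\Bigl(\tfrac{T-1}{T}H(p_r)-\tfrac{\delta-\gamma\ln\gamma}{1-\gamma}\Bigr).
\end{equation*}
This implies the stated bound because $(1-\gamma)\geq(1-\gamma)^{(T+1)/T}$. Alternatively, if one instead uses $p\geq(1-\gamma)p_r$ inside the tempered factor, the extra $(1-\gamma)^{1/T}$ appears exactly as in the statement.

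The main obstacle is Step 1. The excess risk controls only a weighted sum of two KL-type terms, and the forget-side term need not be nonnegative for an arbitrary $\hat f$. The Jensen bound via the forget-side partition function $\int p(1-\hat f)\leq 1$ is what produces the $-\gamma\ln\gamma$ correction. Integrability (finiteness of $H(p_r)$ and of $\int p^{1/T}$) is assumed as in the Remark preceding the theorems.
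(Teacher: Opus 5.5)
Your proof is correct and uses the same ingredients as the paper's proof:
\begin{itemize}
\item Jensen's inequality under $p_r$;
\item the excess-risk decomposition from Theorem~\ref{th:risk-to-retain-err-untempered}, with the forget-side term bounded below by $\gamma\ln\gamma$ (your Jensen step on $\mathbb{E}_{p_f}$ is equivalent to the paper's use of $\KL{p_f}{\hat p_f}\ge 0$ together with $\int p(1-\hat f)\le 1$);
\item substitution of $f^\ast=(1-\gamma)p_r/p$;
\item Gibbs' inequality.
\end{itemize}

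The one real difference is the first step. The paper starts with the pointwise bound $p^{1/T}\ge((1-\gamma)p_r)^{1/T}$ and applies Jensen to $\mathbb{E}_{p_r}[p_r^{(1-T)/T}\hat f]$. It then separately uses $\mathbb{E}_{p_r}[\ln f^\ast]=\ln(1-\gamma)+\KL{p_r}{p}\ge\ln(1-\gamma)$. So it handles the $\frac1T\mathbb{E}_{p_r}[\ln p]$ and $-\mathbb{E}_{p_r}[\ln p]$ contributions separately, and the first costs an extra $\frac1T\ln(1-\gamma)$ in the exponent. You keep $\ln p$ until the two contributions merge into $-(1-\frac1T)\mathbb{E}_{p_r}[\ln p]$, and then apply Gibbs once. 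This avoids that loss and gives the constant $(1-\gamma)$ in place of $(1-\gamma)^{(T+1)/T}$. Your bound is therefore stronger by a factor $(1-\gamma)^{-1/T}$, and it would carry through to slightly better denominators in Theorems~\ref{th:risk-to-forget-err-tempered} and~\ref{th:risk-to-retain-err-tempered}. The paper's route, which you mention as an alternative, reproduces the stated constant exactly.

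One small correction: $\int p^{1/T}\hat f=\mathbb{E}_{p_r}[p^{1/T}\hat f/p_r]$ is only an inequality ($\ge$) when $p_f$ puts mass outside $\supp(p_r)$, as in the disjoint-support setting of Theorem~\ref{th:forget-err-lb}. Since the integrand is nonnegative, it goes in the direction you need.
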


\begin{proof}[Proof of Lemma \ref{lm:partition-fn-lb-app}]
We first lower bound the mixture density $p = (1-\gamma)p_r + \gamma p_f$ by just the retain component $(1-\gamma)p_r$:
\begin{align}
    \int_{\z} p(\z)^{1/T} \hat{f}(\z) &\geq \int_{\z} ((1-\gamma)p_r(\z))^{1/T} \hat{f}(\z) \nonumber \\
    &= (1-\gamma)^{1/T} \int_{\z} p_r(\z)^{1/T} \hat{f}(z) \nonumber \\
    \label{eq:NT-lb-app}
    &= (1-\gamma)^{1/T} \EV{\z \sim p_r}{p_r(\z)^{\frac{1-T}{T}} \hat{f}(\z)}.
\end{align}

Since the logarithm is concave, by Jensen's Inequality:
\begin{align}
    \EV{\Z \sim p_r}{p_r(\Z)^{\frac{1-T}{T}} \hat{f}(\Z)} &= \exp \paren{ \ln \EV{\Z \sim p_r}{p_r(\Z)^{\frac{1-T}{T}} \hat{f}(\z)} } \nonumber \\
    & \geq \exp \paren{\EV{\Z \sim p_r}{ \ln \paren{p_r(\Z)^{\frac{1-T}{T}} \hat{f}(\Z)}}} \nonumber \\
    &= \exp \paren{\EV{\Z \sim p_r}{\frac{1-T}{T} \ln p_r(\Z) + \ln \hat{f}(\Z)}} \nonumber \\
    \label{eq:NT-lb-exponential-app}
    &= \exp \paren{\frac{T-1}{T} H(p_r) + \EV{\Z \sim p_r}{\ln \hat{f}(\z)}}.
\end{align}

Then we can control the final term $\EV{\Z \sim p_r}{\ln \hat{f}(\Z)}$ using the excess risk bound $L(\hat{f}) - L(f^\ast) \leq \delta$. From the total risk bound, we have the intermediate bound (See Appendix \ref{app:risk-to-retain-untemp-pf}):

\begin{equation*}
(1-\gamma) \EVBig{\Z \sim p_r}{ \ln \paren{\frac{f^\ast (\Z)}{\hat{f}(\Z)}}} + \gamma \, \EVBig{\Z \sim p_f}{ \ln \paren{\frac{1 - f^\ast (\Z)}{1 - \hat{f}(\Z)}}} \leq \delta
\end{equation*}

We lower bound the second term as follows:
\begin{align*}
    \gamma \, \EVBig{\Z \sim p_f}{ \ln \paren{\frac{1 - f^\ast (\Z)}{1 - \hat{f}(\Z)}}} &= \gamma \, \EVBig{\Z \sim p_f}{\ln \paren{\frac{\gamma \frac{p_f(\Z)}{p(\Z)}}{\frac{\hat{p}_f(\Z)}{p(\Z)} \int_{\uvec} p(\uvec)(1-\hat{f}(\uvec)) }}} \\
    &= \gamma \paren{\KL{p_f}{\hat{p}_f} - \ln \int_{\uvec} \frac{p(\uvec)(1-\hat{f}(\uvec))}{\gamma}} \\
    & \geq -\gamma \ln \int_{\uvec} \frac{p(\uvec)(1-\hat{f}(\uvec))}{\gamma} \\
    &\geq  -\gamma \ln \int_{\uvec} \frac{p(\uvec)}{\gamma} \\
    &= \gamma \ln \gamma.
\end{align*}

The first inequality follows from the non-negativity of KL divergence, and the second comes from the fact that $\hat{f}(\uvec) \in [0,1]$. Thus,

\begin{equation*}
    (1-\gamma) \EV{\Z \sim p_r}{\ln \frac{f^\ast(\Z)}{\hat{f}(\Z)}} \leq \delta - \gamma \ln \gamma
\end{equation*}

Rearranging terms,
\begin{align}
\EV{\Z \sim p_r}{\ln \hat{f}(\Z)} & \geq - \frac{\delta - \gamma \ln \gamma}{1-\gamma} + \EV{\Z \sim p_r}{\ln f^\ast(\Z)} \nonumber \\
&= - \frac{\delta - \gamma \ln \gamma}{1-\gamma} + \EV{\Z \sim p_r}{\ln \paren{(1-\gamma) \frac{p_r(\Z)}{p(\Z)}}} \nonumber \\
&= - \frac{\delta - \gamma \ln \gamma}{1-\gamma} + \ln(1-\gamma) + \KL{p_r}{p} \nonumber \\
\label{eq:NT-lb-log-hatf-app}
&\geq - \frac{\delta - \gamma \ln \gamma}{1-\gamma} + \ln(1-\gamma).
\end{align}

Substituting the bound in \eqref{eq:NT-lb-log-hatf-app} for \eqref{eq:NT-lb-exponential-app} gives that
\begin{equation*}
    \EV{\Z \sim p_r}{p_r(\Z)^{\frac{1-T}{T}} \hat{f}(\Z)} \geq (1-\gamma)\exp \paren{\frac{T-1}{T} H(p_r) - \frac{\delta - \gamma \ln \gamma}{1-\gamma}}
\end{equation*}

Then by \eqref{eq:NT-lb-app},
\begin{equation*}
    \int_{\z} p(\z)^{1/T} \hat{f}(\z) \geq (1-\gamma)^{\frac{T+1}{T}} \exp \paren{\frac{T-1}{T} H(p_r) - \frac{\delta - \gamma \ln \gamma}{1-\gamma}}
\end{equation*}
\end{proof}

\subsection{Proof of Theorem \ref{th:risk-to-forget-err-tempered}}
\label{app:tempering-general-pf}
\begin{theorem*}
Let $\hat{f} \in \mathcal{F}$ satisfy the excess risk bound $L(\hat{f}) - L(f^\ast) \leq \delta$. Define the $\tau$-tempered oracle estimate $p_r^{(\tau)} \propto p^{1/\tau} \cdot f^\ast$ for $\tau \in [1,T]$. Consider $k \geq T$ such that
\begin{equation*}
    \int p^{\frac{k-T}{T(k-1)}} < \infty.
\end{equation*}
Then for some $\tau \in [1,T]$, the Forget Error \eqref{eq:forget-error} of the $T$-tempered estimate $\prhatT$ in \eqref{eq:t3-estimator-def} satisfies
\begin{align*}
     &\ForgetErr(\prhatT) \leq \nonumber \\
     & \quad \paren{1 \!\shortminus\! \tfrac{1}{T}} \norm{p_f}{2,p_r^{(\tau)}} \! \cdot \Std_{p_r^{(\tau)}} [\ln p] +     \frac{\norm{p_f}{\infty}^{1/T} \paren{\delta/2}^\frac{1}{2T}}{(1 \! \shortminus \! \gamma)^{\frac{T+1}{T}} \exp \paren{\frac{T-1}{T} H(p_r) + \frac{\gamma \ln \gamma}{1-\gamma}}} + \frac{ \norm{p_f}{\infty}^{1/T} \paren{\int_{\z} p(\z)^{\frac{k-T}{T(k-1)}}}^{\frac{k-1}{k}}  \! \! \!\paren{\delta/2}^{\frac{1}{2k}}}{(1\! \shortminus \! \gamma)^{\frac{2T+2}{T}} \exp \paren{\frac{2T-2}{T} H(p_r) \!\shortminus\! \frac{\delta - 2\gamma \ln \gamma}{1-\gamma}}}
\end{align*}
\end{theorem*}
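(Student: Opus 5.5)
The plan is to split the Forget Error with the oracle $T$-tempered estimate $p_r^{(T)} \propto p^{1/T} f^\ast$ as an intermediate, via the triangle inequality:
\begin{equation*}
\ForgetErr(\prhatT) \le \int p_f \,\abs{p_r - p_r^{(T)}} + \int p_f\, \abs{p_r^{(T)} - \prhatT}.
\end{equation*}
The first term is the tempering bias and involves no classifier error. The second is estimation error, which I will control with Lemma~\ref{lm:EV-f-minus-fstar-app} and Lemma~\ref{lm:partition-fn-lb-app}.

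\textbf{Bias term.} Since $p_r = p\, f^\ast/(1-\gamma)$, we have $p_r = p_r^{(1)}$. I will view $\tau \mapsto p_r^{(\tau)}(\z) = p(\z)^{1/\tau} f^\ast(\z)/Z^\ast(\tau)$ as a curve in $\tau$ and differentiate. A direct computation gives
\begin{equation*}
\partial_\tau p_r^{(\tau)}(\z) = -\tau^{-2}\, p_r^{(\tau)}(\z)\,\bigl(\ln p(\z) - \mathbb{E}_{p_r^{(\tau)}}[\ln p]\bigr).
\end{equation*}
Writing $p_r - p_r^{(T)} = -\int_1^T \partial_\tau p_r^{(\tau)}\,d\tau$ and moving the absolute value inside (Fubini) bounds the bias by
\begin{equation*}
\int_1^T \tau^{-2}\, \mathbb{E}_{p_r^{(\tau)}}\bigl[p_f \,\abs{\ln p - \mathbb{E}_{p_r^{(\tau)}}\ln p}\bigr]\,d\tau.
\end{equation*}
Cauchy--Schwarz under $p_r^{(\tau)}$ bounds the inner expectation by $\norm{p_f}{2,p_r^{(\tau)}}\Std_{p_r^{(\tau)}}[\ln p]$. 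Bounding this by its value at a maximizing $\tau\in[1,T]$, and using $\int_1^T\tau^{-2}d\tau = 1-1/T$, gives the first term with "some $\tau$". I expect this to be the main obstacle: the differentiation under the integral (of $Z^\ast(\tau)$) needs justification, for which the remark guaranteeing $\int p^{1/\tau}<\infty$ on $[1,T]$ is the hook. One also needs the supremum over $\tau$ to be attained, or replaced by a near-maximizer.

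\textbf{Estimation term.} Let $Z^\ast = \int p^{1/T} f^\ast$, $\hat Z = \int p^{1/T}\hat f$, and $\Delta = f^\ast - \hat f$. I will decompose
\begin{equation*}
p_r^{(T)} - \prhatT = \frac{p^{1/T}\Delta}{\hat Z} + p^{1/T} f^\ast\,\frac{\hat Z - Z^\ast}{Z^\ast \hat Z}.
\end{equation*}

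For the first piece, use $\abs{\Delta}\le 1$ so that $\abs{\Delta}\le\abs{\Delta}^{1/T}$, and write $p_f = p_f^{1/T}p_f^{1-1/T}$. Hölder with exponents $T/(T-1)$ and $T$ then gives
\begin{equation*}
\int p_f\, p^{1/T}\abs{\Delta} \le \norm{p_f}{\infty}^{1/T}\Bigl(\int p\abs{\Delta}\Bigr)^{1/T} \le \norm{p_f}{\infty}^{1/T}(\delta/2)^{1/2T}
\end{equation*}
by Lemma~\ref{lm:EV-f-minus-fstar-app}. Dividing by the lower bound on $\hat Z$ from Lemma~\ref{lm:partition-fn-lb-app} (dropping the $-\delta$ in the exponent, which only loosens it) yields the second term.

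For the second piece, first bound $\abs{\hat Z - Z^\ast}\le \int p^{1/T}\abs{\Delta}$. Split the integrand as $(p\abs{\Delta})^{1/k}\cdot p^{1/T-1/k}\abs{\Delta}^{1-1/k}$ and apply Hölder with exponents $k$ and $k/(k-1)$; the constraint $k\ge T$ makes the exponent nonnegative. After bounding $\abs{\Delta}\le1$ this gives
\begin{equation*}
\abs{\hat Z - Z^\ast} \le (\delta/2)^{1/2k}\Bigl(\int p^{\frac{k-T}{T(k-1)}}\Bigr)^{\frac{k-1}{k}}.
\end{equation*}
Next, by the same Hölder trick as before with $f^\ast\le1$ and $\int p = 1$, we have $\int p_f\,p^{1/T} f^\ast\le\norm{p_f}{\infty}^{1/T}$. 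Finally, bound both $Z^\ast$ (Lemma~\ref{lm:partition-fn-lb-app} with $\delta=0$) and $\hat Z$ below by Lemma~\ref{lm:partition-fn-lb-app}. Their product produces the squared denominator $(1-\gamma)^{\frac{2T+2}{T}}\exp(\tfrac{2T-2}{T}H(p_r) - \tfrac{\delta-2\gamma\ln\gamma}{1-\gamma})$, which gives the third term. Summing the three bounds completes the proof.
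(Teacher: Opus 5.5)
Your bias argument (differentiating in $\tau$ rather than in $\alpha = 1/\tau$, and applying Cauchy--Schwarz before the mean-value step) is equivalent to the paper's. Your bounds on $\abs{\hat Z - Z^\ast}$, on $\int p_f\, p^{1/T} f^\ast$, and on the product $Z^\ast \hat Z$ correctly yield the third term.

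\textbf{The gap is in the second term.} In your decomposition the $\Delta$-piece is divided by $\hat Z$. Lemma~\ref{lm:partition-fn-lb-app} only gives
\begin{equation*}
\hat Z \ge (1-\gamma)^{\frac{T+1}{T}}\exp\paren{\tfrac{T-1}{T}H(p_r) + \tfrac{\gamma\ln\gamma}{1-\gamma}}\, e^{-\delta/(1-\gamma)}.
\end{equation*}
The denominator in the statement is this lower bound without the factor $e^{-\delta/(1-\gamma)}$, so it is a \emph{larger} number. Deleting $-\delta/(1-\gamma)$ from the exponent of a lower bound strengthens the bound rather than loosening it. Hence $\hat Z \ge (\text{stated denominator})$ does not follow. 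As written, your argument gives the second term multiplied by $e^{\delta/(1-\gamma)}$, which is strictly weaker than the claim.

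\textbf{The fix.} Add and subtract $p^{1/T}\hat f/Z^\ast$ instead of $p^{1/T} f^\ast/\hat Z$:
\begin{equation*}
p_r^{(T)} - \prhatT = \frac{p^{1/T}\Delta}{Z^\ast} + p^{1/T}\hat f\,\frac{\hat Z - Z^\ast}{Z^\ast \hat Z}.
\end{equation*}
Now the $\Delta$-piece is divided by $Z^\ast$. Its lower bound is Lemma~\ref{lm:partition-fn-lb-app} applied to $f^\ast$ with $\delta = 0$, which is exactly the stated denominator. The normalization piece uses $\hat f \le 1$ in place of $f^\ast \le 1$. It is then bounded by the same $\norm{p_f}{\infty}^{1/T}$, $\abs{\hat Z - Z^\ast}$, and $Z^\ast \hat Z$ estimates you already have. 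This is precisely the decomposition the paper uses.
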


\begin{proof}[Proof of Theorem \ref{th:risk-to-forget-err-tempered}]

Define the ground truth $T$-tempered density $p_r^{(T)}(\z) \propto p(\z)^{1/T} \cdot f^\ast(\z)$. We then decompose the Forget Error into a tempering bias and estimation error.
\begin{align}
    \ForgetErr(\prhatT) \coloneq \EV{\Z \sim p_f}{\abs{p_r(\Z) - \prhatT(\Z)}} &= \EV{\Z \sim p_f}{\abs{p_r(\Z) - p_r^{(T)}(\Z) + p_r^{(T)}(\Z) - \prhatT(\Z)}} \nonumber \\
    \label{eq:tempered-forget-err-bias-var-app}
    & \leq \underbrace{\EV{\Z \sim p_f}{\abs{p_r(\Z) - p_r^{(T)}(\Z)}}}_{\text{Tempering Bias}} + \underbrace{\EV{\Z \sim p_f }{\abs{p_r^{(T)}(\Z) - \prhatT(\Z)}}}_{\text{Estimation Error}}
\end{align}

We first bound the tempering bias term. Using the fact that $f^\ast \propto p_r / p$, we have that
\begin{equation*}
    p_r^{(T)}(\z) \propto p_r(\z) p(\z)^{\frac{1}{T} - 1}.
\end{equation*}

Rather than working with the temperature indexed family $p_r^{\, (\, \cdot \,)}$ directly, we define a family of distributions $q_\alpha(\z)$ indexed by the inverse temperature $\alpha \in [1/T, 1]$, which interpolates between the tempered estimator and the target distribution:
\begin{equation*}
    q_\alpha(\z) \coloneq p_r^{\, (1/\alpha)} = \frac{p_r(\z) p(\z)^{\alpha - 1}}{N(\alpha)}, \quad \text{where } N(\alpha) = \int_{\uvec} p_r(\uvec) p(\uvec)^{\alpha - 1}.
\end{equation*}
Note that at the endpoints of our interval, we recover our distributions of interest:
\begin{align*}
    q_{1/T}(\z) &= p_r^{(T)}(\z) \quad \text{and} \quad q_{1}(\z) = p_r(\z).
\end{align*}
We first compute the derivative of the density $q_\alpha(\z)$ with respect to $\alpha$:
\begin{equation*}
    \frac{\partial q_\alpha(\z)}{\partial \alpha} = q_\alpha(\z) \frac{\partial \ln q_\alpha(\z)}{\partial \alpha} = q_\alpha(\z) \paren{ \ln p(\z) - \mathbb{E}_{\uvec \sim q_\alpha}[\ln p(\uvec)]}.
\end{equation*}
By the Fundamental Theorem of Calculus, we can express the difference between the target and the estimator as the integral of this derivative:
\begin{equation*}
    p_r(\z) - p_r^{(T)}(\z) = \int_{1/T}^{1} \frac{\partial q_\alpha(\z)}{\partial \alpha} d\alpha.
\end{equation*}
We substitute this into our error metric $\ForgetErr(\prhatT)$:
\begin{align*}
    \ForgetErr(\prhatT) \coloneq \mathbb{E}_{\Z \sim p_f} [|p_r(\Z) - p_r^{(T)}(\Z)|] &= \int_{\z} \paren{ p_f(\z) \left| \int_{1/T}^{1} \frac{\partial q_\alpha(\z)}{\partial \alpha} d\alpha \right|} \\
    &\le \int_{\z} \paren{p_f(\z) \int_{1/T}^{1} \left| \frac{\partial q_\alpha(\z)}{\partial \alpha} \right| d\alpha}.
\end{align*}
We apply Fubini's Theorem to swap the order of integration:
\begin{equation*}
    \ForgetErr(\prhatT) \leq \int_{1/T}^{1} \underbrace{ \left( \int_{\z} p_f(\z) \left| \frac{\partial q_\alpha(\z)}{\partial \alpha} \right| \right) }_{M(\alpha)} d\alpha.
\end{equation*}
Here, $M(\alpha)$ is a scalar function representing the instantaneous error rate at temperature $\alpha$. By the Mean Value Theorem for Integrals, there exists a fixed $\alpha^\ast \in [1/T, 1]$ such that:
\begin{equation*}
    \int_{1/T}^{1} M(\alpha) d\alpha = \paren{ 1 - \frac{1}{T} } M(\alpha^*).
\end{equation*}
Substituting the definition of the derivative back into $M(\alpha^*)$, we obtain:
\begin{equation*}
    \ForgetErr(\prhatT) \leq \left( 1 - \frac{1}{T} \right) \int_{\z} p_f(\z) q_{\alpha^*}(\z) \left| \ln p(\z) - \mathbb{E}_{q_{\alpha^*}}[\ln p] \right|.
\end{equation*}
Let $\tau = 1/\alpha^\ast$ be the corresponding temperature. To interpret the resulting bound, we apply H\"older's inequality:
\begin{align*}
    \ForgetErr(\prhatT) & \leq \paren{1 - \frac{1}{T}} \int_{\z} p_f(\z) p_r^{(\tau)}(\z) \abs{\ln p(\z) - \EV{\U \sim p_r^{(\tau)}}{\ln p(\U)}} \\
    &= \paren{1 - \frac{1}{T}} \int_{\z} \paren{p_f(\z) \sqrt{p_r^{(\tau)}(\z)}} \paren{ \sqrt{p_r^{(\tau)}(\z)} \abs{\ln p(\z) - \EV{\U \sim p_r^{(\tau)}}{\ln p(\U)}}} \\
    & \leq \paren{1 - \frac{1}{T}} \norm{p_f}{2,p_r^{(\tau)}} \cdot \Std_{\Z \sim p_r^{(\tau)}} [\ln p(\Z)],
\end{align*}

where $\norm{p_f}{2,p_r^{(\tau)}} = (\int p_r^{(\tau)} p_f^2)^{1/2}$.

We now bound the estimation error. Define the partition functions:
\begin{equation*}
    N_T = \int_{\z} p(\z)^{1/T} \hat{f}(\z) \qquad \text{and} \qquad N_T^\ast = \int_{\z} p(\z)^{1/T} f^\ast(\z)
\end{equation*}

We then expand the estimation error term using the triangle inequality:
\begin{align}
    \mathbb{E}_{\Z \sim p_f(\z)}& \left[\abs{p_r^{(T)}(\Z) - \prhatT(\Z)}\right] \nonumber \\
    &= \int_{\z} p_f(\z) \abs{p_r^{(T)}(\z) - \prhatT(\z)} \nonumber \\
    &= \int_{\z} p_f(\z) \abs{\frac{p(\z)^{1/T} f^\ast(\z)}{N_T^\ast} - \frac{p(\z)^{1/T} \hat{f}(\z)}{N_T}} \nonumber \\
    &= \int_{\z} p_f(\z) \abs{\frac{p(\z)^{1/T} f^\ast(\z)}{N_T^\ast} - \frac{p(\z)^{1/T} \hat{f}(\z)}{N_T^\ast} + \frac{p(\z)^{1/T} \hat{f}(\z)}{N_T^\ast}  - \frac{p(\z)^{1/T} \hat{f}(\z)}{N_T}} \nonumber \\
    & \leq \frac{1}{N_T^\ast}\int_{\z} p_f(\z) \abs{p(\z)^{1/T} f^\ast(\z) - p(\z)^{1/T} \hat{f}(\z)} + \int_{\z} p_f(\z) p(\z)^{1/T} \hat{f}(\z) \abs{\frac{1}{N_T^\ast} - \frac{1}{N_T}} \nonumber \\
    \label{eq:tempered-forget-err-var-decomp-app}
    &= \frac{1}{N_T^\ast}\int_{\z} p_f(\z) \abs{p(\z)^{1/T} f^\ast(\z) - p(\z)^{1/T} \hat{f}(\z)} + \int_{\z} p_f(\z) p(\z)^{1/T} \hat{f}(\z) \frac{\abs{N_T^\ast - N_T}}{N_T^\ast N_T}.
\end{align}
Using Lemma \ref{lm:partition-fn-lb-app}, we can immediately lower bound the partition functions:
\begin{align}
    N_T^\ast &\geq (1-\gamma)^{\frac{T+1}{T}} \exp \paren{\frac{T-1}{T} H(p_r) + \frac{\gamma \ln \gamma}{1-\gamma}} \\
    N_T & \geq (1-\gamma)^{\frac{T+1}{T}} \exp \paren{\frac{T-1}{T} H(p_r) - \frac{\delta - \gamma \ln \gamma}{1-\gamma}}.
\end{align}

Thus, we must control the remaining two integrals as well as the partition function difference $N_T^\ast - N_T$. Define the integrals:
\begin{align*}
    I_1 &=  \int_{\z} p_f(\z) \abs{p(\z)^{1/T} f^\ast(\z) - p(\z)^{1/T} \hat{f}(\z)} \\
    I_2 &= \int_{\z} p_f(\z) p(\z)^{1/T} \hat{f}(\z).
\end{align*}

We first upper bound $I_1$ by applying H\"older's inequality using conjugate pairs $\frac{T}{T-1}$ and $T$:
\begin{align*}
    I_1 = \int_{\z} p_f(\z) p(\z)^{1/T} \abs{f^\ast(\z) - \hat{f}(\z)}
    &\leq \norm{p_f(\z)}{\frac{T}{T-1}} \cdot \norm{p^{1/T} \cdot |f^\ast - \hat{f}|}{T}\\
    &= \paren{\int_{\z} p_f(\z)^{\frac{T}{T-1}}}^{\frac{T-1}{T}} \paren{\int_{\z} p(\z) \abs{f^\ast(\z) - \hat{f}(\z)}^T}^\frac{1}{T} \\
    &= \paren{\int_{\z} p_f(\z) \cdot p_f(\z)^{\frac{1}{T-1}}}^{\frac{T-1}{T}} \paren{\int_{\z} p(\z) \abs{f^\ast(\z) - \hat{f}(\z)}^T}^\frac{1}{T} \\
    & \leq \paren{\norm{p_f}{\infty}^{\frac{1}{T-1}} \int_{\z} p_f(\z)}^{\frac{T-1}{T}} \paren{\int_{\z} p(\z) \abs{f^\ast(\z) - \hat{f}(\z)}}^\frac{1}{T}\\
    &= \norm{p_f}{\infty}^{1/T} \EV{\z \sim p}{\abs{f^\ast(\z) - \hat{f}(\z)}}^{\frac{1}{T}}\\
    &\leq \norm{p_f}{\infty}^{1/T} \paren{\frac{\delta}{2}}^{\frac{1}{2T}} \qquad \text{(Lemma \ref{lm:EV-f-minus-fstar-app})}
\end{align*}

The second inequality follows from the fact that $f^\ast(\z),\hat{f}(\z) \in [0,1]$, so $|f^\ast(\z) - \hat{f}(\z)| \in [0,1]$. We bound $I_2$ using a similar technique:
\begin{align*}
    I_2 = \int_{\z} p_f(\z) p(\z)^{1/T} \hat{f}(\z) & \leq \int_{\z} p_f(\z) p(\z)^{1/T}\\
    & \leq \norm{p_f}{\frac{T}{T-1}} \cdot \| p^{1/T} \|_T \\
    &= \paren{\int_{\z} p_f^{\frac{T}{T-1}}(\z)}^{\frac{T-1}{T}} \paren{\int_{\z} p(\z)}^{\frac{1}{T}} \\
    &= \paren{\int_{\z} p_f(\z) \, p_f(\z)^{\frac{1}{T-1}}}^{\frac{T-1}{T}} \\
    & \leq \paren{\norm{p_f}{\infty}^{\frac{1}{T-1}} \int_{\z} p_f(\z)}^{\frac{T-1}{T}} \\
    &= \norm{p_f}{\infty}^{1/T}
\end{align*}

We lastly bound the difference in partition functions $\abs{N_T^\ast - N_T}$. Expanding their definitions:
\begin{align*}
    \abs{N_T^\ast - N_T} &= \abs{\int_{\z} p(\z)^{1/T} f^\ast(\z) - \int_{\z} p(\z)^{1/T} \hat{f}(\z)} \\
    & \leq \int_{\z} p(\z)^{1/T} \abs{\hat{f}(\z) - f^\ast(\z)}
\end{align*}

Applying H\"older's inequality with conjugate exponents $q = \frac{k}{k-1}$ and $q' = k$ for some $k \geq T$:
\begin{align*}
    \int_{\z} p(\z)^{1/T} \abs{\hat{f}(\z) - f^\ast(\z)} &= \int_{\z} p(\z)^{\frac{k-T}{kT}} \paren{p(\z)^{\frac{1}{k}}\abs{\hat{f}(\z) - f^\ast(\z)}} \\
    & \leq \norm{p^{\frac{k-T}{kT}}}{\frac{k}{k-1}} \cdot  \norm{p^{1/k} \cdot |\hat{f} - f^\ast|}{k} \\
    &= \paren{\int_{\z} p(\z)^{\frac{k-T}{kT} \cdot \frac{k}{k-1}}}^{\frac{k-1}{k}}\paren{\int_{\z} p(\z)\, \abs{\hat{f}(\z) - f^\ast(\z)}^k}^{1/k} \\
    & \leq \paren{\int_{\z} p(\z)^{\frac{k-T}{T(k-1)}}}^{\frac{k-1}{k}} \mkern-15mu \cdot \, \EV{\z \sim p}{\abs{\hat{f}(\z) - f^\ast(\z)}}^{1/k},
\end{align*}

where the last inequality again follows from the fact that $|f^\ast(\z) - \hat{f}(\z)| \in [0,1]$.

By Lemma \ref{lm:EV-f-minus-fstar-app}, we have that $\EV{\z \sim p}{\abs{\hat{f}(\z) - f^\ast(\z)}}^{1/k} \leq \paren{\frac{\delta}{2}}^{\frac{1}{2k}}$. Combining the above relationships gives that
\begin{equation*}
    \abs{N_T^\ast - N_T} \leq \paren{\int_{\z} p(\z)^{\frac{k-T}{T(k-1)}}}^{\frac{k-1}{k}} \paren{\frac{\delta}{2}}^{\frac{1}{2k}}
\end{equation*}

Combining all results:
\begin{align*}
     &\ForgetErr(\prhatT) \leq \nonumber \\
     & \quad \paren{1 \!\shortminus\! \tfrac{1}{T}} \norm{p_f}{2,p_r^{(\tau)}} \! \cdot \Std_{p_r^{(\tau)}} [\ln p] +     \frac{\norm{p_f}{\infty}^{1/T} \paren{\delta/2}^\frac{1}{2T}}{(1 \! \shortminus \! \gamma)^{\frac{T+1}{T}} \exp \paren{\frac{T-1}{T} H(p_r) + \frac{\gamma \ln \gamma}{1-\gamma}}} + \frac{ \norm{p_f}{\infty}^{1/T} \paren{\int_{\z} p(\z)^{\frac{k-T}{T(k-1)}}}^{\frac{k-1}{k}}  \! \! \!\paren{\delta/2}^{\frac{1}{2k}}}{(1\! \shortminus \! \gamma)^{\frac{2T+2}{T}} \exp \paren{\frac{2T-2}{T} H(p_r) \!\shortminus\! \frac{\delta - 2\gamma \ln \gamma}{1-\gamma}}}
\end{align*}
\end{proof}

\subsection{Proof of Theorem \ref{th:risk-to-retain-err-tempered}}
\label{app:risk-to-retain-err-tempered-pf}

\begin{theorem*}
Let $\hat{f} \in \mathcal{F}$ satisfy the excess risk bound $L(\hat{f}) - L(f^\ast) \leq \delta$. Define the $\tau$-tempered density $p^{(\tau)} \propto p^{1/\tau}$ for $\tau \in [1,T]$. Then for some $\tau \in [1,T]$, the Retain Error \eqref{eq:retain-error} of the $T$-tempered estimate $\prhatT$ in \eqref{eq:t3-estimator-def} satisfies
\begin{equation*}
    \RetainErr(\prhatT) \coloneq \KL{p_r}{\prhatT} \leq \frac{\delta}{1-\gamma} + \paren{1- \frac{1}{T}} \paren{\frac{\paren{\int_{\z} p^{1/\tau}(\z)} \EV{\Z \sim p^{(\tau)}}{\abs{\ln p(\Z)}} }{ (1-\gamma)^{\frac{\tau+1}{\tau}}\exp \paren{\frac{\tau-1}{\tau} H(p_r) - \frac{\delta - \gamma \ln \gamma}{1-\gamma}}} - H(p_r)}
\end{equation*}
\end{theorem*}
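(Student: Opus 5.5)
We split the Retain Error exactly into the untempered error plus a log-partition term that depends only on the temperature, and then control that term by the mean value theorem applied to its derivative in the inverse temperature. Write $N_\alpha := \int p^{\alpha}\hat{f}$ for $\alpha \in [1/T,1]$, so that $N_1 = \int p\hat f$ and $N_{1/T}$ is the partition function of $\prhatT$. Since $\prhatT(\z) = p(\z)^{1/T}\hat f(\z)/N_{1/T}$ and $\prhatOne(\z) = p(\z)\hat f(\z)/N_1$, we have pointwise
\begin{equation*}
\ln\frac{p_r(\z)}{\prhatT(\z)} = \ln\frac{p_r(\z)}{\prhatOne(\z)} + \paren{1-\tfrac{1}{T}}\ln p(\z) + \ln N_{1/T} - \ln N_1 .
\end{equation*}
Taking expectation under $p_r$ gives
\begin{equation*}
\RetainErr(\prhatT) = \RetainErr(\prhatOne) + \paren{1-\tfrac1T}\EV{p_r}{\ln p} + \ln N_{1/T} - \ln N_1 .
\end{equation*}
The first term is at most $\delta/(1-\gamma)$ by Theorem~\ref{th:risk-to-retain-err-untempered}, which produces the first summand of the bound.

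For the remaining terms, define $g(\alpha) = \ln N_\alpha$. Differentiating under the integral gives $g'(\alpha) = \int p^\alpha \hat f \ln p / N_\alpha$. By the mean value theorem there is $\alpha^\ast\in[1/T,1]$ with $\ln N_{1/T}-\ln N_1 = -(1-\tfrac1T)\,g'(\alpha^\ast)$; set $\tau = 1/\alpha^\ast$. We bound $|g'(\alpha^\ast)|$ crudely. In the numerator use $0\le\hat f\le 1$ and write $p^{1/\tau} = \paren{\int p^{1/\tau}}\,p^{(\tau)}$, so that $\abs{\int p^{1/\tau}\hat f\ln p} \le \paren{\int p^{1/\tau}}\EV{p^{(\tau)}}{\abs{\ln p}}$. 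The denominator $N_{1/\tau}$ is lower bounded by Lemma~\ref{lm:partition-fn-lb-app} with $T$ replaced by $\tau$. That lemma holds for any temperature $\ge 1$, and integrability of $p^{1/\tau}$ is guaranteed by the standing remark. Together these give exactly the fraction appearing in the statement.

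Next we handle $\paren{1-\tfrac1T}\EV{p_r}{\ln p}$. Since $p \ge (1-\gamma)p_r$, one could bound it below, but we need an upper bound. Instead we use $\EV{p_r}{\ln p} = -H(p_r) - \KL{p_r}{p}\le -H(p_r)$, which yields the $-H(p_r)$ term with coefficient $(1-\tfrac1T)$. Combining, we obtain
\begin{equation*}
\RetainErr(\prhatT)\le \frac{\delta}{1-\gamma} + \paren{1-\tfrac1T}\paren{\abs{g'(\alpha^\ast)} - H(p_r)},
\end{equation*}
which is the claimed bound.

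The main obstacle is justifying the mean value step: $g$ must be differentiable on $[1/T,1]$, with differentiation under the integral sign allowed. This needs $\int p^{\alpha}\abs{\ln p}<\infty$ uniformly for $\alpha$ in the interval, which we would derive by dominated convergence from the integrability assumption $\int p^{1/\tau}<\infty$ for all $\tau\le T$, since $p^\alpha\abs{\ln p}$ is dominated by $p^{\alpha\pm\epsilon}$. The sign bookkeeping is harmless: we take absolute values in $g'$, so the result holds regardless of the sign of $\ln p$.
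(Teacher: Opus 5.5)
Your proposal is correct and follows the paper's proof essentially step for step. The shared steps are:
\begin{itemize}
\item the same exact split of $\RetainErr(\prhatT)$ into $\RetainErr(\prhatOne) + (1-\tfrac1T)\mathbb{E}_{p_r}[\ln p] + \ln N_{1/T} - \ln N_1$;
\item the mean value theorem applied to $\alpha \mapsto \ln \int p^{\alpha}\hat f$;
\item $0\le \hat f\le 1$ in the numerator of the derivative, together with Lemma~\ref{lm:partition-fn-lb-app} at temperature $\tau$ in the denominator;
\item $\mathbb{E}_{p_r}[\ln p] = -H(p_r) - \KL{p_r}{p} \le -H(p_r)$.
\end{itemize}
Your regularity remark goes beyond what the paper provides, with one small correction: the mean value theorem only needs continuity on $[1/T,1]$ (dominate $p^{\alpha}\hat f$ by $p^{1/T}+p$) and differentiability on the open interval $(1/T,1)$, where $p^{\alpha}\abs{\ln p}$ is locally dominated by a multiple of $p^{1/T}+p$. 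Domination by $p^{\alpha\pm\epsilon}$ uniformly up to the endpoints would instead require integrability of $p^{1/T-\epsilon}$ (and of $p^{1+\epsilon}$ where $p\ge 1$), which the standing assumption does not give.
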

\begin{proof}[Proof of Theorem \ref{th:risk-to-retain-err-tempered}]
We first decompose the error $\RetainErr(\prhatT) \coloneq \KL{p_r}{\prhatT}$ into the KL between $p_r$ and the untempered estimate $\KL{p_r}{\prhatOne}$ as well as the discrepancy between the tempered and untempered estimates:

\begin{align}
    \KL{p_r}{\prhatT} &= \EV{\Z \sim p_r}{\ln \frac{p_r(\Z)}{\prhatT(\Z)}} = \EV{\Z \sim p_r}{\ln \frac{p_r(\Z)}{\hat{p}_r^{\, (1)}(\Z)} + \ln \frac{\hat{p}_r^{\, (1)}(\Z)}{\prhatT(\Z)}} \nonumber \\
    \label{eq:prhatT-KL-ub-app}
    & \leq \frac{\delta}{1-\gamma} + \EV{\Z \sim p_r}{\ln \frac{\hat{p}_r^{\, (1)}(\Z)}{\prhatT(\Z)}},
\end{align}

where the inequality follows from Theorem \ref{th:risk-to-retain-err-untempered}. To bound the remaining term, we define the partition functions
\begin{equation*}
    N_T = \int_{\z} p(\z)^{1/T} \hat{f}(\z) \quad \text{ and } \quad N_1 = \int_{\z} p(\z) \hat{f}(\z).
\end{equation*}

Then,
\begin{align}
    \EV{\Z \sim p_r}{\ln \frac{\hat{p}_r^{\, (1)}(\Z)}{\prhatT(\Z)}}
    &= \EV{\Z \sim p_r}{\ln \paren{\frac{N_T}{N_1}\frac{p(\Z) \hat{f}(\Z)}{p(\Z)^{1/T} \hat{f}(\Z)}}} \nonumber \\
    &= \EV{\Z \sim p_r}{\ln \paren{\frac{N_T}{N_1} p^{1 - 1/T}(\Z)}} \nonumber \\
    \label{eq:prhatOne-prhatT-disc-app}
    &= \paren{1 - \frac{1}{T}} \EV{\Z \sim p_r}{\ln p(\Z)} + \ln N_T - \ln N_1
\end{align}

We now focus on bounding the difference in log-partition functions $\ln N_T - \ln N_1$. Define the function $\psi: [\tfrac{1}{T} \, , 1] \rightarrow \Re$ as
\begin{equation*}
    \psi(\alpha) = \ln \int_{\z} p^{\alpha}(\z) \hat{f}(\z).
\end{equation*}

Then, we can express the log-partition difference in terms of $\psi$:
\begin{equation}
    \label{log-partition-to-psi-app}
    \ln N_T - \ln N_1 = 
    \ln \paren{\int_{\z} p(\z)^{1/T} \hat{f}(\z)} - \ln \paren{\int_{\z} p(\z) \hat{f}(\z)} = \psi(1/T) - \psi(1).
\end{equation}

By the Mean Value Theorem, there exists an intermediate temperature $\tau \in [1,T]$ such that:
\begin{equation*}
    \psi(1) - \psi(1/T) = \psi'(1/\tau)\paren{1- \frac{1}{T}}
\end{equation*}

We compute the derivative $\psi'(1/\tau)$:
\begin{equation*}
    \psi'(1/\tau) = \paren{\frac{d}{d \alpha} \ln \int_{\z} p(\z)^\alpha \hat{f}(\z) }\Big|_{\alpha = 1/\tau} = \frac{\int_{\z} p(\z)^{1/\tau} \ln p(\z) \hat{f}(\z)}{\int_{\z} p(\z)^{1/\tau} \hat{f}(\z) } = \EV{\Z \sim \hat{p}_r^{(\tau)}}{\ln p(\Z)}
\end{equation*}

Thus, $\ln N_T - \ln N_1 = -\psi'(1/\tau)\paren{1- \frac{1}{T}} = -(1 - \frac{1}{T}) \EV{\Z \sim \hat{p}_r^{(\tau)}}{\ln p(\Z)}$. Substituting this into \eqref{eq:prhatOne-prhatT-disc-app} gives that
\begin{equation}
    \label{eq:retain-err-temp-cost-app}
    \EV{\Z \sim p_r}{\ln \frac{\hat{p}_r^{\, (1)}(\Z)}{\prhatT(\Z)}} = \paren{1- \frac{1}{T}} \paren{ \EV{\Z \sim p_r}{\ln p(\Z)} - \EV{\Z \sim \hat{p}_r^{(\tau)}}{\ln p(\Z)}}.
\end{equation}

This represents the additional error incurred by tempering via \eqref{eq:prhatT-KL-ub-app}. We upper bound the first term on the RHS as
\begin{equation}
    \label{eq:retain-err-temp-cost-entropy-app}
    \EV{\Z \sim p_r}{\ln p(\Z)} \leq \EV{\Z \sim p_r}{\ln p_r(\Z)} = -H(p_r)
\end{equation}

We bound the second term as
\begin{align*}
    -\EV{\Z \sim \hat{p}_r^{(\tau)}}{\ln p(\Z)} &= - \int_{\z} \paren{\frac{p^{1/\tau}(\z)\hat{f}(\z)}{\int_{\uvec} p^{1/\tau}(\uvec) \hat{f}(\uvec)}} \ln p(\z) \\
    & \leq \int_{\z} \paren{\frac{p^{1/\tau}(\z)}{\int_{\uvec} p^{1/\tau}(\uvec) \hat{f}(\uvec)}} \abs{\ln p(\z)}
\end{align*}

Then by Lemma \ref{lm:partition-fn-lb-app}, we can lower bound the $\tau$-tempered partition function as
\begin{equation*}
    \int_{\uvec} p^{1/\tau}(\uvec) \hat{f}(\uvec) \geq (1-\gamma)^{\frac{\tau+1}{\tau}} \exp \paren{\frac{\tau-1}{\tau} H(p_r) - \frac{\delta - \gamma \ln \gamma}{1-\gamma}}.
\end{equation*}

Define the $\tau$-tempered distribution $p^{(\tau)}(\z) \propto p^{1/\tau}(\z)$. Then, the second term in \eqref{eq:retain-err-temp-cost-app} is bounded as
\begin{equation}
    \label{eq:retain-err-temp-cost-abslog-app}
    -\EV{\Z \sim \hat{p}_r^{(\tau)}}{\ln p(\Z)} \leq \frac{\paren{\int_{\z} p^{1/\tau}(\z)} \EV{\Z \sim p^{(\tau)}}{\abs{\ln p(\Z)}} }{(1-\gamma)^{\frac{\tau+1}{\tau}} \exp \paren{\frac{\tau-1}{\tau} H(p_r) - \frac{\delta - \gamma \ln \gamma}{1-\gamma}}}
\end{equation}

Substituting the bounds in \eqref{eq:retain-err-temp-cost-entropy-app} and \eqref{eq:retain-err-temp-cost-abslog-app} for \eqref{eq:retain-err-temp-cost-app} gives an upper bound on the cost of tempering for the Retain Error:
\begin{equation*}
    \EV{\Z \sim p_r}{\ln \frac{\hat{p}_r^{\, (1)}(\Z)}{\prhatT(\Z)}} \leq \paren{1 - \frac{1}{T}} \paren{-H(p_r) + \frac{\paren{\int_{\z} p^{1/\tau}(\z)} \EV{\Z \sim p^{(\tau)}}{\abs{\ln p(\Z)}} }{(1-\gamma)^{\frac{\tau+1}{\tau}} \exp \paren{\frac{\tau-1}{\tau} H(p_r) - \frac{\delta - \gamma \ln \gamma}{1-\gamma}}}}
\end{equation*}

Applying this to \eqref{eq:prhatT-KL-ub-app} gives the desired result:
\begin{equation*}
    \RetainErr(\prhatT) \coloneq \KL{p_r}{\prhatT} \leq \frac{\delta}{1-\gamma} + \paren{1- \frac{1}{T}} \paren{\frac{\paren{\int_{\z} p^{1/\tau}(\z)} \EV{\Z \sim p^{(\tau)}}{\abs{\ln p(\Z)}} }{ (1-\gamma)^{\frac{\tau+1}{\tau}}\exp \paren{\frac{\tau-1}{\tau} H(p_r) - \frac{\delta - \gamma \ln \gamma}{1-\gamma}}} - H(p_r)}
\end{equation*}
\end{proof}
\section{Tempering as Smoothing under a KL Constraint}
\label{app:tempering-as-smoothing-under-KL}
We briefly describe how tempering a distribution can be interpreted as maximizing entropy under a KL divergence constraint. Consider the following optimization problem, where $p$ is some fixed distribution and $C > 0$ is a constant:
\begin{equation}
    \label{eq:pring-max-entropy-def}
    \pring = \argmax_{\mu} H(\mu) \quad \text{s.t.} \quad \KL{\mu}{p} \leq C.
\end{equation}
This is a convex program in $\mu$. Solving the Lagrangian of \eqref{eq:pring-max-entropy-def} shows that
\begin{equation*}
    \pring \propto p^{1 / T(C)},
\end{equation*}
where $T(C) \geq 1$ is a function of the KL constraint parameter $C$. Thus, tempering can be seen as finding the distribution with maximum entropy within a KL ball of radius $C$ around $p$, i.e., the ``smoothest" distribution that stays close to $p$ in KL divergence.

\section{Logistic Regression Excess Risk}
\label{app:log-reg-risk}

\begin{proposition}
    \label{prop:expected-finsamp-risk}
    Assume $\Sn$ comprises $n$ i.i.d. samples of pairs $(\z,s)$. Let $\hat{\bphi}_{\lambda} = \argmin_{\bphi} \LossFinReg{\bphi}$ minimize the regularized finite-sample loss \eqref{eq:fin-samp-reg-loss}. Then
    \begin{equation*}
        \EV{\Sn}{\LossParen{\hat{\bphi}_{\lambda}} - \LossParen{\bphis}} \leq \lambda \norm{\bphis}{2}^2 + \frac{2 \EV{}{\norm{\z}{2}^2}}{n \lambda}.
    \end{equation*}

Moreover, setting $\lambda = \frac{1}{\norm{\bphis}{2}} \sqrt{\frac{2 \EV{}{\norm{\z}{2}^2} }{n}}$ gives that
    \begin{equation}
        \label{eq:expected-finsamp-risk-tuned}
        \EV{\Sn}{\LossParen{\hat{\bphi}_{\lambda}} - \LossParen{\bphis}} \leq 2 \norm{\bphis}{2} \sqrt{\frac{2  \EV{}{\norm{\z}{2}^2}}{n}}.
    \end{equation}
\end{proposition}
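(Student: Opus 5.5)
This is a regularized logistic regression excess-risk bound, and I would prove it with the standard stability argument for strongly convex regularized ERM (as in Shalev-Shwartz et al.). The setup is a linear logit model, $f_{\bphi}(\z)=\sigma(\bphi^\top\z)$. I assume the Bayes posterior is realizable, $f^\ast=f_{\bphis}$, so $\bphis$ minimizes the population risk $L$. Write the per-sample loss as $\ell_i(\bphi)=\ell(\sigma(\bphi^\top\z_i),s_i)$.

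Two facts about this loss drive everything. First, it is convex in $\bphi$. Second, its gradient is $(\sigma(\bphi^\top\z)-s)\z$, so it is $\norm{\z}{2}$-Lipschitz in $\bphi$.

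The proof then has four steps.

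\begin{enumerate}[label=(\roman*)]
\item \textbf{Decompose the excess risk.} Let $\hat L_\lambda$ be the regularized empirical objective \eqref{eq:fin-samp-reg-loss}, with minimizer $\hat{\bphi}_\lambda$. Then
\[
\mathbb{E}\big[L(\hat{\bphi}_\lambda)\big]-L(\bphis) = \mathbb{E}\big[L(\hat{\bphi}_\lambda)-\hat L(\hat{\bphi}_\lambda)\big] + \mathbb{E}\big[\hat L_\lambda(\hat{\bphi}_\lambda)-\lambda\norm{\hat{\bphi}_\lambda}{2}^2\big]-L(\bphis).
\]
Since $\hat{\bphi}_\lambda$ minimizes $\hat L_\lambda$, we have $\hat L_\lambda(\hat{\bphi}_\lambda)\le \hat L_\lambda(\bphis)$. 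The expectation of $\hat L_\lambda(\bphis)$ is $L(\bphis)+\lambda\norm{\bphis}{2}^2$. Dropping the nonpositive term $-\lambda\norm{\hat{\bphi}_\lambda}{2}^2$, the second bracket contributes at most $\lambda\norm{\bphis}{2}^2$. This leaves the generalization gap $\mathbb{E}[L(\hat{\bphi}_\lambda)-\hat L(\hat{\bphi}_\lambda)]$ to bound.

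\item \textbf{Replace-one symmetrization.} Let $\hat{\bphi}^{(i)}$ be the minimizer after swapping sample $i$ for an independent copy $(\z',s')$. By exchangeability,
\[
\mathbb{E}\big[L(\hat{\bphi}_\lambda)-\hat L(\hat{\bphi}_\lambda)\big]=\mathbb{E}\big[\ell_i(\hat{\bphi}^{(i)})-\ell_i(\hat{\bphi}_\lambda)\big].
\]
By the Lipschitz property, this is at most $\mathbb{E}\big[\norm{\z_i}{2}\,\norm{\hat{\bphi}^{(i)}-\hat{\bphi}_\lambda}{2}\big]$.

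\item \textbf{Stability from strong convexity.} The objective $\hat L_\lambda$ is $2\lambda$-strongly convex. Compare the two objectives at their respective minimizers and add the two strong-convexity inequalities. The resulting difference of objectives involves only the swapped terms, divided by $n$. This gives
\[
2\lambda\norm{\hat{\bphi}^{(i)}-\hat{\bphi}_\lambda}{2}^2 \le \frac{1}{n}\big(\norm{\z_i}{2}+\norm{\z'}{2}\big)\norm{\hat{\bphi}^{(i)}-\hat{\bphi}_\lambda}{2},
\]
so $\norm{\hat{\bphi}^{(i)}-\hat{\bphi}_\lambda}{2}\le(\norm{\z_i}{2}+\norm{\z'}{2})/(2\lambda n)$.

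\item \textbf{Combine.} Plugging step (iii) into step (ii) bounds the gap by $\mathbb{E}[\norm{\z_i}{2}(\norm{\z_i}{2}+\norm{\z'}{2})]/(2\lambda n)$. Using $ab\le(a^2+b^2)/2$, this is at most $\mathbb{E}\norm{\z}{2}^2/(\lambda n)$, which is within the stated $2\mathbb{E}\norm{\z}{2}^2/(n\lambda)$. Adding the regularization term from step (i) gives the first claim.
\end{enumerate}

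For the tuned bound \eqref{eq:expected-finsamp-risk-tuned}, minimize $\lambda a+b/\lambda$ over $\lambda$, with $a=\norm{\bphis}{2}^2$ and $b=2\mathbb{E}\norm{\z}{2}^2/n$. The minimizer is $\lambda=\sqrt{b/a}$, which gives the value $2\sqrt{ab}$. This matches the stated choice of $\lambda$ and the final bound exactly.

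The main obstacle is step (iii): getting the stability constant right requires tracking carefully which terms differ between the two empirical objectives, and the constant $2$ in the statement leaves slack there. A secondary issue is that the statement compares against $\bphis$ rather than $f^\ast$. I rely on realizability to identify the two and to use $\bphis$ as the comparator in step (i).
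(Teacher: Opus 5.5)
Your proof is correct and follows the paper's argument step for step: the same replace-one stability analysis from Shalev-Shwartz and Ben-David (Chapter 13), the same decomposition with $\bphis$ as the comparator, and the same convexity, $\|\z\|_2$-Lipschitz and $2\lambda$-strong-convexity ingredients. There are two small deviations. First, you sum both strong-convexity inequalities, whereas the paper uses the one for $L_n^\lambda(\cdot\,;\Sn)$ plus optimality of the swapped minimizer; this gives a stability constant $1/(2n\lambda)$ instead of $1/(n\lambda)$, and hence a bound a factor of $2$ sharper than stated. Second, your realizability assumption is unnecessary, since the argument holds for any fixed comparator, in particular $\bphis = \argmin_{\bphi} L(\bphi)$ as the paper defines it.
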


\begin{remark}[Class Imbalance]
\label{rem:class-imbalance-Sn}
While this analysis assumes samples in $\Sn$ are drawn i.i.d. from $\mathbb{P}$ in Section \ref{sec:probabilistic-model}, datasets in practice are often formed by subsampling $\Data_r$ against the full $\Data_f$, leading to a higher forget-set proportion than the population value $\gamma$. Appendix~\ref{app:class-imbalance} describes a correction that restores infinite-sample consistency under such class imbalance.
\end{remark}

\begin{proof}[Proof of Proposition~\ref{prop:expected-finsamp-risk}]
The proof follows the derivation in Chapter 13 of \citep{shalev:2014:understanding-ml}.

Denote the samples in $\Sn$ as $\Sn = \{(\z_i,s_i)\}_{i=1}^n$. We assume each $(\z_i,s_i)$ are i.i.d. samples from $\mathbb{P}$, and we denote the marginal distribution over the inputs as $p(\z)$. For clarity, we first define the notation for the finite-sample, population, and unregularized losses along with their minimizers.

We consider functions $\mathcal{F} = \{ f_{\bphi} \mid f_{\bphi}(\z) = \sigma \paren{\bphi^\top \z} \}$, where $\sigma$ denotes the sigmoid function. For a single sample $(\z_i,s_i)$, we denote the cross-entropy loss as $\ell(f_{\bphi}(\z_i), s_i)$.

$L$ denotes the population risk which is minimized by $\bphis$:
\begin{equation*}
    \bphis = \argmin_{\bphi} \Loss{\bphi} \qquad \Loss{\bphi} \coloneq \EV{(\Z,S) \sim \mathbb{P}}{\ell(f_{\bphi}(\Z), S)}.
\end{equation*}

$L_n( \, \cdot \ ; \Sn)$ denotes the finite-sample risk over the dataset $\Sn$:
\begin{equation*}
    L_n(\bphi \, ; \Sn) \coloneq \frac{1}{n}\sum_{i=1}^n \ell(f_{\bphi}(\z_i), s_i)
\end{equation*}

$L_n^\lambda ( \, \cdot \ ; \Sn)$ denotes the regularized finite-sample risk over $\Sn$, which is minimized by $\bphihatlam$:
\begin{equation*}
    \bphihatlam = \argmin_{\bphi} L_n^\lambda ( \bphi \, ; \Sn) \qquad L_n^\lambda ( \bphi \, ; \Sn) \coloneq \frac{1}{n}\sum_{i=1}^n \ell(f_{\bphi}(\z_i), s_i) + \lambda \norm{\bphi}{2}^2.
\end{equation*}
Although the main text assumes losses are evaluated on $\Sn$, we make the dataset explicit here for clarity, as the proof relies on a stability argument involving a swapped dataset. We fix an index $j \in \{1,\dots,n\}$, and let $\Sn^{(j)}$ be the dataset obtained by replacing the $j^\text{th}$ sample
$(\z_j,s_j)$ with an i.i.d. copy $(\z_j',s_j') \sim \mathbb{P}$. Let $\hat{\bphi}_\lambda^{(j)}$ denote the corresponding minimizer
of $L_n^\lambda$ computed on $\Sn^{(j)}$
\begin{equation}
    \label{eq:swapped-log-loss-app}
    \hat{\bphi}_\lambda^{(j)} = \argmin_{\bphi} \LossFinReg{\bphi \, ; \Sn^{(j)}}
\end{equation}
Since the logistic loss is convex and the regularization term is $2\lambda$-strongly convex, the objective $L_{n}^\lambda$ is $2\lambda$-strongly convex. Consequently, for any $\bphi$:
\begin{equation}
\label{eq:reg-log-loss-str-cvx-lb-app}
\LossFinReg{\bphihatlamj \, ; \Sn} - \LossFinReg{\bphihatlam \, ; \Sn} \geq
\lambda \norm{\bphihatlam - \bphihatlamj}{2}^2.
\end{equation}
The two empirical objectives computed on $\Sn$ and $\Sn^{(j)}$ differ in exactly one
sample, which implies that
\begin{align}
\LossFinReg{\bphihatlamj \, ; \Sn} - \LossFinReg{\bphihatlam \, ; \Sn} &= \LossFinReg{\bphihatlamj \, ; \Sn^{(j)}} - \LossFinReg{\bphihatlam \, ; \Sn^{(j)}} \nonumber \\
& \quad + \frac{1}{n}\paren{\ell(f_{\bphihatlamj}(\z_j), s_j) - \ell(f_{\bphihatlamj}(\z_j'), s_j') - \ell(f_{\bphihatlam}(\z_j), s_j) + \ell(f_{\bphihatlam}(\z_j'), s_j')} \nonumber \\
\label{eq:reg-log-loss-one-samp-diff-app}
& \leq \frac{1}{n}\paren{\ell(f_{\bphihatlamj}(\z_j), s_j) - \ell(f_{\bphihatlamj}(\z_j'), s_j') - \ell(f_{\bphihatlam}(\z_j), s_j) + \ell(f_{\bphihatlam}(\z_j'), s_j')},
\end{align}
where the inequality follows from the optimality of $\bphihatlamj$ over $\LossFinReg{\bphi \, ; \Sn^{(j)}}$ from \eqref{eq:swapped-log-loss-app}.

The sample-wise loss $\ell(f_{\bphi}(\z),s)$ is $\norm{\z}{2}$-Lipschitz continuous with respect to $\bphi$, so for any $\bphi$, $\bphi'$:
\begin{equation*}
    \abs{\ell \paren{f_{\bphi}(\z),s} - \ell \paren{f_{\bphi'}(\z),s}} \leq \norm{\bphi - \bphi'}{2} \norm{\z}{2}.
\end{equation*}
Applying this inequality to \eqref{eq:reg-log-loss-one-samp-diff-app} gives that
\begin{equation}
    \label{eq:reg-log-loss-one-samp-diff-ub-app}
    \LossFinReg{\bphihatlamj \, ; \Sn} - \LossFinReg{\bphihatlam \, ; \Sn} \leq \frac{1}{n} \norm{\bphihatlamj - \bphihatlam}{2} \paren{\norm{\z_j}{2} + \norm{\z_j'}{2}}.
\end{equation}
Combining the bounds in \eqref{eq:reg-log-loss-str-cvx-lb-app} and \eqref{eq:reg-log-loss-one-samp-diff-ub-app} yields a bound on the distance to the ``swapped" estimator: 
\begin{equation}
    \label{eq:reg-log-param-swap-norm-diff-app}
    \norm{\bphihatlamj - \bphihatlam}{2} \leq \frac{1}{n \lambda} \paren{\norm{\z_j}{2} + \norm{\z_j'}{2}}.
\end{equation}
Thus, on the sample $(\z_j,s_j)$, we can bound the additional loss incurred by the swapped estimator again using Lipschitzness:
\begin{align}
    \ell \paren{f_{\bphihatlamj}(\z_j),s_j} - \ell \paren{f_{\bphihatlam}(\z_j),s_j} & \leq \norm{\bphihatlamj - \bphihatlam}{2} \norm{\z_j}{2} \nonumber \\
    \label{eq:swapped-loss-one-samp-diff-app}
    &\leq \frac{1}{n \lambda} \paren{\norm{\z_j}{2} + \norm{\z_j'}{2}} \norm{\z_j}{2}
\end{align}

We now use the fact that the expected generalization gap when learning from the finite sample dataset $\Sn$ is equal to the expected additional loss incurred by the swapped estimator on the replaced sample (Theorem 13.2 of \citep{shalev:2014:understanding-ml}). Formally,
\begin{equation}
    \label{eq:shalev-thm13.2}
    \EV{\Sn}{L(\bphihatlam) - \LossFin{\bphihatlam}} = \EV{(\Sn, (\z_j',s_j'))}{\ell \paren{f_{\bphihatlamj}(\z_j),s_j} - \ell \paren{f_{\bphihatlam}(\z_j),s_j}}
\end{equation}

Applying \eqref{eq:shalev-thm13.2} to the bound in \eqref{eq:swapped-loss-one-samp-diff-app}:
\begin{align}
    \EV{\Sn}{L(\bphihatlam) - \LossFin{\bphihatlam}} &\leq \EV{}{\frac{1}{n \lambda} \paren{\norm{\z_j}{2} + \norm{\z_j'}{2}} \norm{\z_j}{2}} \nonumber \\
    &= \frac{1}{n \lambda} \paren{\EV{}{\norm{\z_j}{2}^2} + \mathbb{E}^2 \left[ \norm{\z_j}{2}\right]} \qquad \text{(since $\z_j$ and $\z_j'$ are i.i.d.)} \nonumber \\
    \label{eq:reg-loss-min-concentration-app}
    & \leq \frac{2}{n \lambda} \EV{}{\norm{\z}{2}^2},
\end{align}

where $\z \sim p$ is a generic input variable drawn from the marginal $p$. Since the finite-sample regularized loss is larger than the unregularized loss and minimized by $\bphihatlam$, we have that for any $\bphi$:
\begin{equation*}
    \LossFin{\bphihatlam} \leq \LossFinReg{\bphihatlam} \leq \LossFinReg{\bphi}.
\end{equation*}

Taking the expectation over $\Sn$:
\begin{equation}
    \label{eq:fin-loss-exp-ub-app}
    \EV{\S_n}{\LossFin{\bphihatlam}} \leq \Loss{\bphi} + \lambda \norm{\bphi}{2}^2
\end{equation}
Applying \eqref{eq:fin-loss-exp-ub-app} to $\bphi = \bphis$ and using the bound in \eqref{eq:reg-loss-min-concentration-app}:
\begin{equation}
    \label{eq:log-loss-risk-bd-unsimp}
    \EV{\Sn}{\Loss{\bphihatlam} - \Loss{\bphis}} - \lambda \norm{\bphis}{2}^2 \leq \EV{\Sn}{\Loss{\bphihatlam} - \LossFin{\bphihatlam}} \leq  \frac{2}{n \lambda} \EV{}{\norm{\z}{2}^2}
\end{equation}

Rearranging the outer inequality of \eqref{eq:log-loss-risk-bd-unsimp} gives the desired expression.
\end{proof}

\section{Density Ratio Estimation under Class Imbalance}
\label{app:class-imbalance}

The logistic regression excess risk bound assumes the samples in the unlearning dataset $\Sn = \{(\z_i,s_i) \}_{i=1}^n$ reflect the true population proportion of forget set samples $\gamma$, meaning that
\begin{equation*}
    \EV{\S_n}{\frac{1}{n} \sum_{i=1}^n \indic{s_i = 0}} = \gamma
\end{equation*}

However, as mentioned in Remark \ref{rem:class-imbalance-Sn}, practical unlearning datasets are often constructed by subsampling $\mathcal{D}_r$ along the full $\mathcal{D}_f$, resulting in an observed forget set proportion which is larger than the population value $\gamma$. In this section, we show how to modify our estimator in this case to account for this mismatch and maintain infinite-sample consistency.

Consider an $n$-sample unlearning dataset $\Sn^\mu = \{(\z_i^\mu,s_i^\mu) \}_{i=1}^n$ with forget set proportion $\mu$:
\begin{equation*}
    \mu = \frac{1}{\abs{\Sn^\mu}} \sum_{i=1}^n \indic{s_i^\mu = 0}
\end{equation*}

We still apply the probabilistic classification subproblem and recover an estimator $\hat{f}_{\mu}$ for the true conditional distribution $f_{\mu}^\ast(\z)$ defined as 
\begin{equation*}
    \hat{f}_{\mu}(\z) \approx f_{\mu}^\ast(\z) =  \mathbb{P}_{\mu}(S=1 \mid \Z=\z) = \frac{(1-\mu) p_r(\z)}{(1-\mu) p_r(\z) + \mu p_f(\z)},
\end{equation*}

where $\mathbb{P}_\mu$ denotes the base measure for the analogous generative process to the one defined in Section \ref{sec:probabilistic-model}, except when $\gamma = \mu$.

Define $p_\mu(\z)$ as the marginal over $\z$ induced by $\mathbb{P}_\mu$:
\begin{equation*}
    p_\mu(\z) = (1-\mu)p_r(\z) + \mu p_f(\z).
\end{equation*}

If we could access $p_\mu$, then we could employ the same estimator $\hat{p}_r$ for $p_r$ from our previous theoretical analysis as $\hat{p}_r(\z) \propto p_\mu(\z) \hat{f}_\mu(\z)$. However, we can only access $p(\z)$ which includes mixture weight $\gamma \neq \mu$. 

However, we can still recover an asymptotically consistent estimator of $p_r$ by modifying our transformation of $\hat{f}_{\mu}$ to extract the density ratio $\frac{p_r(\z)}{p(\z)}$ using the following relationship:
\begin{equation*}
    \frac{p_r(\z)}{p(\z)} = \frac{\mu f_{\mu}^\ast (\z) }{(\mu - \gamma) f_{\mu}^\ast(\z)  + \gamma(1-\mu)}
\end{equation*}

Thus, we can apply a different algebraic transformation of $p$  and $\hat{f}_\mu$ to recover an asymptotically consistent estimator $\bar{p}_r$:
\begin{equation*}
    \bar{p}_r(\z) \propto \frac{\mu \hat{f}_{\mu} (\z) p(\z)}{(\mu - \gamma) \hat{f}_{\mu}(\z) + \gamma(1-\mu)}
\end{equation*} 

When $\gamma = \mu$, we recover the same estimator $\bar{p}_r = \hat{p}_r \propto p \cdot \hat{f}_\mu$ as in the case where $\Sn$ contains i.i.d. samples from the model in Section \ref{sec:probabilistic-model}. While this new estimator requires knowledge of $\gamma$, we assume this is known from the original training process. Finally, we note that the original (incomputable) estimator $\hat{p}_r \propto \hat{f}_{\mu} \, \cdot p_{\mu}$ is equal to our new estimator $\bar{p}_r$ if and only if $\hat{f}_{\mu} = f^\ast_{\mu}$, i.e., the two estimators are only equivalent in the limit where they achieve optimality.

\section{Synthetic Data Experiments}
\label{app:synthetic-data}

We evaluate the proposed method on a synthetic benchmark generated from Gaussian component distributions to verify our theoretical insights. Specifically, we consider the setting where the retain and forget distributions follow univariate Gaussian distributions, denoted as $p_r = \mathcal{N}(\mu_r, v_r)$ and $p_f = \mathcal{N}(\mu_f, v_f)$, with means $\mu_r, \mu_f$ and variances $v_r, v_f$, respectively. 

\textbf{Data Generation and Classification.}
We construct the mixture distribution $p(z) = \gamma p_f(z) + (1-\gamma) p_r(z)$. We generate an $n$-sample dataset $\mathcal{S}_n = \{(z_i, s_i)\}_{i=1}^n$ by sampling the component label $s_i \sim \text{Bernoulli}(1-\gamma)$, where $s_i=1$ indicates $z_i \sim p_r$ and $s_i=0$ indicates $z_i \sim p_f$. Throughout these experiments, we set $\gamma = 0.1$, fix the means as $\mu_r=1.0$ and $\mu_f=0.0$, and fix the retain variance $v_r = 1.0$.

In this experimental setup, we consider the full \alg pipeline: we train a probabilistic classifier $\hat{f}$ using the generated data and construct the retain set distribution estimator $\prhatT \propto p^{1/T} \cdot \hat{f}$. In the univariate Gaussian setting, the Bayes optimal classifier $f^\ast(z) = \mathbb{P}(s=1 \mid z)$ takes the form of a sigmoid applied to a quadratic function of the input. Consequently, we train our surrogate classifier using a specific parameterization $\bphi$ by computing a quadratic feature map of the inputs $z \in \mathbb{R}$.

Let $\bvarphi(z) = [1, z, z^2]^\top \in \Re^3$ denote the fixed quadratic feature map, and define the family of classifiers $f_{\bphi}(\bvarphi(z)) = \sigma \paren{\bphi^\top \bvarphi(z) }$ parameterized by $\bphi \in \Re^3$, where $\sigma$ denotes the sigmoid function. For a regularization coefficient $\lambda \geq 0$, we minimize the regularized logistic regression objective $L_n^\lambda$ over the $n$-sample dataset $\Sn$ defined as
\begin{equation*}
    L_n^\lambda ( \bphi \, ; \Sn) \coloneq \frac{1}{n}\sum_{i=1}^n \ell(f_{\bphi}(\bvarphi(z_i)), s_i) + \lambda \norm{\bphi}{2}^2,
\end{equation*}
where $\ell$ denotes the binary cross-entropy loss. Let $\bphihatlam$ minimize $ L_n^\lambda ( \bphi \, ; \Sn)$, so our recovered classifier is $f_{\bphihatlam}$ and the corresponding density estimate is $\prhatT(z) \propto p^{1/T}(z) \cdot f_{\bphihatlam}(\bvarphi(z))$.

\textbf{Partition Function Estimation.}
When applying non-trivial tempering ($T > 1$), the normalized density estimate for recovered classifier parameters $\bphihatlam$ is given by:
\begin{equation}
    \prhatT(z) = \frac{p^{1/T}(z) \cdot f_{\bphihatlam}(\bvarphi(z))}{\int_u p^{1/T}(u) \cdot f_{\bphihatlam}(\bvarphi(u))}.
\end{equation}
Computing the partition function (the denominator) requires numerical integration. Since tempering a Gaussian distribution $\mathcal{N}(\mu, \sigma^2)$ by temperature $T$ effectively scales the variance to $T\sigma^2$, we employ an approximation that assumes negligible overlap between the tempered components. We approximate the base density term as $p(z)^{1/T} \approx (1-\gamma)^{1/T} p_r(z)^{1/T} +  \gamma^{1/T} p_f(z)^{1/T}$ and then estimate the integral via Monte Carlo sampling.

We similarly evaluate the Retain Error \eqref{eq:retain-error} and Forget Error \eqref{eq:forget-error} metrics via Monte Carlo sampling, as they both represent expectations under the respective ground truth densities $p_r$ and $p_f$.

\textbf{Connection to Theory.} The following experiments aim to validate the theoretical analysis in Section~\ref{sec:fin-samp-theory}, which relates Retain and Forget Errors (measures of unlearning quality) to the excess risk of the recovered classifier $f_{\bphihatlam}$ and underlying distribution parameters such as the peak forget density $\norm{p_f}{\infty}$ and forget proportion $\gamma$. While the theory abstracts classifier performance using the excess risk $\delta$, here we provide a more concrete perspective, linking $\delta$ to practical factors that govern its achievable value: the distribution parameters and the number of available samples $n$.

We report results in terms of the underlying quantities which directly control the key theoretical parameters $\norm{p_f}{\infty}$ (peak forget component density) and $\delta$ (classifier excess risk). To probe the effect of $\norm{p_f}{\infty}$, we vary the forget variance $v_f$, noting that for a Gaussian, the peak density is $\norm{p_f}{\infty} = (2 \pi v_f)^{-1/2}$. Thus, sending $v_f \to 0$ allows us to study the impact of highly concentrated $p_f$ on unlearning. Regarding classifier performance, the primary practical factor controlling excess risk is the number of samples $n$. In the notation of this specific parameterization, the excess risk associated with a candidate set of parameters $\bphi$ is defined
\begin{equation*}
    \delta = \EV{(Z,S)}{\ell(f_{\bphi}(\bvarphi(Z)),S)} - \EV{(Z,S)}{\ell(f^\ast(Z),S)},
\end{equation*}
where $f^\ast(z) \coloneq \Prob{}{S=1\mid Z=z}$ is the Bayes optimal classifier.

As shown in Appendix~\ref{app:log-reg-risk}, for regularized logistic regression with properly tuned $\lambda$, the expected excess risk decays as $\mathcal{O}(n^{-1/2})$. While this rate also depends on the data distribution variance, in Experiment~2 we fix distribution parameters and vary $n$ to isolate its effect.

\textbf{Experiment 1: Robustness to Forget Distribution Sharpness.}
We analyze how tempering affects unlearning error for varying levels of forget variance $v_f$. As $v_f \to 0$, the forget component’s peak density grows unbounded, $\norm{p_f}{\infty} \to \infty$. This experiment tests our theoretical predictions that (i) tempering reduces Forget Error when $\norm{p_f}{\infty}$ is large (small $v_f$), and (ii) as $\norm{p_f}{\infty}$ increases ($v_f$ decreases), larger base model temperatures $T$ are required to minimize Forget Error. In particular, we aim to illustrate the tradeoff predicted by Theorem~\ref{th:risk-to-forget-err-tempered}: increasing $T$ initially lowers Forget Error, but eventually the tempering-induced bias dominates, especially when $p_r$ and $p_f$ have significant overlap.

For each distinct value of $v_f$, we first perform a hyperparameter search for the regularization coefficient $\lambda$ that minimizes the average population risk over 10 trials. Using the selected $\lambda$, we train classifiers on fresh $n$-sample datasets and evaluate unlearning performance across a range of temperatures $T \in [1.0, 3.0]$. We report the average Retain and Forget Errors as a function of base model temperature $T$ over 200 trials in Figure \ref{fig:vary-v_f-app} for three levels of $v_f$, with all other data distribution parameters fixed.

\begin{figure}[h]
    \centering
    \begin{subfigure}{0.48\textwidth}
        \centering
        \includegraphics[width=\linewidth]{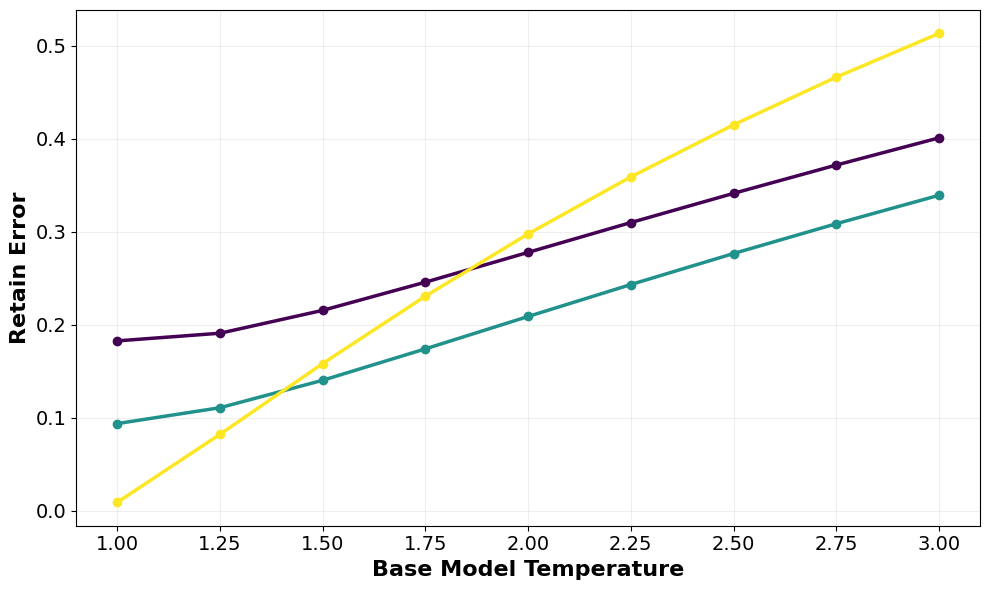}
        \caption{Retain Error}
    \end{subfigure}
    \hfill
    \begin{subfigure}{0.48\textwidth}
        \centering
        \includegraphics[width=\linewidth]{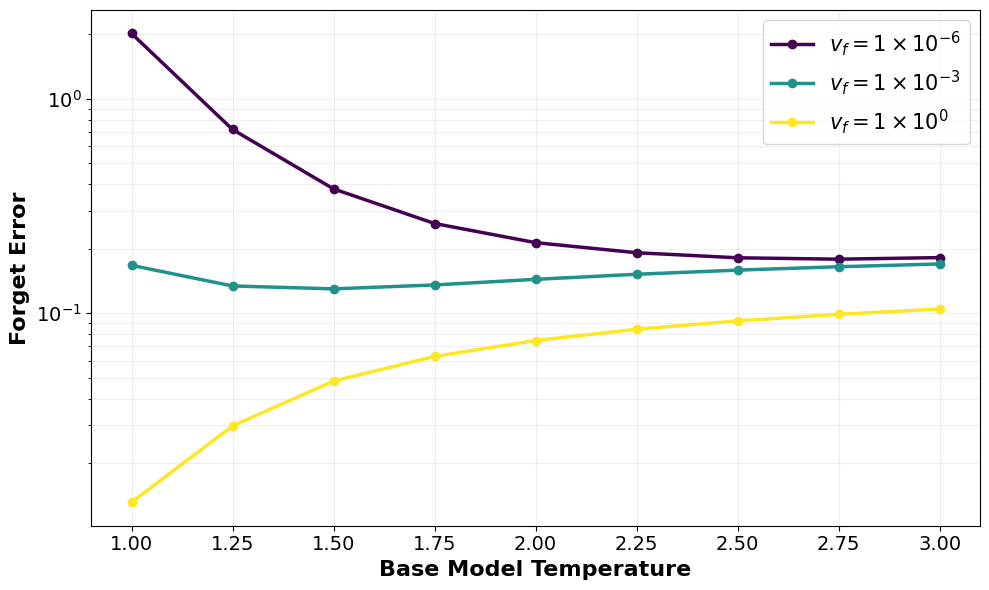}
        \caption{Forget Error}
    \end{subfigure}
    \caption{Retain and Forget Errors as a function of forget set variance $v_f$ and base model temperature $T$.}
    \label{fig:vary-v_f-app}
\end{figure}

For the smallest forget variance $v_f = 1 \times 10^{-6}$, corresponding to the largest peak density $\norm{p_f}{\infty}$ (purple curve), the Forget Error decreases monotonically with $T$. This matches our theory: tempering is essential to control the large peak, and since $p_r$ and $p_f$ are effectively disjoint in this case, the tempering-induced bias in Theorem~\ref{th:risk-to-forget-err-tempered} is negligible. As a result, increasing $T$ continues to reduce Forget Error with minimal cost. For the intermediate variance $v_f = 1 \times 10^{-3}$ (blue curve), we observe a clear tradeoff: Forget Error initially decreases with $T$, but then rises as tempering-induced bias begins to dominate. For the largest variance $v_f = 1 \times 10^0$, where the forget distribution is flattest (yellow curve), Forget Error is minimized at $T=1.0$, i.e., the untempered estimator. This aligns with theory, which predicts that tempering primarily benefits sharply concentrated forget components.

In terms of Retain Error, increasing $T$ consistently worsens performance across all settings. We observe that the largest forget variance setting $v_f=1 \times 10^0$ is most sensitive to tempering, as the Retain Error increases at a faster rate with respect to $T$ compared to when $v_f \in \{1 \times 10^{-3}, 1 \times 10^{-6} \}$. This is consistent with Theorem~\ref{th:risk-to-retain-err-tempered}, which quantifies the tempering-induced bias on the Retain Error $\RetainErr(\prhatT)$. The effects of the forget variance are captured in the numerator of the bias term which depends on the entire mixture $p$:
\begin{equation*}
    \RetainErr(\prhatT) \coloneq \KL{p_r}{\prhatT} \leq \frac{\delta}{1-\gamma} + \underbrace{\paren{1- \frac{1}{T}} \paren{\frac{\paren{\int_{\z} p^{1/\tau}(\z)} \EV{\Z \sim p^{(\tau)}}{\abs{\ln p(\Z)}} }{ (1-\gamma)^{\frac{\tau+1}{\tau}}\exp \paren{\frac{\tau-1}{\tau} H(p_r) - \frac{\delta - \gamma \ln \gamma}{1-\gamma}}} - H(p_r)}.}_{\text{Tempering-Induced Bias}}
\end{equation*}

Specifically, the rate at which Retain Error increases as a function of $T$ depends on the forget variance through the mixture functional
\begin{equation*}
    \paren{\int_{\z} p^{1/\tau}(\z)} \EV{\Z \sim p^{(\tau)}}{\abs{\ln p(\Z)}} \coloneq \int_{\z} p^{1/\tau}(\z) \abs{\ln p(\z)}.
\end{equation*}

Intuitively, we expect this term to grow as $p$ becomes flatter, aligning with our observation that Retain Error is more sensitive to increases in $T$ when the variance of one of its modes $v_f$ increases. To see this, consider the simplified setting where $p$ is a single univariate Gaussian, centered at $0$ with variance $v$, denoted $p_{v}$. We compute this integral explicitly, aiming to show it is increasing in $v$. To simplify the calculation, consider the regime where $v \geq \frac{1}{2\pi}$, so that $\ln p_v(z) \leq 0$ for all $z$. Then,
\begin{align*}
\int_{z} p_v^{1/\tau}(z) \abs{\ln p_v(z)}
&= \int_{z} \sqrt{\tau} (2\pi v)^{\frac{\tau-1}{2\tau}} p_{v\tau}(z) \paren{\frac{z^2}{2v} + \frac{1}{2} \ln(2 \pi v)} \\
&= \sqrt{\tau}  (2\pi v)^{\frac{\tau-1}{2\tau}} \paren{\frac{\tau}{2} + \frac{1}{2} \ln (2 \pi v)}
\end{align*}
Since $\tau \geq 1$, the result is increasing in $v$. While this calculation is carried out for a single Gaussian density, it captures the dominant tail-driven behavior of the mixture functional $\int_{z} p^{1/\tau}(z) \abs{\ln p(z)}$. In particular, increasing the variance of one mixture component spreads probability mass into regions where $\abs{\ln p(z)}$ is large, and the tempered weighting $p^{1/\tau}(z)$ places increased emphasis on these regions. As a result, the mixture functional becomes more sensitive to variance inflation of an individual component. This aligns with the empirically observed trend that the Retain Error is more sensitive in practice to increases in $T$ when the forget set variance is set to the largest experimental value $v_f = 1 \times 10^0$.

These results highlight the practical role of tempering: for data distributions with sharply concentrated forget components (small $v_f$), tempering can substantially reduce Forget Error relative to the untempered baseline ($T=1.0$) without requiring additional data, i.e., for the same classifier quality and the same excess risk $\delta$.

\textbf{Experiment 2: Impact of Sample Size.}  
In this experiment, we fix the forget component variance at $v_f = 10^{-3}$ and vary the dataset size $n$. Following the same procedure as above, we select the optimal regularization coefficient and then plot the unlearning errors as functions of the base model temperature $T$. This setup tests the regimes in which tempering is beneficial: as $n$ increases, the expected classifier excess risk $\delta$ decays at an $\mathcal{O}(n^{-1/2})$ rate. According to Theorem~\ref{th:risk-to-forget-err-tempered}, tempering is most useful when the classifier exhibits higher excess risk, which occurs with smaller sample sizes, relative to the sharpness of $p_f$. Consequently, we expect that as $n$ grows, the temperature $T$ that minimizes Forget Error approaches the untempered limit $T=1.0$. We plot results in Figure~\ref{fig:vary-n-app}.

\begin{figure}[h]
    \centering
    \begin{subfigure}{0.48\textwidth}
        \centering
        \includegraphics[width=\linewidth]{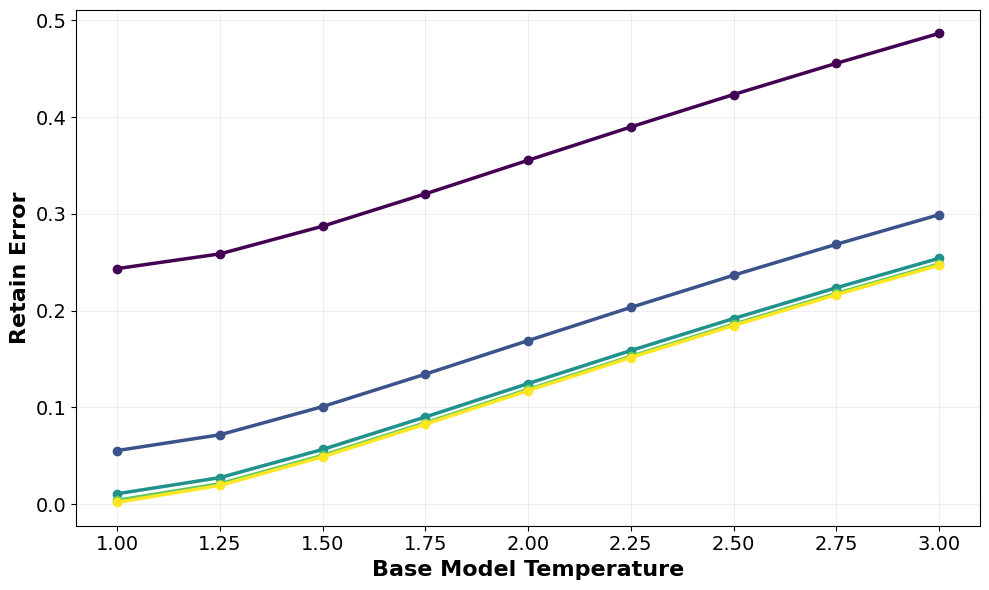}
        \caption{Retain Error}
    \end{subfigure}
    \hfill
    \begin{subfigure}{0.48\textwidth}
        \centering
        \includegraphics[width=\linewidth]{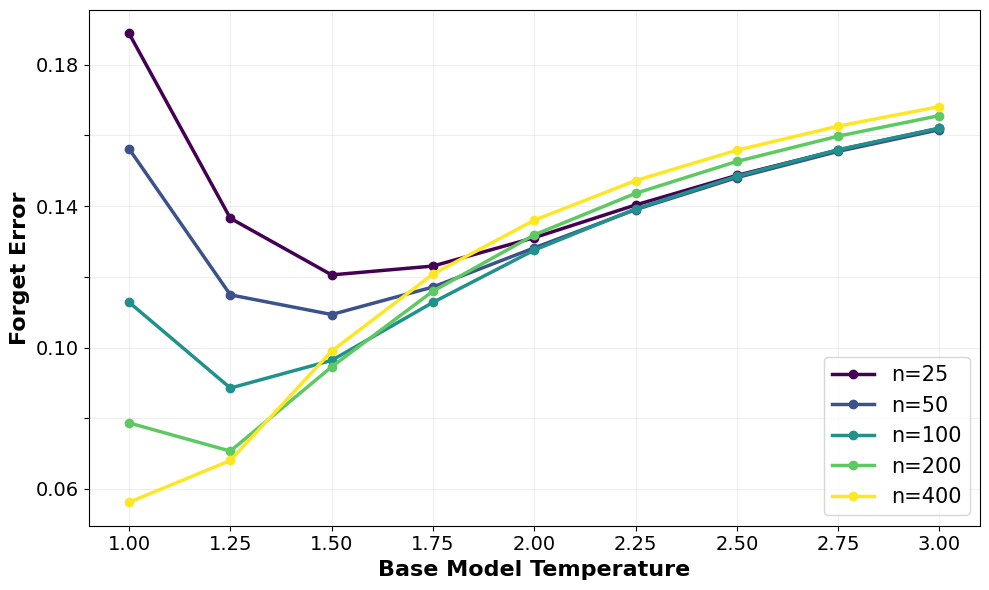}
        \caption{Forget Error}
    \end{subfigure}
    \caption{Retain and Forget Errors as a function of sample size $n$ and temperature $T$.}
    \label{fig:vary-n-app}
\end{figure}

We observe the behavior predicted by our theory: as the number of samples $n$ increases (corresponding to the brighter curves) the temperature $T$ that minimizes Forget Error decreases. In the limiting case of $n=400$ (yellow curve), Forget Error is minimized at $T=1.0$. This confirms the theoretical tradeoff: tempering reduces the dependence of Forget Error on the sharpness $\norm{p_f}{\infty}$, but slows convergence with respect to classifier risk. When $n$ is small, the classifier is weaker, and tempering provides a robust mechanism to control Forget Error. However, as $n$ grows, we learn stronger classifiers which achieve smaller values of excess risk, so the need for tempering lessens.

Thus, for a fixed sample size $n$, tempering is an effective tool in reducing Forget Error when the forget component sharpness dominates the achievable excess risk. As expected, the Retain Error increases monotonically with $T$ across all settings.

For the setting $n=25$, we visualize an example learned classifier and the resulting density estimates $\prhatT$ for various values of $T$ in Figure \ref{fig:n-25-classifier-app} below, showing how tempering reduces the information leakage from the forget set (decreasing Forget Error) while increasing mismatch to $p_r$ in regions where the retain set samples are more likely (increasing Retain Error).

\clearpage

\begin{figure}[t]
    \centering
    \includegraphics[width=0.7\linewidth]{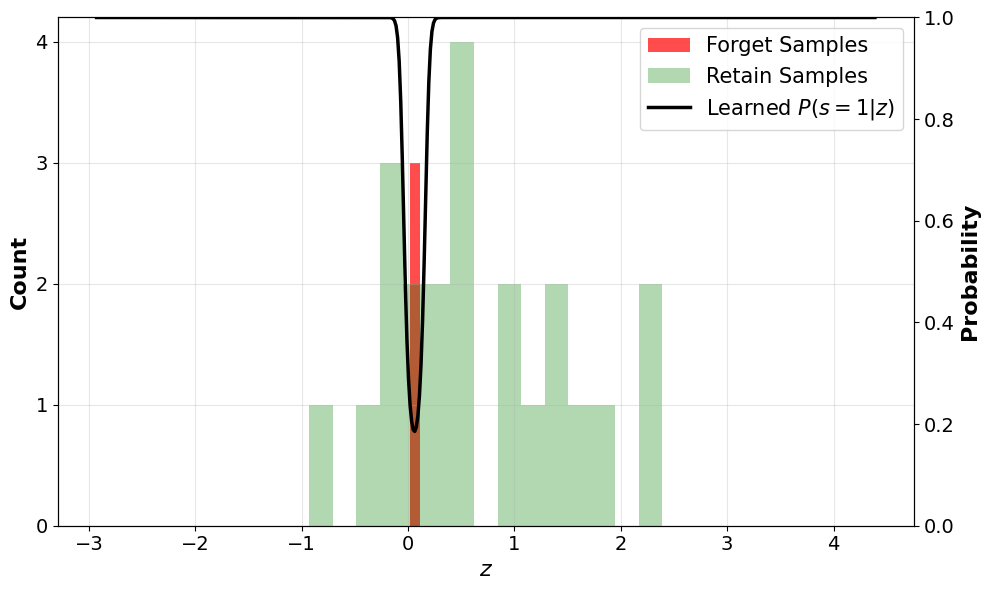}
    \caption{Example data and learned classifier for Experiment 2  with $n=25$ samples. The y-axis tracks both the sample counts for the generated samples (left) and the probabilities assigned by the classifier (right).}
    \label{fig:n-25-classifier-app}
\end{figure}

In this example, the classifier learns to predict the correct labels for the narrow region in which the forget set samples are generated, indicated by the classifier probability (black curve) which follows the y-axis ticks on the right side of the plot. For this classifier, we then plot the approximated retain set density $\prhatT$ for $T \in \{1.0, 1.5, 2.0 \}$ in Figure \ref{fig:n-25-prhatT-app} below, visually depicting the tradeoffs associated with tempering.

\begin{figure}[h]
    \centering
    \begin{subfigure}{0.32\textwidth}
        \centering
        \includegraphics[width=\linewidth]{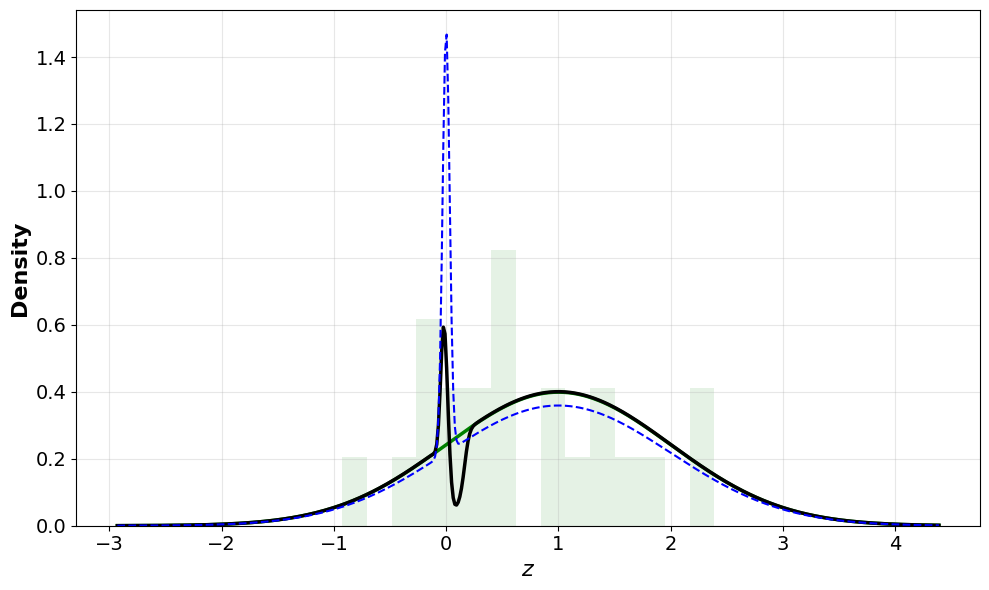}
        \caption{$T=1.0$}
    \end{subfigure}
    \hfill
    \begin{subfigure}{0.32\textwidth}
        \centering
        \includegraphics[width=\linewidth]{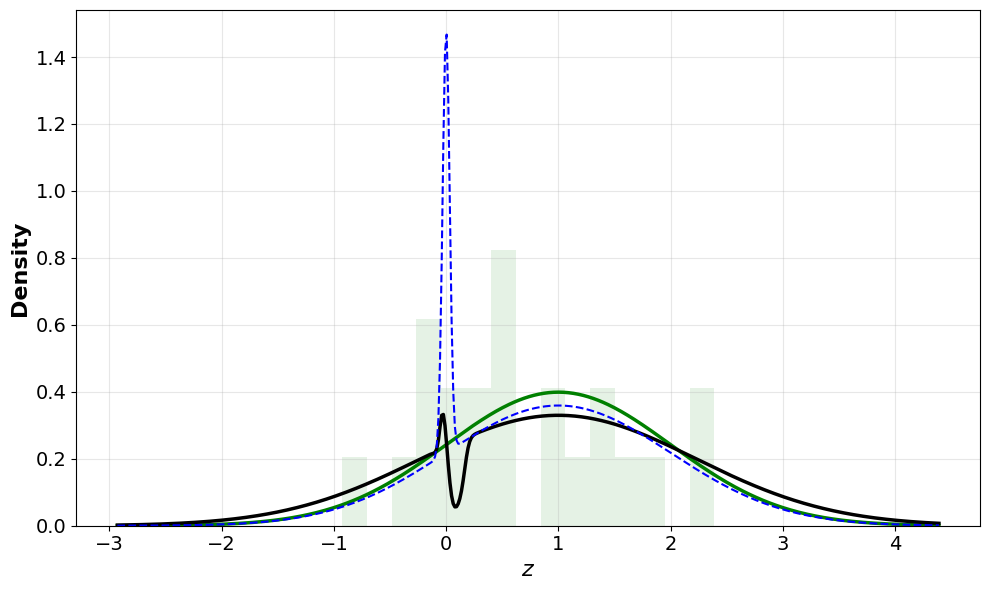}
        \caption{$T=1.5$}
    \end{subfigure}
    \hfill
    \begin{subfigure}{0.32\textwidth}
        \centering
        \includegraphics[width=\linewidth]{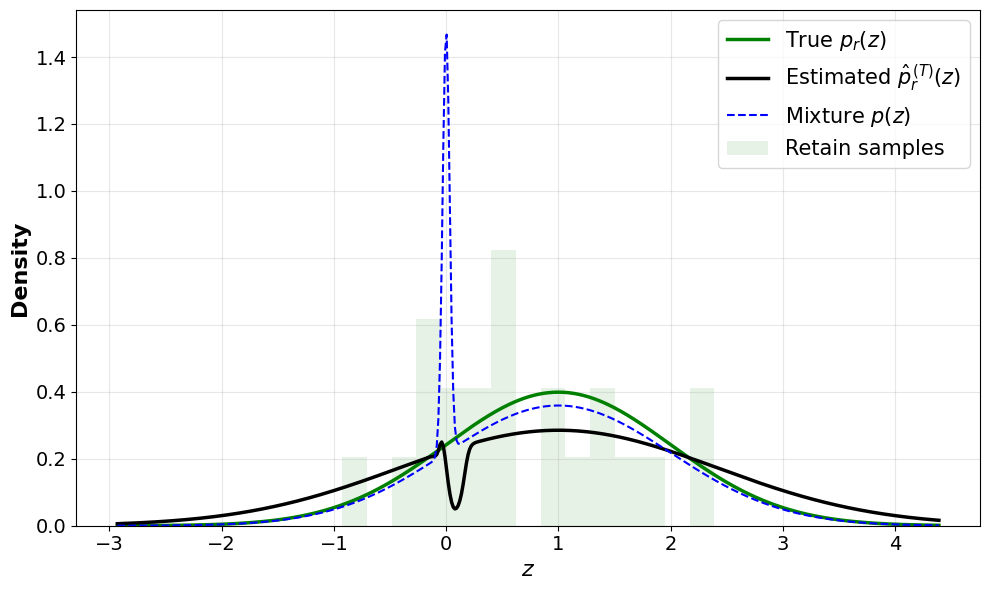}
        \caption{$T=2.0$}
    \end{subfigure}
    \caption{Estimated retain set densities $\prhatT$ in the setting of Experiment 2 for $n=25$ samples and temperature $T \in \{1.0, 1.5, 2.0\}$.}
    \label{fig:n-25-prhatT-app}
\end{figure}

We observe that as $T$ increases, the estimated density $\prhatT$ (black curve) becomes progressively less influenced by the forget set, with the spike at $z=0$ increasingly suppressed. The classifier learns a smooth prediction function that overestimates the region where samples are likely from the forget component, causing the estimated retain density to dip just beyond the spike. For larger $T$, this effect produces a more noticeable mismatch between $\prhatT$ and the true retain density $p_r$ in regions away from the forget mode, illustrating how larger temperatures can increase Retain Error.

\section{LLM Experiment Details}

\subsection{Baseline Methods}
\label{app:baseline-methods}

We formally define each of the baseline methods and the notation for their hyperparameters. Let $\z_r = (\q_r, \a_r) \in \Data_r$ and $\z_f = (\q_f, \a_f) \in \Data_f$ denote representative samples from the retain and forget sets, respectively. We use the notation $\q$,$\a$ since benchmarks like TOFU consist of question-answer pairs, although the methods apply to general text sequences.

We denote the unlearned model by $p_{\btheta}$, where $\btheta$ are the parameters optimized during unlearning, and the original pretrained parameters by $\btheta^\ast$. We denote by $\mathrm{StopGrad}[\cdot]$ the stop-gradient operator (equivalent to \texttt{.detach()} in automatic differentiation frameworks). Let $\Unif(\, \cdot \,)$ denote the uniform distribution over its argument. For distributions $p_1$ and $p_2$, the cross entropy is defined as
\begin{equation*}
\CE{p_1}{p_2} \coloneqq - \EV{p_1}{\ln p_2}.
\end{equation*}
We now write the loss function $\mathcal{L}(\btheta \, ; \, \z_r, \z_f)$ for each method.

\textbf{Gradient Ascent (GradAscent)} \citep{dorna:openunlearning:2025} maximizes the negative log-likelihood (NLL) loss on the forget set:
\begin{equation*}
    \mathcal{L}_{\mathrm{GradAscent}}(\btheta \, ; \, \z_r, \z_f) = \ln p_{\btheta}(\a_f \mid \q_f)
\end{equation*}

\textbf{Gradient Difference (GradDiff)} \citep{dorna:openunlearning:2025} minimizes the NLL on the retain set while maximizing the NLL on the forget set. For retain and forget loss weights $\alpha_r,\alpha_f \geq 0$:
\begin{equation*}
    \mathcal{L}_{\mathrm{GradDiff}}(\btheta \, ; \, \z_r, \z_f) = \alpha_f\ln p_{\btheta}(\a_f \mid \q_f) -\alpha_r \ln p_{\btheta}(\a_r \mid \q_r)
\end{equation*}

\textbf{Weighted Gradient Ascent (WGA)} \citep{wang:2025:wga} modulates the forget-set unlearning signal using a confidence-based weight $w_{\mathrm{WGA}}$ defined as
\begin{equation*}
    w_{\mathrm{WGA}}(\z_f) = \mathrm{StopGrad}\!\left[p_{\btheta}(\a_f \mid \q_f)^{\beta} \right].
\end{equation*}

The resulting objective then maximizes the weighted NLL on the forget set sample while minimizing the NLL on the retain set sample. For a retain and forget loss weight $\alpha_r,\alpha_f \geq 0$:
\begin{equation*}
    \mathcal{L}_{\mathrm{WGA}}(\btheta \, ; \, \z_r, \z_f)= \alpha_f \, w_{\mathrm{WGA}}(\z_f) \ln p_{\btheta}(\a_f \mid \q_f)-\alpha_r \ln p_{\btheta}(\a_r \mid \q_r).
\end{equation*}

\textbf{Saturation and Importance (SatImp)} \citep{yang:2025:satimp} performs a similar procedure to WGA, maximizing a weighted NLL over the forget set while minimizing the NLL over the retain set. The SatImp weight $w_{\mathrm{SatImp}}$ is defined as
\begin{equation*}
    w_{\mathrm{SatImp}}(\z_f)
    =
    \mathrm{StopGrad}\!\left[
        p_{\btheta}(\a_f \mid \q_f)^{\beta_1}
        \bigl(1 - p_{\btheta}(\a_f \mid \q_f)\bigr)^{\beta_2}
    \right],
\end{equation*}
where $\beta_1,\beta_2 \geq 0$ control the strength of the two weights. For retain and forget loss weights $\alpha_r,\alpha_f \geq 0$, the resulting objective is then
\begin{equation*}
    \mathcal{L}_{\mathrm{SatImp}}(\btheta \, ; \, \z_r, \z_f)= \alpha_f \, w_{\mathrm{SatImp}}(\z_f) \ln p_{\btheta}(\a_f \mid \q_f)-\alpha_r \ln p_{\btheta}(\a_r \mid \q_r).
\end{equation*}

\textbf{Unlearning via Self-Distillation on Adjusted Logits (UnDIAL)} \citep{dong:2025:undial} perturbs the original model's distribution and distills this into the unlearned model. Consider a single next-token $y_f$ which completes the context $\x_f$ from the forget set, meaning that the sequence $(\x_f, y_f)$ is a subsequence of the question-answer pair $(\q_f,\a_f)$. We denote the perturbed distribution conditioned on the forget set context $\x_f$ as $\tilde{p}_{\bthetas}(\cdot \mid \x_f)$, defined by
\begin{equation*}
    \ln \tilde{p}_{\bthetas}(y \mid \x_f) = \indic{y = y_f} \paren{\ln p_{\bthetas}(y \mid \x_f) - \beta} + \indic{y \neq y_f} \ln p_{\bthetas}(y \mid \x_f) + C(\x_f),
\end{equation*}
where $\beta \geq 0$ subtracts from the original model's logit assigned the true response and $C(\x_f)$ is a normalization constant. For retain and forget loss weights $\alpha_r,\alpha_f \geq 0$, the resulting UnDIAL objective is then
\begin{equation*}
    \mathcal{L}_{\mathrm{UnDIAL}}(\btheta \, ; \, \z_r, \z_f) = \alpha_f \CE{\tilde{p}_{\bthetas}}{p_{\btheta}} -\alpha_r \ln p_{\btheta}(\a_r \mid \q_r).
\end{equation*}

\textbf{Representation Misdirection for Unlearning (RMU)} \citep{li:2024:wmdp} performs unlearning by steering internal representations of forget set inputs away from their original values, while preserving representations on the retain set.

Fix a layer index $k$ and let $M_{\btheta}^k(\q) \in \Re^{d \times L}$ denote the hidden representations at layer $k$ of model $p_{\btheta}$ for input question $\q$, where $d$ is the hidden dimension and $L$ is the sequence length. RMU samples a random ``control vector" $\uvec \sim \Unif([0,1]^d)$ once at the start of unlearning and rescales it to have $\ell_2$ norm equal to some $c > 0$. Let $\1_L \in \Re^{L}$ denote the vector of all ones.

The RMU objective encourages forget set representations to align with this fixed random target, while constraining retain set representations to remain close to those of the original pretrained model. For a norm constraint $c > 0$ and loss weights $\alpha_r,\alpha_f \geq 0$, the resulting RMU objective is then
\begin{equation*}
    \mathcal{L}_{\mathrm{RMU}}(\btheta \, ; \, \z_r, \z_f)
    =
    \alpha_f\norm{
        M_{\btheta}^k(\q_f)
        -
        \frac{c}{\norm{\uvec}{2}} \uvec \1_L^\top
    }{F}^{2}
    +
    \alpha_r
    \norm{
        M_{\btheta}^k(\q_r)
        -
        M_{\bthetas}^k(\q_r)
    }{F}^{2}.
\end{equation*}

\textbf{Unlearning from Logit Difference (ULD)} \citep{ji:2024:logit-diff} introduces an auxiliary assistant LLM $f_{\bphi}(\q,\a)$ that outputs a distribution over next tokens, so $f_{\bphi}(\q,\a)$ assigns a probability to $\a \mid \q$. Formally, $f_{\bphi}(\x, \cdot) \in \mathcal{P}(\mathcal{V})$, where $\mathcal{P}(\mathcal{V})$ denotes the probability simplex over the vocabulary. This assistant is used to tilt the frozen base model distribution $p_{\bthetas}(\cdot \mid \q)$, yielding the updated model
\begin{equation*}
\hat{p}_{\bthetas \! ,\bphi}(\a \mid \q)
\propto
\frac{p_{\bthetas}(\a \mid \q)}{f_{\bphi}(\q,\a)^{T}},
\end{equation*}
where $T > 0$ controls the strength of the tilt. The base model $p_{\bthetas}$ remains fixed, and only the assistant $f_{\bphi}$ is trained. In ULD, $f_{\bphi}$ is implemented as a rank-$r$ LoRA \citep{hu:2022:lora} adaptation of the first $k$ layers of $p_{\bthetas}$, with the remaining layers discarded. For retain and forget loss weights $\alpha_r,\alpha_f \geq 0$, the assistant is trained using the ULD objective
\begin{equation*}
    \mathcal{L}_{\mathrm{ULD}}(\bphi \, ; \, \z_r, \z_f) = -\alpha_f\ln f_{\bphi}(\q_f,\a_f) + \alpha_r \, \KL{\Unif(\mathcal{V})}{f_{\bphi}(\q_r, \, \cdot)}.
\end{equation*}

\textbf{I Don't Know Direct Preference Optimization (IdkDPO)} \citep{maini:2024:tofu}  applies the DPO \citep{rafailov:2023:dpo} objective to the forget set samples. It predefines a set of responses like ``I don't know" or ``I can't answer that" which are used as the preferred label $\tilde{\a}_f$, while the original forget set ground truth $\a_f$ is used as the dispreferred label $\a_f$. The original model before unlearning $p_{\bthetas}$ is used as the reference model. Additionally, it minimizes the NLL loss on the retain set. For a regularization coefficient $\beta \geq 0$ and loss weights $\alpha_r,\alpha_f \geq 0$, the IdkDPO objective is then
\begin{equation*}
    \mathcal{L}_{\mathrm{IdkDPO}}(\btheta)
    =
    - \alpha_f \frac{2}{\beta} \ln \sigma \paren{
        \beta
        \paren{
            \ln \frac{p_{\btheta}(\tilde{\a}_f \mid \q_f)}{p_{\bthetas}(\tilde{\a}_f \mid \q_f)}
            -
            \ln \frac{p_{\btheta}(\a_f \mid \q_f)}{p_{\bthetas}(\a_f \mid \q_f)}
        }
    }
    -\alpha_r \ln p_{\btheta}(\a_r \mid \q_r).
\end{equation*}

\textbf{Negative Preference Optimization (NPO)} \citep{zhang:2024:npo} applies a DPO-like objective but without a preferred response. For a regularization coefficient $\beta \geq 0$ and loss weights $\alpha_r,\alpha_f \geq 0$, the NPO loss is defined as
\begin{equation*}
    \mathcal{L}_{\mathrm{NPO}}(\btheta)
    =
    - \alpha_f \frac{2}{\beta} \ln \sigma \paren{
        -\beta
        \ln \frac{p_{\btheta}(\a_f \mid \q_f)}{p_{\bthetas}(\a_f \mid \q_f)}
    }
    -\alpha_r \ln p_{\btheta}(\a_r \mid \q_r).
\end{equation*}

\textbf{Simple NPO (SimNPO)} \citep{fan:2025:simnpo} modifies the NPO objective by applying length normalization and removing the reliance on the reference model $p_{\bthetas}$. For a regularization coefficient $\beta \geq 0$, reward margin parameter $\Delta \geq 0$, and loss weights $\alpha_r,\alpha_f \geq 0$, the SimNPO objective is defined as
\begin{equation*}
    \mathcal{L}_{\mathrm{SimNPO}}(\btheta)
    =
    - \alpha_f \frac{2}{\beta} \ln \sigma \paren{
        -\frac{\beta}{\abs{\a_f}}
        \ln p_{\btheta}(\a_f \mid \q_f) - \Delta
    }
    -\alpha_r \ln p_{\btheta}(\a_r \mid \q_r).
\end{equation*}

\subsection{TOFU Metrics}
\label{app:tofu-metrics}

Recall that for a model $p_{\btheta}$ and a question-answer pair $(\q,\a)$, the Truth Ratio, denoted $\RTruth(p_{\btheta},\q)$, measures the ratio of the model's average probability on a set of incorrect ``perturbed" answers $\mathcal{A}_{\mathrm{pert}}$ to the paraphrased true answer $\a^\dagger$:
\begin{equation*}
    \RTruth(p_{\btheta},\q)
    =
    \frac{
        \frac{1}{\abs{\mathcal{A}_{\mathrm{pert}}}}
        \sum_{\atild \in \mathcal{A}_{\mathrm{pert}}}
        p_{\btheta}(\atild \mid \q)^{1/\abs{\atild}}
    }{
        p_{\btheta}(\a^\dagger \mid \q)^{1/\abs{\a^\dagger}}
    }.
\end{equation*}

Using the Truth Ratio, TOFU \citep{maini:2024:tofu} defines two summary metrics: \emph{Forget Quality} and \emph{Model Utility}. We define these metrics in detail.

\textbf{Forget Quality.} Forget Quality quantifies whether the unlearned model's behavior on the forget set is statistically indistinguishable from that of a gold-standard model retrained exclusively on retained data. Specifically, it is defined as the p-value of a two-sample Kolmogorov-Smirnov (KS) test comparing the distributions of Truth Ratios generated by the unlearned model versus the retrained model over the forget set questions.

Specifically, let $\btheta^\ast_r$ denote the ground truth retained model parameters. We then define two empirical distributions:
\begin{align*}
    \hat{R}_{\text{Truth}} &= \RTruth(p_{\btheta}, \q), \quad \q \sim \Data_f\\
    R^\ast_{\text{Truth}} &= \RTruth(p_{\btheta^\ast_r}, \q), \quad \q \sim \Data_f,
\end{align*}

The forget quality is computed as the p-value of the two-sample Kolmogorov–Smirnov test between $\hat{R}_{\text{Truth}}$ and $R^\ast_{\text{Truth}}$. The sufficient statistic of the KS-Test is the supremum absolute error between the CDFs of the two distributions $\hat{R}_{\text{Truth}}$ and $R^\ast_{\text{Truth}}$. Define the two respective CDFs as $\hat{F}$ and $F_r^\ast$. Then we compute $D_{\text{KS}}$ as 
\begin{equation*}
    D_{\text{KS}} = \sup_{t} \abs{\hat{F}(t) - F_r^\ast(t)}
\end{equation*}
Then using an approximation for the p-value, which is exactly expressed as a series, we have that the Forget Quality is approximately expressed as $2 \exp \paren{-\abs{\Data_f} \cdot D_{\text{KS}}^2}$.

A high p-value indicates that the unlearned model successfully mimics the output statistics of a model trained without $\Data_f$.

\textbf{Model Utility.} Model Utility is calculated as the harmonic mean of the \emph{Probability}, \emph{ROUGE}, and \emph{TR\texttt{+}} metrics across the retain, Real Author (RA), and World Facts (WF) datasets. For a model $p_{\btheta}$ and a question-answer pair $(\q,\a)$, we define these metrics in detail.
\begin{itemize}
    \item \textbf{Probability} computes the probability assigned to the answer given the question $p_{\btheta}(\a \mid \q)^{1/\abs{\a}}$, normalized by answer length. For the retain set it is computed as
    \begin{equation*}
        p_{\btheta}(\a \mid \q)^{1/\abs{\a}}.
    \end{equation*}
    For the multiple choice RA and WF datasets, the Probability metric is additionally normalized by sum of the length-normalized probabilities assigned to a set of incorrect answers $\mathcal{A}_{\mathrm{pert}}$, computed as
    \begin{equation*}
        \frac{p_{\btheta}(\a \mid \q)^{1/\abs{\a}}}{p_{\btheta}(\a \mid \q)^{1/\abs{\a}} + \sum_{\tilde{\a} \in \mathcal{A}_{\mathrm{pert}}} p_{\btheta}(\tilde{\a} \mid \q)^{1/{\abs{\tilde{\a}}}} + \epsilon},
    \end{equation*}
    where $\epsilon = 10^{-10}$ is added for numerical stability.
    
    \item \textbf{ROUGE} computes the ROUGE-L recall \cite{lin:2004:rouge} which measures the overlap between the model's generated response $\hat{\a} \sim p_{\btheta}(\,\cdot \mid \q)$ and the ground truth answer $\a$. Let $\mathrm{LCS}(\hat{\a},\a)$ denote the longest common subsequence between the two sequences after word stemming. ROUGE-L recall is then computed as the length of the longest common subsequence divided by the reference text length:
    \begin{equation*}
        \frac{\abs{\mathrm{LCS}(\hat{\a},\a)}}{\abs{\a}}.
    \end{equation*}
    In our experiments, responses are generated using greedy decoding. As a result, $\hat{\a}$ should be interpreted as a deterministic decode from $p_{\btheta}(\cdot \mid \q)$ rather than a formal sample from the model distribution.
    \item \textbf{TR\texttt{+}} computes the inverted, clipped Truth Ratio, where higher values indicate larger confidence in the correct answer:
    \begin{equation*}
        \text{TR\texttt{+}}(p_{\btheta},\q)
        =
        \max\{0,\, 1 - \RTruth(p_{\btheta},\q)\}.
    \end{equation*}
\end{itemize}

\subsection{TOFU Experiments}
\label{app:tofu-results-detailed}

We explain our experimental setup in detail. To evaluate each unlearning method, we perform a grid search over hyperparameter configurations using random seeds 1 and 2, selecting the configuration that achieves the best performance. Since we aim to maximize multiple objectives, we select the parameter configuration that maximizes the Forget Quality, since a method optimized for a model utility metric could simply return the original model without any unlearning.

We used the OpenUnlearning \citep{dorna:openunlearning:2025} implementation of TOFU and all baselines except for ULD which does not have an official OpenUnlearning implementation. For consistency, we implemented ULD within the OpenUnlearning codebase. We used the Llama 3.1 8B model. 

All experiments were conducted on a single NVIDIA GH200, using the AdamW optimizer \citep{loshchilov:2018:adamw} and a linear learning rate scheduler with both warmup and decay for all methods. For the parameter-efficient methods ULD and \alg, we use a batch size of 32 with no gradient accumulation and ZeRO stage~0 \citep{rajbhandari:2020:zero}. For all other methods, we use a batch size of 8 with gradient accumulation over 4 steps (effective batch size 32) and ZeRO stage~3.

\subsubsection{Hyperparameter Search}
We report the hyperparameter grids on chosen parameters used for each method on each of the 5\% and 10\% forget set splits of TOFU. We denote the learning rate as $\eta$, number of epochs $N$, number of warmup epochs $N_{\mathrm{warmup}}$, weight decay as $\lambda$, and the retain and forget loss weights as $\alpha_r$ and $\alpha_f$, respectively.

\begin{description}
    \item[\textbf{GradAscent}]\hfill
    
    \emph{Search space:}
    $\eta \in [10^{-7}, 10^{-5}]$, $N \in [10, 20]$, $N_{\mathrm{warmup}}=1$, $\lambda = 10^{-2}$.

    \emph{Chosen hyperparameters:}
    \begin{itemize}[leftmargin=1.5em, topsep=2pt, itemsep=0pt]
        \item 5\%: $\eta = 10^{-6}$, $N=10$
        \item 10\%: $\eta = 10^{-7}$, $N=10$
    \end{itemize}
    
    \item[\textbf{GradDiff}] \hfill

    \emph{Search space:}
    $\eta \in [10^{-5}, 10^{-4}]$, $N \in [10, 20]$, $\alpha_r \in [0.2, 1.0]$, $\alpha_f \in [0.0,0.8]$, $N_{\mathrm{warmup}}=1$, $\lambda = 10^{-2}$.
    
    \emph{Chosen hyperparameters:}
    \begin{itemize}[leftmargin=1.5em, topsep=2pt, itemsep=0pt]
        \item 5\%: $\eta=10^{-4}$, $N=20$, $\alpha_r=1.0$, $\alpha_f=0.0$
        \item 10\%: $\eta=10^{-4}$, $N=10$, $\alpha_r=0.8$, $\alpha_f=0.2$
    \end{itemize}
    
    \item[\textbf{WGA}] \hfill
    
    \emph{Search space:}
    $\eta \in [10^{-5}, 10^{-4}]$, $N \in [5, 10]$, $N_{\mathrm{warmup}}=1$, $\alpha_r \in [0.1, 3.0]$, $\alpha_f = 1$, $\beta \in [0.1, 5.0]$, $\lambda = 10^{-2}$.

    \emph{Chosen hyperparameters:}
    \begin{itemize}[leftmargin=1.5em, topsep=2pt, itemsep=0pt]
        \item 5\%: $\eta=10^{-4}$, $N=10$, $\beta=5.0$, $\alpha_r=3.0$
        \item 10\%: $\eta=10^{-5}$, $N=5$, $\beta=5.0$, $\alpha_r=0.1$
    \end{itemize}

    \item[\textbf{SatImp}] \hfill
    
    \emph{Search space:}
    $\eta \in [10^{-5}, 10^{-4}]$, $N \in [5, 10]$, $N_{\mathrm{warmup}}=1$, $\alpha_r \in [0.1, 3.0]$, $\alpha_f = 1$, $\beta_1, \beta_2 \in [0.1, 3.0]$, $\lambda = 10^{-2}$.

    \emph{Chosen hyperparameters:}
    \begin{itemize}[leftmargin=1.5em, topsep=2pt, itemsep=0pt]
        \item 5\%: $\eta=10^{-4}$, $N=10$, $\alpha_r=1.0$, $\beta_1=1.0$, $\beta_2=1.0$
        \item 10\%: $\eta=10^{-4}$, $N=10$, $\alpha_r=1.0$, $\beta_1=1.0$, $\beta_2=1.0$
    \end{itemize}

    \item[\textbf{UnDIAL}] \hfill
    
    \emph{Search space:}
    $\eta \in [10^{-5}, 10^{-4}]$, $N \in [5, 10]$, $N_{\mathrm{warmup}}=1$, $\alpha_r \in [0.1, 3.0]$, $\alpha_f = 1$, $\beta \in [2.0, 16.0]$, $\lambda = 10^{-2}$.

    \emph{Chosen hyperparameters:}
    \begin{itemize}[leftmargin=1.5em, topsep=2pt, itemsep=0pt]
        \item 5\%: $\eta=10^{-4}$, $N=5$, $\alpha_r=1.0$, $\beta=16.0$
        \item 10\%: $\eta=10^{-5}$, $N=10$, $\alpha_r=3.0$, $\beta=16.0$
    \end{itemize}

    \item[\textbf{RMU}] \hfill
    
    \emph{Search space:}
    $\eta \in [10^{-5}, 10^{-4}]$, $N \in [5, 10]$, $N_{\mathrm{warmup}}=1$, $\alpha_r \in [0.1, 3.0]$, $\alpha_f = 1$, control vector norm $c \in [1.0, 10.0]$, number of layers $k \in [8, 32]$ (note: Llama 3.1 8B has 32 layers), $\lambda = 10^{-2}$.

    \emph{Chosen hyperparameters:}
    \begin{itemize}[leftmargin=1.5em, topsep=2pt, itemsep=0pt]
        \item 5\%: $\eta=10^{-4}$, $N=10$, $\alpha_r=1.0$, $c=1.0$, $k=32$
        \item 10\%: $\eta=10^{-5}$, $N=10$, $\alpha_r=0.1$, $c=1.0$, $k=32$
    \end{itemize}

    \item[\textbf{ULD}] \hfill
    
    \emph{Search space:}
    $\eta \in [10^{-3}, 10^{-2}]$, $N \in [10, 20]$, $N_{\mathrm{warmup}}=1$, $\alpha_r \in [0.1, 5.0]$, $\alpha_f = 1$, LoRA rank $r \in [16, 32]$, applied to first $k \in [8, 32]$ layers, tilting strength $T \in [0.5, 2.0]$, $\lambda = 10^{-4}$.
    
    We perform LoRA without bias vectors, setting the dropout to $0.05$ and the ``alpha" scaling equal to the rank $r$. We additionally swept the logit filter proportion in $[0.01, 0.5]$ (masking low-probability tokens under the base distribution), but observed that applying no filter yielded the best performance.

    \emph{Chosen hyperparameters:}
    \begin{itemize}[leftmargin=1.5em, topsep=2pt, itemsep=0pt]
        \item 5\%: $\eta=10^{-3}$, $N=20$, $\alpha_r=0.1$, $r=32$, $k=16$, $T=1.0$
        \item 10\%: $\eta=10^{-3}$, $N=20$, $\alpha_r=0.1$, $r=32$, $k=16$, $T=1.0$
    \end{itemize}

    \item[\textbf{IdkDPO}] \hfill
    
    \emph{Search space:}
    $\eta \in [10^{-5}, 10^{-4}]$, $N \in [5, 10]$, $N_{\mathrm{warmup}}=1$, $\alpha_r \in [0.1, 3.0]$, $\alpha_f = 1$, $\beta \in [0.1, 3.0]$, $\lambda = 10^{-2}$.

    \emph{Chosen hyperparameters:}
    \begin{itemize}[leftmargin=1.5em, topsep=2pt, itemsep=0pt]
        \item 5\%: $\eta=10^{-4}$, $N=10$, $\alpha_r=3.0$, $\beta=0.1$
        \item 10\%: $\eta=10^{-4}$, $N=10$, $\alpha_r=0.1$, $\beta=0.1$
    \end{itemize}

    \item[\textbf{NPO}] \hfill
    
    \emph{Search space:}
    $\eta \in [10^{-5}, 10^{-4}]$, $N \in [10, 20]$, $N_{\mathrm{warmup}}=1$, $\alpha_r = 1$, $\alpha_f \in [0.5, 1.5]$, $\beta \in [0.05, 0.2]$, $\lambda = 10^{-2}$.

    \emph{Chosen hyperparameters:}
    \begin{itemize}[leftmargin=1.5em, topsep=2pt, itemsep=0pt]
        \item 5\%: $\eta=10^{-5}$, $N=20$, $\alpha_f=1.5$, $\beta=0.1$
        \item 10\%: $\eta=10^{-4}$, $N=10$, $\alpha_f=1.5$, $\beta=0.1$
    \end{itemize}

    \input{tofu_detailed_results}

    \item[\textbf{SimNPO}] \hfill
    
    \emph{Search space:}
    $\eta = 10^{-5}$, $N=10$, $N_{\mathrm{warmup}}=1$, $\alpha_r \in [0.05, 0.25]$, $\alpha_f \in [0.5, 1.5]$, regularization $\beta \in [1.5, 5.5]$, reward margin $\Delta \in [0, 2.0]$, $\lambda = 10^{-2}$.

    \emph{Chosen hyperparameters:}
    \begin{itemize}[leftmargin=1.5em, topsep=2pt, itemsep=0pt]
        \item 5\%: $\alpha_r=0.15$, $\alpha_f=0.5$, $\beta=3.5$, $\Delta=1.0$
        \item 10\%: $\alpha_r=0.25$, $\alpha_f=1.0$, $\beta=5.5$, $\Delta=1.0$
    \end{itemize}

    \item[\textbf{\alg}] \hfill
    
    \emph{Search space:}
    $\eta \in [10^{-4}, 10^{-3}]$, $N=100$, $N_{\mathrm{warmup}}=25$, classifier hidden dimension $h \in [15, 500]$, base model temperature $T \in [1, 2.75]$, $\lambda \in [10^{-4}, 2 \cdot 10^{-2}]$.

    \emph{Chosen hyperparameters:}
    \begin{itemize}[leftmargin=1.5em, topsep=2pt, itemsep=0pt]
        \item 5\%: $\eta=5 \cdot 10^{-4}$, $h=20$, $T=2.5$, $\lambda=10^{-3}$
        \item 10\%: $\eta=5 \cdot 10^{-4}$, $h=20$, $T=2.5$, $\lambda=10^{-3}$
    \end{itemize}
\end{description}

\subsubsection{Detailed Results}
\label{app:tofu-results-detailed-table}
We report results for each method across random seeds in Table~\ref{tab:tofu-detailed}, where Table~\ref{tab:tofu-combined} shows the corresponding averages. We observe that \alg has much more stable performance than other baselines, leading to better cumulative performance.

\input{tofu_temp_sensitivity}

\subsubsection{Temperature Sensitivity}
\label{app:tofu-temp-sensitivity}
We present an ablation study examining the effect of the base model temperature $T$ on the performance of \alg on the TOFU benchmark. For each temperature, we perform a grid search to select hyperparameters using seeds 1 and 2, and report mean results over five total seeds in Table~\ref{tab:temp-ablation}. Without tempering, corresponding to $T = 1$, \alg fails to achieve non-zero Forget Quality, while excessively large temperatures also degrade overall performance. These results highlight the central role of tempering in both the theoretical analysis and empirical behavior of our method.

\clearpage
\printbibliography

\end{document}

%% file: macros.tex
\DeclareMathOperator*{\argmax}{argmax}
\DeclareMathOperator*{\argmin}{argmin}

\newcommand{\Std}{\operatorname{Std}}
\newcommand{\Vol}{\operatorname{Vol}}

\newcommand{\norm}[2]{\xmath{\left\| #1 \right\|_{#2}}}

\newcommand{\1}{\blmath{1}} 

\long\def\red#1{\bgroup\color{red}#1\egroup}
\long\def\blue#1{\bgroup\color{blue}#1\egroup}

\newcommand{\abs}[1]{\left| #1 \right|}

\newcommand{\indic}[1]{\mathds{1}\{#1\}}
\renewcommand{\Re}{\mathbb{R}}

\newcommand{\shortminus}{\mathbin{\raisebox{0.1ex}{\scalebox{0.6}{$-$}}}}

\newcommand{\EV}[2]{\xmath{\mathbb{E}_{#1}\left[ #2 \right]}}
\newcommand{\EVBig}[2]{\xmath{\mathbb{E}_{#1}\biggl[ #2 \biggr]}}
\newcommand{\Prob}[2]{\xmath{\mathbb{P}_{#1} \paren{#2}}}

\newcommand{\supp}{\operatorname{supp}}
\newcommand{\Unif}{\mathrm{Unif}}

\newcommand{\xmath}[1] {\ensuremath{#1}\xspace}
\newcommand{\blmath}[1] {\xmath{\bm{#1}}}
\newcommand{\paren}[1]{\left( #1 \right)}

\newcommand{\highlight}[2]{%
  \begingroup
  \setlength{\fboxsep}{0pt}
  \colorbox{#1}{\strut\kern0.15em #2\kern0.15em}%
  \endgroup
}

\newcommand{\A}{\blmath{A}}

\newcommand{\B}{\blmath{B}}

\newcommand{\Data}{\mathcal{D}}

\renewcommand{\S}{\xmath{\mathcal{S}}}
\newcommand{\Sn}{\xmath{\mathcal{S}_n}}

\newcommand{\U}{\blmath{U}}

\newcommand{\X}{\blmath{X}}

\newcommand{\Z}{\blmath{Z}}
\renewcommand{\H}{\blmath{H}}

\newcommand{\x}{\blmath{x}}

\newcommand{\z}{\blmath{z}}

\newcommand{\g}{\blmath{g}}

\newcommand{\uvec}{\blmath{u}}

\renewcommand{\a}{\blmath{a}}

\newcommand{\atild}{\blmath{\tilde{a}}}

\newcommand{\q}{\blmath{q}}

\newcommand{\btheta}{\blmath{\theta}}
\newcommand{\bTheta}{\blmath{\Theta}}

\newcommand{\bphi}{\blmath{\phi}}
\newcommand{\bvarphi}{\blmath{\varphi}}
\newcommand{\bphihatlam}{\hat{\bphi}_\lambda}
\newcommand{\bphihatlamj}{\hat{\bphi}_\lambda^{(j)}}

\newcommand{\bphis}{\xmath{\bphi^{\ast}}}

\newcommand{\bPsi}{\blmath{\Psi}}

\newcommand{\Loss}[1]{L (#1)}
\newcommand{\LossParen}[1]{\xmath{L(#1)}}

\newcommand{\LossFin}[1]{L_n (#1)}
\newcommand{\LossFinReg}[1]{L_n^\lambda (#1)}

\newcommand{\bthetas}{\xmath{\btheta^\ast}}

\newcommand{\pring}{\xmath{\mathring{p}}}

\newcommand{\prhat}{\xmath{\hat{p}_r}}
\newcommand{\prhatT}{\xmath{\hat{p}_r^{\, (T)}}}
\newcommand{\prhatOne}{\xmath{\hat{p}_r^{\, (1)}}}
\newcommand{\pfhatOne}{\xmath{\hat{p}_f^{\, (1)}}}

\newcommand{\RTruth}{\xmath{R_{\text{truth}}}}

\newcommand{\KL}[2]{{\mathrm{KL}}\left(#1 \,\|\, #2\right)}

\newcommand{\CE}[2]{{\mathrm{CrossEnt}}\left(#1,#2\right)}

\newcommand{\alg}{T3-Unlearning\xspace}

\newcommand{\RetainErr}{\xmath{\mathcal{E}_r}}
\newcommand{\ForgetErr}{\xmath{\mathcal{E}_f}}

%% file: tofu_results.tex
\edef\savedAboveRuleSep{\the\aboverulesep}
\edef\savedBelowRuleSep{\the\belowrulesep}
\setlength{\aboverulesep}{1.2pt}
\setlength{\belowrulesep}{1.2pt}

\begin{table*}[t!]
\centering
\small
\caption{Average Forget Quality (FQ), MU-ROUGE, and Model Utility (MU) for each unlearning method on the TOFU benchmark using Llama 3.1 8B. Each row corresponds to a mean over 5 trials. Larger values are better for all metrics. The top row reports the performance of the original model before unlearning, while the bottom shaded row corresponds to our method \highlight{ourcolor}{\alg.}}
\vspace{-2mm}
\setlength{\tabcolsep}{5pt}
\renewcommand{\arraystretch}{1.2}
\resizebox{\linewidth}{!}{
\begin{tabular}{l|l|c|c|c|ccc|ccc|ccc}
\toprule
\multirow{6}{*}{\textbf{Split}} & \multirow{6}{*}{\textbf{Method}} & \multirow{6}{*}{\textbf{FQ}} & \multirow{6}{*}{\textbf{MU-ROUGE}} & \multirow{6}{*}{\textbf{MU}} & \multicolumn{9}{c}{\textbf{Utility Metrics}} \\
\cmidrule(lr){6-14}
 &  &  &  &  & \multicolumn{3}{c|}{\textbf{Retain}} & \multicolumn{3}{c|}{\textbf{WF}} & \multicolumn{3}{c}{\textbf{RA}} \\
\cmidrule(lr){6-8} \cmidrule(lr){9-11} \cmidrule(lr){12-14}
 &  &  &  &  & \textbf{Prob.} & \textbf{ROUGE} & \textbf{TR}\textbf{\texttt{+}} & \textbf{Prob.} & \textbf{ROUGE} & \textbf{TR}\textbf{\texttt{+}} & \textbf{Prob.} & \textbf{ROUGE} & \textbf{TR}\textbf{\texttt{+}} \\
\midrule

\multirow{1}{*}{-}
& Original Model & \cellcolor{gray!15} 0.000 & \cellcolor{gray!15} 0.948 & 0.637
  & 0.991 & 0.995 & 0.531
  & 0.479 & 0.905 & 0.625
  & 0.409 & 0.949 & 0.514 \\
\midrule


\multirow{5}{*}{\textit{5\%}} 
 & GradAscent & \cellcolor{gray!15} 0.000 & \cellcolor{gray!15} 0.875 & 0.576 & 0.925 &  0.852 & 0.525 & 0.420 &  0.888 & 0.543 & 0.352 &  0.885 & 0.463 \\
 & GradDiff & \cellcolor{gray!15} 0.003 & \cellcolor{gray!15} 0.281 & 0.363 & 0.536 &  0.529 & 0.452 & 0.359 &  0.499 & 0.477 & 0.359 &  0.153 & 0.480 \\
 & WGA & \cellcolor{gray!15} 0.252 & \cellcolor{gray!15} 0.424 & 0.418 & 0.405 & 0.436 & 0.408 & 0.374 & 0.617 & 0.502 & 0.371 & 0.327 & 0.478\\
 & SatImp & \cellcolor{gray!15} 0.445 & \cellcolor{gray!15} 0.419 & 0.412 & 0.407 & 0.436 & 0.413 & 0.368 & 0.612 & 0.489 & 0.351 & 0.313 & 0.465 \\
 & UnDIAL & \cellcolor{gray!15} 0.016 & \cellcolor{gray!15} 0.653 & 0.567 & 0.601 & 0.470 & 0.460 & 0.479 & 0.832 & 0.645 & 0.470 & 0.795 & 0.596\\
 & RMU & \cellcolor{gray!15} 0.532 & \cellcolor{gray!15} 0.000 & 0.000 & 0.000 & 0.038 & 0.275 & 0.185	& 0.005	& 0.449 & 0.202	& 0.000 & 0.516\\
 & ULD & \cellcolor{gray!15} 0.169 & \cellcolor{gray!15} 0.944 & 0.644 & 0.988 & 0.974 & 0.534 & 0.498 & 0.910 & 0.641 & 0.413	& 0.950 & 0.518\\
 & IdkDPO & \cellcolor{gray!15} 0.535 & \cellcolor{gray!15} 0.669 & 0.545 & 0.678 & 0.534 & 0.481 & 0.429 & 0.833 & 0.585 & 0.402 & 0.708 & 0.518\\
 & NPO & \cellcolor{gray!15} 0.793 & \cellcolor{gray!15} 0.713 & 0.631 & 0.779 &  0.590 & 0.507 & 0.547 &  0.846 & 0.719 & 0.504 &  0.751 & 0.638 \\
 & SimNPO & \cellcolor{gray!15} 0.745 & \cellcolor{gray!15} 0.786 & 0.653 & 0.830 &  0.668 & 0.508 & 0.549 &  0.868 & 0.720 & 0.505 &  0.856 & 0.634 \\
 \rowcolor{ourcolor}
 \cellcolor{white} & \alg (ours) & 0.914 & 0.900 & 0.612 & 0.415 &  0.983 & 0.461 & 0.545 &  0.874 & 0.688 & 0.511 &  0.853 & 0.648 \\
\midrule
\multirow{5}{*}{\textit{10\%}}
 & GradAscent & \cellcolor{gray!15} 0.000 & \cellcolor{gray!15} 0.950 & 0.638 & 0.991 &  0.995 & 0.531 & 0.481 &  0.910 & 0.630 & 0.409 &  0.949 & 0.512 \\
  & GradDiff & \cellcolor{gray!15} 0.065 & \cellcolor{gray!15} 0.000 & 0.000 & 0.018 & 0.035 & 0.364 & 0.239 & 0.001 & 0.335 & 0.216 & 0.000 & 0.359 \\
 & WGA & \cellcolor{gray!15} 0.020 & \cellcolor{gray!15} 0.834 & 0.629 & 0.873 & 0.748 & 0.502 & 0.511 & 0.898 & 0.691 & 0.430 & 0.875 & 0.545\\
 & SatImp & \cellcolor{gray!15} 0.219 & \cellcolor{gray!15} 0.301 & 0.375 & 0.524 & 0.531 & 0.444 & 0.368 & 0.517 & 0.494 & 0.353 & 0.165 & 0.462 \\
 & UnDIAL & \cellcolor{gray!15} 0.000 & \cellcolor{gray!15} 0.927 & 0.697 & 0.971 & 0.961 & 0.517 & 0.561 & 0.908 & 0.729 & 0.504 & 0.914 & 0.633\\
 & RMU & \cellcolor{gray!15} 0.001 & \cellcolor{gray!15} 0.000 & 0.000 & 0.000 & 0.072 & 0.299 & 0.229 & 0.023 & 0.588 & 0.298 & 0.000 & 0.477\\
 & ULD &  \cellcolor{gray!15} 0.012 &  \cellcolor{gray!15} 0.944 & 0.642 & 0.988 & 0.978 & 0.533 & 0.494 & 0.904 & 0.638 & 0.412 & 0.952 & 0.517\\
 & IdkDPO & \cellcolor{gray!15} 0.020 & \cellcolor{gray!15} 0.413 & 0.465 & 0.656 & 0.509 & 0.486 & 0.431 & 0.655 & 0.586 & 0.402 & 0.271 & 0.522 \\
 & NPO & \cellcolor{gray!15} 0.185 & \cellcolor{gray!15} 0.521 & 0.493 & 0.602 & 0.542 & 0.468 & 0.422 & 0.664 & 0.569 & 0.393 &  0.416 & 0.506 \\
 & SimNPO & \cellcolor{gray!15} 0.536 & \cellcolor{gray!15} 0.840 & 0.662 & 0.870 &  0.745 & 0.529 & 0.506 & 0.904 & 0.685 & 0.502 & 0.890 & 0.649 \\
 \rowcolor{ourcolor}
 \cellcolor{white} & \alg (ours) & 0.671 & 0.899 & 0.612 & 0.423 & 0.992 & 0.462 & 0.543 &  0.863 & 0.686 & 0.506 & 0.856 & 0.641 \\
\bottomrule
\end{tabular}
}
\label{tab:tofu-combined}
\end{table*}

\setlength{\aboverulesep}{\savedAboveRuleSep}
\setlength{\belowrulesep}{\savedBelowRuleSep}

%% file: tofu_detailed_results.tex
\edef\savedAboveRuleSep{\the\aboverulesep}
\edef\savedBelowRuleSep{\the\belowrulesep}
\setlength{\aboverulesep}{1.2pt}
\setlength{\belowrulesep}{1.2pt}

\begin{table}[t!]
\centering
\small
\caption{Forget Quality, MU-ROUGE, and Model Utility (MU) metrics across random seeds for each unlearning method on the TOFU benchmark using Llama 3.1 8B. Larger values indicate better performance for all metrics. The bottom shaded row corresponds to our method \highlight{ourcolor}{\alg \hspace{-1mm}}.}
\setlength{\tabcolsep}{5pt}
\renewcommand{\arraystretch}{1.2}
\resizebox{\linewidth}{!}{
\begin{tabular}{l|l|ccccc|ccccc|ccccc}
\toprule
\multirow{2}{*}{\textbf{Split}} & \multirow{2}{*}{\textbf{Method}} 
& \multicolumn{5}{c|}{\textbf{Forget Quality}} & \multicolumn{5}{c|}{\textbf{MU-ROUGE}} & \multicolumn{5}{c}{\textbf{Model Utility}} \\
\cmidrule(lr){3-7} \cmidrule(lr){8-12} \cmidrule(lr){13-17}
 &  & Seed 1 & Seed 2 & Seed 3 & Seed 4 & Seed 5 & Seed 1 & Seed 2 & Seed 3 & Seed 4 & Seed 5 & Seed 1 & Seed 2 & Seed 3 & Seed 4 & Seed 5 \\
\midrule
\multirow{5}{*}{5\%} 
 & GradAscent & 0.000 & 0.000 & 0.000 & 0.000 & 0.000 & 0.881 & 0.874 & 0.877 & 0.873 & 0.873 & 0.580 & 0.573 & 0.575 & 0.575 & 0.575 \\
 & GradDiff & 0.004 & 0.001 & 0.006 & 0.003 & 0.000 & 0.187 & 0.336 & 0.275
 & 0.277 & 0.328 & 0.300 & 0.401 & 0.356 & 0.367 & 0.393 \\
 & WGA & 0.178 & 0.221 & 0.142 & 0.394 & 0.328 & 0.396 & 0.465 & 0.373 & 0.376 & 0.509 & 0.411 & 0.428 & 0.389 & 0.402
 & 0.462 \\
 & SatImp & 0.394 & 0.545 & 0.394
 & 0.178 & 0.713 & 0.449 & 0.374 & 0.404 & 0.425 & 0.444 & 0.418 & 0.396 & 0.408 & 0.423 & 0.416\\
 & UnDIAL & 0.040 & 0.004 & 0.030 & 0.002 & 0.004 & 0.638 & 0.678 & 0.639 & 0.645 & 0.668 & 0.566 & 0.574 & 0.551 & 0.561 & 0.581\\
 & RMU & 0.545 & 0.628 & 0.545 & 0.545 & 0.394 & 0.000 & 0.000 & 0.000 & 0.000 & 0.000 & 0.000 & 0.000 & 0.000 & 0.000 & 0.000 \\
 & ULD & 0.016 & 0.221 & 0.270 & 0.068 & 0.270 & 0.948 & 0.944 & 0.939 & 0.946 & 0.944 & 0.648 & 0.642 & 0.643 & 0.644 & 0.642 \\
 & IdkDPO & 0.713 & 0.270 & 0.545 & 0.221 & 0.924 & 0.693 & 0.660 & 0.670 & 0.635 & 0.685 & 0.552 & 0.552 & 0.548 & 0.532 & 0.542 \\
 & NPO & 0.713 & 0.866 & 0.793 & 0.965 & 0.628 & 0.699 & 0.762 & 0.738 & 0.692 & 0.663 & 0.626 & 0.672 & 0.663 & 0.587 & 0.606 \\
 & SimNPO & 0.628 & 0.793 & 0.394 & 0.988 & 0.924 & 0.778 & 0.818 & 0.793 & 0.742 & 0.798 & 0.649 & 0.650 & 0.689 & 0.617 & 0.662 \\
 \rowcolor{ourcolor}
 & \alg & 0.924 & 0.924 & 0.965 & 0.793 & 0.965 & 0.890 & 0.907 & 0.896 & 0.903 & 0.903 & 0.612 & 0.611 & 0.606 & 0.618 & 0.614 \\
\midrule
\multirow{5}{*}{10\%} 
 & GradAscent & 0.000 & 0.000 & 0.000 & 0.000 & 0.000 & 0.951 & 0.948 & 0.951 & 0.951 & 0.951 & 0.637 & 0.638 & 0.638 & 0.638 & 0.638 \\
 & GradDiff & 0.322 & 0.000 & 0.000 & 0.000 & 0.002 & 0.000 & 0.000 & 0.000 & 0.000 & 0.000 & 0.000 & 0.000 & 0.000 & 0.000 & 0.000 \\
 & WGA & 0.045 & 0.054 & 0.000 & 0.000 & 0.000 & 0.839 & 0.828 & 0.825 & 0.840 & 0.842 & 0.656 & 0.618 & 0.624 & 0.625 & 0.623 \\
 & SatImp & 0.367 & 0.281 & 0.281 & 0.037 & 0.131 & 0.317 & 0.282 & 0.373 & 0.223 & 0.312 & 0.391 & 0.363 & 0.417 & 0.324 & 0.381 \\
 & UnDIAL & 0.000 & 0.000 & 0.000 & 0.000 & 0.000 & 0.919 & 0.929 & 0.934 & 0.928 & 0.927 & 0.691 & 0.703 & 0.699 & 0.695 & 0.696 \\
 & RMU & 0.000 & 0.001 & 0.000 & 0.004 & 0.002 & 0.000 & 0.000 & 0.000 & 0.000 & 0.000 & 0.000 & 0.000 & 0.000 & 0.000 & 0.000 \\
 & ULD & 0.004 & 0.001 & 0.000 & 0.000 & 0.054 & 0.938 & 0.942 & 0.951 & 0.948 & 0.939 & 0.642 & 0.641 & 0.642 & 0.642 & 0.641 \\
 & IdkDPO & 0.005 & 0.030 & 0.024 & 0.024 & 0.016 & 0.433 & 0.493 & 0.383 & 0.349 & 0.406 & 0.464 & 0.501 & 0.450 & 0.446 & 0.463 \\
 & NPO & 0.094 & 0.367 & 0.322 & 0.078 & 0.065 & 0.550 & 0.482 & 0.535 & 0.500 & 0.527 & 0.499 & 0.488 & 0.506 & 0.489 & 0.486 \\
 & SimNPO & 0.967 & 0.003 & 0.641 & 0.700 & 0.367 & 0.856 & 0.830 & 0.846 & 0.822 & 0.847 & 0.671 & 0.643 & 0.662 & 0.668 & 0.668 \\
 \rowcolor{ourcolor}
 & \alg & 0.758 & 0.758 & 0.758 & 0.322 & 0.758 & 0.890 & 0.900 & 0.897 & 0.896 & 0.912 & 0.612 & 0.608 & 0.605 & 0.620 & 0.616 \\
\bottomrule
\end{tabular}
}
\label{tab:tofu-detailed}
\end{table}

\setlength{\aboverulesep}{\savedAboveRuleSep}
\setlength{\belowrulesep}{\savedBelowRuleSep}

%% file: tofu_temp_sensitivity.tex
\edef\savedAboveRuleSep{\the\aboverulesep}
\edef\savedBelowRuleSep{\the\belowrulesep}
\setlength{\aboverulesep}{1.2pt}
\setlength{\belowrulesep}{1.2pt}

\begin{table}[h!]
\centering
\small
\caption{\alg performance on the TOFU dataset across the 5\% and 10\% forget set splits for different temperatures $T$. Values are averaged over five seeds, where larger is better for all metrics. The \highlight{ourcolor}{$T = 2.5$} rows denote the temperature achieving the best Forget Quality and correspond to the results reported in Table~\ref{tab:tofu-combined}.}
\setlength{\tabcolsep}{7pt}
\renewcommand{\arraystretch}{1.1}
\begin{tabular}{c|c|ccc}
\toprule
\textbf{Split} & \textbf{Temperature $T$}
& \textbf{Forget Quality} & \textbf{MU-ROUGE} & \textbf{Model Utility} \\
\midrule

& $1.0$ & 0.000 & 0.936 & 0.641 \\
& $1.5$ & 0.008 & 0.934 & 0.626 \\
& $2.0$ & 0.586 & 0.905 & 0.644 \\
\rowcolor{ourcolor}
\cellcolor{white}\multirow{-3}{*}{\textit{5\%}}  & $2.5$ & 0.914 & 0.900 & 0.612 \\
& $3.0$ & 0.458 & 0.902 & 0.457 \\

\midrule

& $1.0$ & 0.000 & 0.942 & 0.642 \\
& $1.5$ & 0.001 & 0.931 & 0.624 \\
& $2.0$ & 0.191 & 0.893 & 0.646 \\
\rowcolor{ourcolor}
\cellcolor{white}\multirow{-3}{*}{\textit{10\%}}
& $2.5$ & 0.671 & 0.899 & 0.612 \\
& $3.0$ & 0.576 & 0.910 & 0.468 \\

\bottomrule
\end{tabular}
\label{tab:temp-ablation}
\end{table}

\setlength{\aboverulesep}{\savedAboveRuleSep}
\setlength{\belowrulesep}{\savedBelowRuleSep}